\DeclareMathOperator*{\argmax}{arg\,max}
\DeclareMathOperator*{\argmin}{arg\,min}
\newcommand{\Acal}{\mathcal{A}}
\newcommand{\Kcal}{\mathcal{K}}\newcommand{\Mcal}{\mathcal{M}}
\newcommand{\Tcal}{\mathcal{T}}
\newcommand{\R}{\mathbb{R}}
\newcommand{\pa}{\mathrm{\mathbf{Pa}}}
\newcommand{\ch}{\mathrm{\mathbf{Ch}}}
\newcommand{\JS}{\operatorname{JS}}
\newcommand{\Ccal}{\mathcal{C}}  % context space
\newcommand{\Scal}{\mathcal{S}}  % statement universe
\newcommand{\Rcal}{\mathcal{R}}
\newcommand{\Pcal}{\mathcal{P}}
\newcommand{\Sim}{\operatorname{sim}}             % cosine similarity
\newcommand{\Ical}{\mathcal{I}}
\newcommand{\Ycal}{\mathcal{Y}}
\newcommand{\Ncal}{\mathcal{N}}
\newcommand{\framework}{\textsc{ARGORA}\xspace}
\theoremstyle{plain}
\newtheorem{theorem}{Theorem}[section]
\newtheorem{proposition}[theorem]{Proposition}
\newtheorem{lemma}[theorem]{Lemma}
\theoremstyle{definition}
\newtheorem{definition}[theorem]{Definition}
\newtheorem{assumption}[theorem]{Assumption}
\theoremstyle{remark}
\newtheorem{remark}[theorem]{Remark}
\icmltitlerunning{\framework: Orchestrated Argumentation for Causally Grounded LLM Reasoning and Decision Making}
\begin{document}

\twocolumn[
  %\icmltitle{\framework: Orchestrated Causal Argumentation for Multi-LLM Reasoning}
  \icmltitle{\framework: Orchestrated Argumentation for \\ Causally Grounded LLM Reasoning and Decision Making}
  % It is OKAY to include author information, even for blind submissions: the
  % style file will automatically remove it for you unless you've provided
  % the [accepted] option to the icml2026 package.

  % List of affiliations: The first argument should be a (short) identifier you
  % will use later to specify author affiliations Academic affiliations
  % should list Department, University, City, Region, Country Industry
  % affiliations should list Company, City, Region, Country

  % You can specify symbols, otherwise they are numbered in order. Ideally, you
  % should not use this facility. Affiliations will be numbered in order of
  % appearance and this is the preferred way.
  \icmlsetsymbol{equal}{*}

  \begin{icmlauthorlist}
    \icmlauthor{Youngjin Jin}{kaist}
    \icmlauthor{Hanna Kim}{kaist}
    \icmlauthor{Kwanwoo Kim}{kaist}
    \icmlauthor{Chanhee Lee}{s2w}
    \icmlauthor{Seungwon Shin}{kaist}
  \end{icmlauthorlist}

  \icmlaffiliation{kaist}{School of EE, KAIST, Daejeon, South Korea}
  \icmlaffiliation{s2w}{S2W Inc., Pangyo, South Korea}

  \icmlcorrespondingauthor{Youngjin Jin}{ijinjin@kaist.ac.kr}
  \icmlcorrespondingauthor{Seungwon Shin}{claude@kaist.ac.kr}

  % You may provide any keywords that you find helpful for describing your
  % paper; these are used to populate the "keywords" metadata in the PDF but
  % will not be shown in the document
  \icmlkeywords{Machine Learning, ICML}

  \vskip 0.3in
]

% this must go after the closing bracket ] following \twocolumn[ ...

% This command actually creates the footnote in the first column listing the
% affiliations and the copyright notice. The command takes one argument, which
% is text to display at the start of the footnote. The \icmlEqualContribution
% command is standard text for equal contribution. Remove it (just {}) if you
% do not need this facility.

% Use ONE of the following lines. DO NOT remove the command.
% If you have no special notice, KEEP empty braces:
\printAffiliationsAndNotice{}  % no special notice (required even if empty)
% Or, if applicable, use the standard equal contribution text:
% \printAffiliationsAndNotice{\icmlEqualContribution}

\begin{abstract}
Existing multi-expert LLM systems gather diverse perspectives but combine them through simple aggregation, obscuring which arguments drove the final decision. We introduce \textbf{ARGORA}, a framework that organizes multi-expert discussions into explicit argumentation graphs showing which arguments support or attack each other. By casting these graphs as causal models, \framework can systematically remove individual arguments and recompute outcomes, identifying which reasoning chains were necessary and whether decisions would change under targeted modifications. We further introduce a correction mechanism that aligns internal reasoning with external judgments when they disagree. Across diverse benchmarks and an open-ended use case, \framework achieves competitive accuracy and demonstrates corrective behavior: when experts initially disagree, the framework resolves disputes toward correct answers more often than it introduces new errors, while providing causal diagnostics of decisive arguments.
\end{abstract}
\section{Introduction}
\label{sec:intro}

Large Language Models (LLMs) have improved at multi-step reasoning, with prompting methods such as Chain-of-Thought (CoT) generating steps that can boost performance on difficult tasks \citep{NEURIPS2022_9d560961}. Yet even strong LLMs remain prone to confident but incorrect reasoning and hallucinated statements, motivating a growing line of work on verification and self-checking protocols \citep{hallucination_survey, dhuliawala-etal-2024-chain}. In high-stakes settings, plausibility is not enough: we need robust decision procedures and mechanisms that attribute decisions to specific parts of the reasoning chain.

A promising direction is to replace single-model prompting with \emph{multi-LLM debates}: multiple model instances or agents can generate diverse viewpoints, critique one another, and aggregate conclusions, often improving reliability \citep{10.5555/3692070.3692537, liang-etal-2024-encouraging, long-etal-2024-multi-expert}. However, most multi-agent protocols only expose transient natural-language exchanges and make decisions via a judge model or a simple aggregation rule. This makes it difficult to maintain a formally interpretable representation of the discussion's logical structure and dependencies, or to diagnose which reasoning chains drove the final outcome.

In parallel, recent work has begun integrating formal argumentation with LLMs by building explicit argument graphs, where statements support or attack one another and are evaluated using \textit{quantitative semantics}, a formal method that maps statements to real-valued argument strengths~\citep{freedman2025argumentative,pmlr-v284-zhu25a,gorur2025retrieval}. This provides a clearer structure than raw dialogue. However, existing approaches often use the argumentation layer mainly as a \emph{scoring interface} for decision making (especially in verification-centric settings where the claim is fixed in advance). While the graph can be inspected, it does not directly answer diagnostic questions such as: \emph{Which} reasoning chains were necessary for the conclusion? \emph{Which} claims are most responsible for the outcome? And \emph{how} would the decision change under targeted structural corrections?

To address these limitations, we introduce \textbf{ARGORA}, an \emph{orchestrated argumentation-driven discussion} framework for decision making with LLM experts. \framework structures multi-expert exchanges into a family of quantitative bipolar argumentation frameworks (QBAFs). We further interpret QBAF evaluation as a deterministic structural causal model (SCM). Using counterfactual interventions, we quantify the causal necessity of specific reasoning chains for the resulting consensus. Finally, building on this causal view, we introduce an observation-aligned counterfactual override policy that selectively reconciles internal argumentative conclusions with an auxiliary external judgment (\textit{LLM-as-a-judge}) when the two disagree.

\section{Related Works}
\label{sec:related-works}

\paragraph{Multi-LLM Discussion and Debate Frameworks} Multi-expert prompting frameworks coordinate several LLMs in a debate setting. \citet{long-etal-2024-multi-expert} simulated multiple expert personas, while \citet{li-etal-2024-simulating} utilize domain-specific experts until consensus or convergence. \citet{10.5555/3692070.3692537} introduce an iterative proposal-and-debate process, \citet{liang-etal-2024-encouraging} develop Multi-Agent Debate (MAD) with debaters and a judge, and \citet{chen-etal-2024-reconcile} propose RECONCILE with confidence-weighted voting. While these can improve performance and reliability, they lack explicit formal representations of the debate process and do not support targeted interventions on specific arguments.

\paragraph{Argumentation Frameworks with LLMs} Recent work combines formal Argumentation Frameworks (AFs) with LLMs by constructing explicit argument graphs with symbolic semantics. \citet{freedman2025argumentative} (ArgLLM) prompt LLMs to generate arguments organized into Quantitative Bipolar Argumentation Frameworks (QBAFs) under gradual semantics for claim verification~\citep{BaroniRagoToni2019Spectrum,Kampik2024Contribution}. Extensions include retrieval-augmented and multi-agent settings: ArgRAG builds QBAFs over retrieved evidence for fact verification, and recent forecasting work combines QBAFs from multiple agents to aggregate evidence for explainable judgmental forecasting \citep{pmlr-v284-zhu25a,gorur2025retrieval}. While these architectures automate argument generation and provide interpretable reasoning representations, they focus on verifying a single, pre-specified claim rather than generating and adjudicating between multiple competing candidate explanations. In contrast, \framework integrates argument generation, multi-expert discussion, and selection across competing candidates into a single end-to-end pipeline.

\paragraph{Causal Explanations and XAI with LLMs} Explainable AI (XAI) aims to clarify model decision factors, and recent frameworks treat causal structure explicitly. \citet{dalal-etal-2024-inference} propose IBE-Eval, using causal reasoning principles to score and select among competing LLM-generated explanations. \citet{chu2024causalguardrails} use causal analysis to mitigate confounding bias effects, enabling the extraction of unbiased steering representations for controllable generation. While IBE-Eval treats causality as an external evaluation criterion and LLMGuardrail embeds causal interventions in the generation process, both apply causal methods to single-model outputs. In contrast, \framework makes the causal structure intrinsic to multi-expert argumentation, where the final decision is computed from argument interactions, and counterfactual interventions enable diagnosis of how specific reasoning chains contribute to consensus.

Appendix~\ref{sec:extended-related-works} provides additional analyses for motivating the \framework framework.
\section{Preliminaries}
\label{sec:prelim}

\framework builds on quantitative bipolar argumentation frameworks (QBAFs)~\cite{BaroniRagoToni18} and a
causal interpretation of their evaluation. To preserve space in the main paper,
we only define the minimal requirements.
Appendix~\ref{app:prelim} collects full formal preliminaries and conventions used in this work.

\subsection{QBAFs and modular evaluation}
\label{sec:prelim-qbaf}

A QBAF is a tuple
$Q=\langle \Acal, R^{-}, R^{+}, w\rangle$ with a set of arguments $\Acal$, relations
$R^{-},R^{+}\subseteq \Acal\times\Acal$, and base scores $w:\Acal\to [0,1]$.
We write $R:=R^{-}\cup R^{+}$. We treat \emph{arguments} as nodes connected by \emph{support} and \emph{attack} edges. In \framework, the graph structure that we use for QBAF is in the form of a rooted directed tree with edges
oriented \emph{child$\to$parent}.

\begin{definition}
\label{def:parent-children}
Let $Q=\langle \Acal, R^{-}, R^{+}, w\rangle$ be a QBAF and write $R:=R^{-}\cup R^{+}$.
Assume $(\Acal,R)$ is a finite directed tree with a unique root and that every
non-root node has a unique parent. All edges are oriented from child to parent.
For any non-root $a\in\Acal$, its unique parent is the node $\pa(a)$ such that
$(a\!\to\!\pa(a))\in R$. For any $a\in\Acal$, its children are
\[
  \ch(a)\;:=\;\{\,c\in\Acal : (c\!\to\!a)\in R\,\}.
\]
\end{definition}

For any edge $(c\!\to\!a)\in R$, define its \emph{polarity sign}
\[
  \epsilon(c,a)\;:=\;
  \begin{cases}
    +1, & (c\!\to\!a)\in R^{+},\\
    -1, & (c\!\to\!a)\in R^{-}.
  \end{cases}
\]
If $\ch(a)=\{c_1,\dots,c_n\}$ is any fixed ordering, define the polarity vector
$\pi(a):=(\epsilon(c_1,a),\dots,\epsilon(c_n,a))\in\{-1,+1\}^n$.

A quantitative semantics assigns \emph{final strengths}
$\sigma:\Acal\to[0,1]$. We use \emph{modular semantics} (aggregation--influence)
characterized by a single node-local update rule.

\begin{definition}[Modular semantics]
\label{def:modular-semantics}
A \emph{modular semantics} is specified by a pair $(\alpha,\iota)$, consisting of
(i) aggregation maps $\alpha_{\pi}:[0,1]^n\to\R$ indexed by polarity vectors
$\pi\in\{-1,+1\}^n$ and (ii) influence maps $\iota_{b}:\R\to[0,1]$ indexed by base
scores (strengths) $b\in[0,1]$, such that the strength function $\sigma$ satisfies, for every
$a\in\Acal$ with $\ch(a)=\{c_1,\dots,c_n\}$ and polarity vector $\pi(a)$,
\begin{equation}
\label{eq:modular-update}
  \sigma(a)
  \;=\;
  \iota_{w(a)}\!\left(
    \alpha_{\pi(a)}\bigl(\sigma(c_1),\dots,\sigma(c_n)\bigr)
  \right).
\end{equation}
For leaves ($n=0$), we assume normalization so that $\sigma(a)=w(a)$.
\end{definition}

\subsection{Counterfactuals via SCM casting}
\label{sec:prelim-scm}

Our counterfactual explanations ask how the final outcome changes under
\emph{edge-local edits} to the argumentation tree (e.g., removing a single support/attack edge).
We formalize these edits by casting the node-local update equations
(Eq.~\eqref{eq:modular-update}) as a deterministic structural causal model (SCM),
so that an edge deletion corresponds to a counterfactual intervention. Formal SCM definitions and conventions are given in
Appendix~\ref{app:prelim-scm}.

\section{\framework}
\label{sec:argora}

At a high level, given a user-specified topic $t$, \framework\ orchestrates a structured discussion among multiple \textit{expert} LLM instances under the coordination of a central \textit{Orchestrator}.
The resulting expert interactions are compiled into quantitative bipolar argumentation frameworks (QBAFs), which are evaluated under a chosen quantitative semantics to produce an initial argumentative consensus.
To support explainability and causal diagnosis beyond aggregate scores, the induced QBAFs are cast as deterministic structural causal models (SCMs), enabling counterfactual explanation via edge-local interventions.
In parallel, \framework\ computes an external observational consensus using the Orchestrator in an \textit{LLM-as-a-judge} role, which serves as an external override signal when the argumentative decision is structurally unstable or misaligned. The final output consists not only of a consensus decision, but also causal explanations and diagnostic artifacts characterizing the robustness of that decision.
A detailed illustrative overview of the full pipeline is provided in Appendix~\ref{app:overview-figure}.

\subsection{Expert Discussion and QBAF Generation}
\label{subsec:discussion-qbafs}

For the discussion phase, each expert outputs a main argument $m$ (its direct answer on topic $t$) and supplementary arguments that support or attack within-round reasoning. For each main argument $m$, \framework constructs a bounded-depth rooted-tree QBAF with root $m$. During this phase, the Orchestrator assigns each argument node $a$ a base score $w(a)\in[0,1]$ (Def.~\ref{def:modular-semantics}). Once QBAFs have been constructed for all main arguments, \framework evaluates all QBAFs under the chosen quantitative semantics and performs counterfactual analysis on the induced SCMs. Full construction details are deferred to Appendix~\ref{app:arg-building}.

\subsection{Redundancy Control via Contextual Orthogonality}
\label{subsec:pruning-sim}

Multi-expert discussions frequently produce paraphrases and near-duplicate statements. If inserted naively, these redundant argument nodes can (i) rapidly inflate the
effective context length for both the experts and the orchestrator, and (ii) reduce the efficacy of the argumentation framework---often causing strength
saturation under modular semantics and diminishing the informativeness of counterfactual explanation queries. To control this redundancy, we introduce a \emph{contextual orthogonality} criterion during graph expansion: a candidate argument is retained only if its statement text is sufficiently distinct from the discussion context already represented in the graph.

Concretely, at depth level $\ell$, we consider a candidate set
$\Acal_{\text{cand}}^{(\ell)}$ and compute embedding similarity between statement
texts using cosine similarity
$\Sim(u,v)=\cos(\phi(u),\phi(v))$ with a sentence encoder $\phi$.
We apply a two-stage filter with threshold $\rho_{\text{sim}}$.
Stage~1 (parent-level orthogonality) prunes candidates whose maximum similarity to existing nodes at the parent level $\Acal^{(\ell-1)}$ exceeds $\rho_{\text{sim}}$,
while Stage~2 (sibling orthogonality) greedily selects from the remaining candidates to avoid retaining multiple similar arguments attached to the same parent. If Stage~1 rejects all candidates, we retain a single fallback candidate with the smallest parent-level similarity to avoid an empty set of nodes. We provide additional details in Appendix~\ref{app:contextual-orthogonality-pruning}.

\subsection{Causally Grounded Explainability through
Counterfactual Interventions}
\label{subsec:causal-cf}

Argumentative multi-LLM systems improve accuracy on a range of
tasks~\citep{Kenton2024WeakJudgingStrong,Liu2024MultiLLMDebate,Chen2024JudgementBias},
but leave open the question of \emph{why} a particular
chain of reasons prevailed. Outcomes often rely on judging procedures
such as majority voting which limits interpretability, and
most LLM explainability work focuses on token- or feature-level
attributions rather than structured causal reasoning over argument
graphs~\citep{Zhao2024LLMExplainabilitySurvey}.
To address this gap, \framework equips its QBAF layer with a causally
grounded explanation mechanism based on SCMs and edge-local interventions.

\begin{proposition}
\label{prop:eval-unique-scm}
Let $Q_m=\langle \Acal_m,R_m^{-},R_m^{+},w_m\rangle$ be a QBAF (for a main
argument $m$) and fix a modular semantics $(\alpha,\iota)$
(Def.~\ref{def:modular-semantics}). Let $\sigma:\Acal_m\to [0,1]$ be the
(strength) function satisfying the corresponding modular update equations on
$Q_m$, and define the evaluation system $\mathcal{E}_m:=\langle Q_m,\sigma\rangle$,
with evaluated root strength $\sigma(m)$.

Assume the structural conditions in App.~\ref{app:scm-casting}. Then $\sigma$
is well defined and unique, and $\mathcal{E}_m$ induces a deterministic SCM
$\mathcal{M}_m=(\mathbf{V},\mathbf{U},\mathcal{F})$ with
$\mathbf{V}=\{\,v_a \mid a\in\Acal_m\,\}$, $\mathbf{U}=\{\,u_a \mid a\in\Acal_m\,\}$,
and exogenous instantiations $u_a:=w_m(a)$ for all $a\in\Acal_m$.
For each $a\in\Acal_m$, let $\ch(a)=\{c_1,\dots,c_n\}$ and let
$\pi(a)\in\{-1,+1\}^n$ be the corresponding polarity vector
(Def.~\ref{def:modular-semantics}). The structural assignments are, for every
$v_a\in\mathbf{V}$,
\begin{equation}
\nonumber
\label{eq:scm-structural-assignments}
  v_a
  \;=\;
  f_{v_a}\!\bigl(u_a,\pa(v_a)\bigr)
  \;:=\;
  \iota_{u_a}\!\Bigl(
    \alpha_{\pi(a)}(v_{c_1},\dots,v_{c_n})
  \Bigr),
\end{equation}
where $\pa(v_a)=\{\,v_c\in\mathbf{V}\mid c\in\ch(a)\,\}$. The proof of this is given in App.~\ref{app:proof-prop-eval-unique-scm}.
\end{proposition}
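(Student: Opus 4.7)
The plan is to establish well-definedness and uniqueness of $\sigma$ by structural induction on the rooted tree $(\Acal_m, R_m)$, and then to assemble the SCM $\mathcal{M}_m$ by reading off one structural assignment per node directly from the modular update in Eq.~\eqref{eq:modular-update}.

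First I would fix a bottom-up enumeration of $\Acal_m$ compatible with the child-to-parent edge orientation, which exists because $(\Acal_m, R_m)$ is a finite rooted tree and therefore acyclic with bounded depth. For the base case I would use the leaf convention $\sigma(a) = w_m(a)$ from Def.~\ref{def:modular-semantics}. For the inductive step at a non-leaf $a$ with $\ch(a) = \{c_1,\dots,c_n\}$, the induction hypothesis supplies unique values $\sigma(c_1),\dots,\sigma(c_n) \in [0,1]$, and Eq.~\eqref{eq:modular-update} then assigns the single value $\sigma(a) = \iota_{w_m(a)}(\alpha_{\pi(a)}(\sigma(c_1),\dots,\sigma(c_n)))$. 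Existence and uniqueness of $\sigma$ on all of $\Acal_m$ follow immediately, and the codomain condition $\sigma(a) \in [0,1]$ is preserved since $\iota_b$ maps into $[0,1]$ by Def.~\ref{def:modular-semantics}. The structural conditions collected in App.~\ref{app:scm-casting}---rooted-tree shape, finiteness, and a fixed child ordering so that $\pi(a)$ and hence $\alpha_{\pi(a)}$ are unambiguous---are precisely what this induction consumes.

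Next I would package the same recursion as a deterministic SCM. Declare exogenous variables $\mathbf{U} = \{u_a\}_{a \in \Acal_m}$ with instantiation $u_a := w_m(a)$, endogenous variables $\mathbf{V} = \{v_a\}_{a \in \Acal_m}$, and for each $a$ the structural assignment $v_a = f_{v_a}(u_a, \pa(v_a)) := \iota_{u_a}(\alpha_{\pi(a)}(v_{c_1},\dots,v_{c_n}))$. By construction, each $v_a$ depends only on its own exogenous $u_a$ and on the endogenous variables indexed by $\ch(a)$, so the causal graph of $\mathcal{M}_m$ contains an edge $v_{c_i} \to v_a$ iff $(c_i \to a) \in R_m$. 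The SCM graph therefore inherits the DAG---indeed, the tree---structure of $(\Acal_m, R_m)$ and is in particular acyclic, so $\mathcal{M}_m$ admits a unique solution for every setting of $\mathbf{U}$. Evaluating this unique solution at $u_a = w_m(a)$ produces values satisfying the very recursion that uniquely determined $\sigma$, so by uniqueness $v_a = \sigma(a)$ for every $a \in \Acal_m$, giving the required identification $\mathcal{E}_m \leftrightarrow \mathcal{M}_m$ and in particular recovering $\sigma(m) = v_m$ at the root.

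I do not anticipate a deep obstacle: the result is essentially bookkeeping because the modular semantics is already node-local and the carrier graph is already a tree. The two points requiring care are notational, namely fixing the child ordering once and for all so that the polarity vector $\pi(a)$ and the indexed aggregation map $\alpha_{\pi(a)}$ are unambiguous, and conceptual, namely keeping straight that the ``SCM-parents of $v_a$'' correspond to the ``QBAF-children of $a$'', since QBAF edges are oriented child-to-parent and therefore reverse the usual causal-parent convention assumed in the SCM literature referenced by App.~\ref{app:scm-casting}.
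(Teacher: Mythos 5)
Your proof is correct and takes essentially the same route as the paper: a bottom-up induction over the finite rooted tree (the paper indexes this by an explicit node-height function, you use an equivalent topological enumeration) gives existence and uniqueness of $\sigma$, and you then read off one structural assignment per node directly from the modular update equation, note that the resulting SCM graph is acyclic because the QBAF carrier is a tree, and identify the SCM's unique solution with $\sigma$ by the uniqueness already established. The only content in the paper's proof beyond yours is a closing observation tied to Assumption~\ref{assump:app-scm-casting}(3) that the mechanisms and exogenous base scores are held fixed under edge edits, which is not needed to prove the proposition as stated but prepares for the edge-local interventions of Definition~\ref{def:edge-local-intervention}.
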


In the induced SCM, we use a single primitive: an \emph{edge-local} intervention
that deletes the unique outgoing edge from a non-root node $x$ to its parent.
This blocks the influence channel from the subtree rooted at $x$ to the root, while keeping node-local mechanisms and base scores unchanged, and matches the edge/path intervention perspectives in causal inference~\citep{AvinShpitserPearl2005Path,Pearl2009,RichardsonRobins2013SWIGs,ShpitserRichardsonRobins2016}.

\begin{definition}
\label{def:edge-local-intervention}
Let $Q_m=\langle \Acal_m,R_m^{-},R_m^{+},w_m\rangle$ be a rooted directed tree
(child$\to$parent) with root $m$, and write $R_m:=R_m^{-}\cup R_m^{+}$.
Fix a modular semantics $(\alpha,\iota)$ (Def.~\ref{def:modular-semantics}) and
let $\sigma$ be the unique strength function on $Q_m$ (Prop.~\ref{prop:eval-unique-scm}).
For any non-root $x\in\Acal_m\setminus\{m\}$ with parent $p:=\pa(x)$, the
\emph{edge-local intervention at $x$} (denoted $\,\ominus x\,$) deletes the unique
outgoing influence edge $(x\!\to\!p)$ and yields the intervened QBAF
\[
  Q_m^{\ominus x}
  \;:=\;
  \bigl\langle
    \Acal_m,\ R_m^{-}\!\setminus\{(x\!\to\!p)\},\ R_m^{+}\!\setminus\{(x\!\to\!p)\},\ w_m
  \bigr\rangle.
\]
Let $\sigma^{\ominus x}$ be the unique strength function on $Q_m^{\ominus x}$
under the \emph{same} $(\alpha,\iota)$. The counterfactual root strength is
$\sigma^{\ominus x}(m)$.
\end{definition}

Edge-local interventions provide \emph{mechanistic} explainability in the sense that they formally trace how argumentative influences determine
the decision \emph{within} the explicit decision mechanism specified by the QBAF-induced SCM (Prop.~\ref{prop:eval-unique-scm}). The resulting explanation is therefore a \emph{verifiable causal attribution} rather than a post-hoc textual rationale. Human-centered evaluation (e.g., whether such explanations improve user
understanding or trust) is orthogonal and left to future work.

\begin{definition}[Root-level edge-local impact]
\label{def:cf-impacts-edge}
Let $\sigma(m)$ be the factual root strength for $Q_m$ and let $\sigma^{\ominus x}(m)$
be the counterfactual root strength for $Q_m^{\ominus x}$ (Def.~\ref{def:edge-local-intervention}).
Define
\[
  \Delta_{\mathrm{edge}}(x;\,m)
  \;:=\;
  \sigma(m)\;-\;\sigma^{\ominus x}(m).
\]
We interpret $\Delta_{\mathrm{edge}}(x;\,m)>0$ as net support of the root,
$\Delta_{\mathrm{edge}}(x;\,m)<0$ as net attack, and $|\Delta_{\mathrm{edge}}(x;\,m)|$
as influence magnitude.
\end{definition}

\begin{definition}[Counterfactual explanation queries]
\label{def:cf-explanations}
Let $\ch(m)$ be the direct children of $m$ (Def.~\ref{def:parent-children}), and let
$\mathrm{leaf}(Q_m):=\{\,a\in\Acal_m:\ch(a)=\emptyset\,\}$ be the leaves of $Q_m$.
For each $a\in\mathrm{leaf}(Q_m)$, let $P(a)$ be the unique directed path from $a$ to $m$.
Define:
\[
\begin{aligned}
  a^\star_{\mathrm{dir}} &\in \argmax_{a\in\ch(m)} \bigl|\Delta_{\mathrm{edge}}(a;\,m)\bigr|,\\
  a^\star_{\mathrm{leaf}} &\in \argmax_{a\in\mathrm{leaf}(Q_m)} \bigl|\Delta_{\mathrm{edge}}(a;\,m)\bigr|,\\
  a^\star_{\mathrm{all}} &\in \argmax_{a\in\Acal_m\setminus\{m\}} \bigl|\Delta_{\mathrm{edge}}(a;\,m)\bigr|.
\end{aligned}
\]
Let $P^\star := P(a^\star_{\mathrm{leaf}})$. We call $a^\star_{\mathrm{dir}}$ the most influential direct child (of the main argument),
$P^\star$ the most decisive argument chain, and $a^\star_{\mathrm{all}}$ the most influential argument node.
\end{definition}
Taken together, these queries explain the decision made by $Q_m$ by identifying edge-local impacts at three perspectives (children, leaf-to-root chains, and nodes). Positive $\Delta_{\mathrm{edge}}$ indicates net support and negative indicates net attack (Def.~\ref{def:cf-impacts-edge}).

\subsection{Final Consensus Generation and Observation-Aligned Counterfactual Override}
\label{subsec:final-consensus-override}

The previous section provides causally-grounded explanations of \emph{why} \framework\ prefers a particular main argument: we can identify which edges are necessary and how the decision would change under targeted perturbations. However, these explanations are entirely \emph{internal} to the QBAF-induced SCM. They answer ``how did the system arrive at this decision?'' but not ``is this decision aligned with external reality?'' When the internal winner $m^\star$ is potentially misaligned with what a competent external judge would decide, our counterfactual diagnostics can identify arguments that drove the error in retrospect, but the framework has no mechanism to detect that the internal consensus may be brittle.

To address this challenge, we construct a structurally independent observational consensus by re-using the orchestrator as an \emph{LLM-as-a-judge} with \emph{no access to QBAF scores or graph structure}. Given only the main arguments and a compressed discussion transcript, it outputs raw confidence scores $\widehat{\sigma}(m)$ for each main argument $m$, derived from textual evidence aggregation rather than quantitative semantics.

\begin{definition}[Baseline and observational consensus]
\label{def:final-consensus}
Let $\Acal^{\mathrm{main}}$ be the set of main arguments and assume
$\sigma(m)>0$ and $\widehat{\sigma}(m)>0$ for all $m\in\Acal^{\mathrm{main}}$.
Define the normalized consensus distributions
\[
\begin{aligned}
  p_{\mathrm{QBAF}}(m)
  &:= \frac{\sigma(m)}{\sum_{m'\in\Acal^{\mathrm{main}}}\sigma(m')},\\
  p_{\mathrm{obs}}(m)
  &:= \frac{\widehat{\sigma}(m)}{\sum_{m'\in\Acal^{\mathrm{main}}}\widehat{\sigma}(m')}.
\end{aligned}
\]
Let $m^\star \in \argmax_{m\in\Acal^{\mathrm{main}}} p_{\mathrm{QBAF}}(m)$ be the \emph{baseline winner} and
$m^{\mathrm{obs}} \in \argmax_{m\in\Acal^{\mathrm{main}}} p_{\mathrm{obs}}(m)$ be the \emph{observational winner}.
\end{definition}

When the internal and external assessments disagree, we can use counterfactual interventions to reconcile them. However, such reconciliation involves a fundamental trade-off: aligning with the external signal requires perturbing the internal SCM, potentially undermining the causal structure that makes the system interpretable. We formalize this through a cost-regularized objective with a \emph{winner-confidence gate} that inhibits overrides when the baseline winner is already more confident than the observational winner—a safety mechanism preventing over-correction when disagreement is attributable to external noise rather than internal error.

\begin{definition}[Observation-aligned counterfactual override]
\label{def:obs-aligned-override}
Let $\Ical_{\mathrm{cand}}$ be a finite candidate family of intervention configurations containing $\emptyset$,
where each configuration $\Ical=(I_m)_{m\in\Acal^{\mathrm{main}}}$ specifies edge-local interventions on
a set $I_m\subseteq \Acal_m\setminus\{m\}$ for each main argument $m$ (Def.~\ref{def:edge-local-intervention}).
For any $\Ical$, let $p_{\mathrm{QBAF}}^\Ical$ denote the QBAF consensus distribution after applying $\Ical$
and re-evaluating under the same semantics, and let $m^\Ical \in \argmax_{m\in\Acal^{\mathrm{main}}} p_{\mathrm{QBAF}}^\Ical(m)$.

Fix $\lambda \ge 0$ and define the alignment--vs.--perturbation objective
\[
  J(\Ical)
  := \JS\bigl(p_{\mathrm{QBAF}}^\Ical \,\Vert\, p_{\mathrm{obs}}\bigr)
     + \lambda\, C(\Ical),
\]
where $\JS(\cdot\Vert\cdot)$ is the Jensen--Shannon divergence and
\[
  C(\Ical)
  :=
  \frac{\sum_{m\in\Acal^{\mathrm{main}}}\ \sum_{a\in\Acal_m}
        \bigl|\sigma^{Q_m}(a)-\sigma^{Q_m^\Ical}(a)\bigr|}
       {\sum_{m\in\Acal^{\mathrm{main}}}|\Acal_m|}
\]
is the normalized internal state perturbation cost.

Fix a threshold $\tau\ge 0$. Define the eligible set
\[
\begin{aligned}
  \Ical_{\mathrm{align}}
  &:=
  \{\emptyset\}
  \cup
  \Bigl\{\,
    \Ical\in\Ical_{\mathrm{cand}}
    :
\\
    &\quad
    \underbrace{p_{\mathrm{obs}}(m^{\mathrm{obs}})-p_{\mathrm{QBAF}}(m^\star)\ge\tau}_{\text{winner-confidence gate}}
    \ \wedge \
    m^\Ical=m^{\mathrm{obs}}
  \Bigr\}.
\end{aligned}
\]

Given $\hat\Ical \in \argmin_{\Ical\in\Ical_{\mathrm{align}}} J(\Ical)$,
set $m^{\mathrm{final}}:=m^\star$ unless $\hat\Ical\neq\emptyset$ and $J(\hat\Ical)<J(\emptyset)$, in which case set
$m^{\mathrm{final}}:=m^{\hat\Ical}$.
\end{definition}

In our implementation, we restrict $\Ical_{\mathrm{cand}}$ to single-edge interventions ($|\Ical|=1$) for computational efficiency.
This mechanism demonstrates how external feedback can be systematically integrated into a causal framework: \framework\ overrides the baseline winner only when minimal structural perturbation both matches the observational signal and improves the alignment trade-off, with the confidence gate preventing harmful corrections when internal consensus is already reliable.

Upon completion, \framework\ automatically generates a detailed technical consensus report providing a comprehensive, verifiable record of all computed quantities—base scores, QBAF structures, evaluated strengths, counterfactual results, and override decisions (Appendix~\ref{app:consensus-report}).

\subsection{Implementation Details}
\label{app:impl-details}

\framework is fully implemented in Python. The core implementation comprises of approximately 9.7K lines of code. All source code, evaluation scripts, and processed datasets used in this study are publicly available at:
\url{https://github.com/iJinjin/argora-public}.
\section{Evaluation}
\label{sec:eval}

We evaluate \framework on three key dimensions for a holistic evaluation of the reasoning process and design choices:
(1) \textbf{Reasoning Efficacy}, measuring standard accuracy against baselines;
(2) \textbf{Argumentative Utility}, quantifying the system's ability to correct prior biases through the discussion process; and
(3) \textbf{Structural Stability}, assessing the robustness of decisions through the induced SCM. We also evaluate the observation-alignment override mechanism as a selective, confidence-gated correction step, reporting both its effectiveness and risks of over-correction. In addition, while \framework is designed to work with any decision-making tasks, the main evaluation is performed with benchmarks containing discrete ground-truth labels for a quantitative measurement of decision quality and causal diagnostics. We therefore complement our evaluation through a domain-specific use case for a qualitative assessment of explainability (Section~\ref{sec:usecase}).

\subsection{Evaluation Methodology}
\label{subsec:eval-methodology}

To quantify the contribution of the argumentation layer, we evaluate how QBAF aggregation changes the identity and correctness of the winning main argument, comparing the \emph{prior winner} under initial scores to the \emph{final winner} after QBAF evaluation. We additionally report paired significance tests for these within-instance transitions in Appendix~\ref{app:eval-ext}.

To quantify \textbf{Argumentative Utility}, we track how the argumentation process alters initial beliefs.
Let $\{(t_i, y_i)\}_{i=1}^{N}$ be evaluation task instances $t_i$ with ground-truth label $y_i \in \Ycal$.
Let $L: \Acal^{\mathrm{main}} \to \Ycal$ be a mapping function that extracts the predicted label from a main argument.
Let $\Acal^{\mathrm{main}(i)}$ denote the set of main arguments for instance $i$.
For each instance $i$, let
$m_i^{(0)} \in \argmax_{m} w(m)$ denote the \emph{prior winner} (highest initial base score), and
$m_i^* \in \argmax_{m} \sigma(m)$ denote the \emph{final winner} after QBAF evaluation (Def.~\ref{def:final-consensus}).

We first define the four \emph{transition counts} that summarize how correctness changes from the label of the prior winner to the final winner:
\[
p_i := \mathbb{I}\!\left[L(m_i^{(0)}) = y_i\right],\qquad
q_i := \mathbb{I}\!\left[L(m_i^*) = y_i\right].
\]
\[
\begin{bmatrix}
n_{+\to+} & n_{+\to-}\\
n_{-\to+} & n_{-\to-}
\end{bmatrix}
=
\sum_{i=1}^{N}
\begin{bmatrix}
p_i q_i & p_i(1-q_i)\\
(1-p_i)q_i & (1-p_i)(1-q_i)
\end{bmatrix}
\]
Here, $\mathbb{I}[\cdot]$ denotes the Boolean indicator function:
\[
\mathbb{I}[P]
\;:=\;
\begin{cases}
1, & \text{if $P$ is true},\\
0, & \text{otherwise.}
\end{cases}
\]
These counts form an exhaustive partition of the dataset, hence
$n_{+\to+} + n_{-\to+} + n_{+\to-} + n_{-\to-} = N$.

The \emph{Argumentative Utility Matrix} rates are the normalized transition counts:
\[
\begin{aligned}
\underbrace{\text{TRR}}_{\mathclap{\text{\scriptsize Truth Retention Rate}}} &=\frac{n_{+\to+}}{N},\quad &
\underbrace{\text{PRR}}_{\mathclap{\text{\scriptsize Positive Reversal Rate}}}&=\frac{n_{-\to+}}{N},\\
\underbrace{\text{NRR}}_{\mathclap{\text{\scriptsize Negative Reversal Rate}}}&=\frac{n_{+\to-}}{N},\quad &
\underbrace{\text{EPR}}_{\mathclap{\text{\scriptsize Error Persistence Rate}}}&=\frac{n_{-\to-}}{N}.
\end{aligned}
\]

\paragraph{Net Reversal Efficiency.}
With argumentative utility, the goal is to assess whether \framework can correct more errors than it introduces.
However, measuring this effect requires careful consideration of when the argumentation mechanism is operative.
When all main arguments unanimously predict the same label---i.e., $L(m) = y$ for all $m \in \Acal^{\mathrm{main}}$---then $L(m^{(0)}) = L(m^*)$ necessarily holds, and no reversal (positive or negative) is possible regardless of the argumentation dynamics.
Such instances are informationally inert with respect to reversals.

To isolate the contribution of the argumentation layer, we condition on the \emph{disagreement set}: the subset of instances where at least two main arguments predict different labels.
Formally, let
\begin{align*}
    \mathcal{N}_{\mathrm{disagree}} := \bigl\{\, i \in \{1,\ldots,N\} :\; &\exists\, m, m' \in \Acal^{\mathrm{main}(i)} \\[-2pt]
    &\text{s.t.}\; L(m) \neq L(m') \,\bigr\}
\end{align*}
denote this set. The \emph{Net Reversal Efficiency} (NRE) is our primary indicator of positive \textbf{Argumentative Utility}:
\[
    \mathrm{NRE} \;=\;
    \frac{n_{-\to+} \;-\; n_{+\to-}}{|\mathcal{N}_{\mathrm{disagree}}|}.
\]
A positive NRE indicates that, among instances where experts genuinely disagree, the argumentation layer resolves disputes toward correct answers more often than it diverts from them. We also report the statistical significance tests for NRE in Appendix~\ref{app:eval-ext}.

To evaluate \textbf{Structural Stability}, we introduce the \textbf{Correctness Margin}.
Recall that $L : \Acal^{\mathrm{main}} \to \Ycal$ extracts the predicted label from a main argument, and note that multiple main arguments may map to the same label (i.e., $L(m_1)=L(m_2)$ is possible for $m_1\neq m_2$).
For a given instance $i$ with ground-truth label $y_i \in \Ycal$, define the strongest correct-label and strongest incorrect-label argument strengths as:
\begin{align}
\nonumber
    \sigma_{\mathrm{correct}}^{(i)} &:= \max \bigl\{ \sigma(m) : m \in \Acal^{\mathrm{main}(i)},\; L(m) = y_{i} \bigr\}, \\
\nonumber
    \sigma_{\mathrm{wrong}}^{(i)} &:= \max \bigl\{ \sigma(m) : m \in \Acal^{\mathrm{main}(i)},\; L(m) \neq y_{i} \bigr\}.
\end{align}

We define $\Ncal_{\mathrm{valid}} := \bigl\{ i \in \{1, \ldots, N\} : \exists\, m, m' \in \Acal^{\mathrm{main}(i)} \text{ s.t. } L(m) = y_{i} \wedge L(m') \neq y_{i} \bigr\}$ to be the set in which at least one main argument predicts the correct label and at least one predicts an incorrect label.
The \textbf{Correctness Margin} is:
\begin{equation}
\nonumber
    \Delta_{\mathrm{correct}}
    := \frac{1}{|\Ncal_{\mathrm{valid}}|}
    \sum_{i \in \Ncal_{\mathrm{valid}}}
    \bigl(\sigma_{\mathrm{correct}}^{(i)} - \sigma_{\mathrm{wrong}}^{(i)}\bigr).
\end{equation}
A positive $\Delta_{\mathrm{correct}}$ indicates that \framework's argumentation structure systematically assigns higher strength to correct-label main arguments than to incorrect-label ones.

\newcommand{\bestIsDFQuADThree}{\multicolumn{3}{c}{\makecell{\footnotesize $\equiv$}}}
\newcommand{\bestIsDFQuADOne}{\makecell{\footnotesize $\equiv$}}

\definecolor{baselinebg}{RGB}{255,249,196} % soft yellow
\definecolor{postdfbg}{RGB}{227,242,253}   % soft blue

\newcolumntype{Y}{>{\columncolor{baselinebg}}c} % yellow column
\newcolumntype{B}{>{\columncolor{postdfbg}}c}   % blue column

\begin{table*}[t]
  \centering
  \scriptsize
  \setlength{\tabcolsep}{3.0pt}
  \renewcommand{\arraystretch}{1.15}
  \caption{
    Evaluation of ARGORA across datasets (all variants use the \texttt{GPT-4o-mini} model). Baseline majority vote (MV$_3$) uses 3 independent samples.
    ARGORA uses 3 experts; we report both DF-QuAD and the best-performing semantics (``Best'') under the same conditions for reference.
    If the best-performing semantics coincides with DF-QuAD, we replace the entire ``Best'' block with the marker $\equiv$ to indicate identical semantics (and thus identical entries) to the DF-QuAD block.
    Metrics: Acc = accuracy; NRE (Net Reversal Efficiency), marked as $\textcolor{blue}{+}$ or $\textcolor{red}{-}$ depending on polarity; $\Delta_{\mathrm{correct}}$ = correctness margin. We mark the best-performing and second best-performing variants by \textbf{boldface} and \underline{underlining} among the baselines and ARGORA (post-override) under DF-QuAD (ties may occur). Finally, we denote the post-override accuracy change relative to its pre-override value with $\textcolor{blue}{\uparrow}$ / $\textcolor{red}{\downarrow}$; ``$-$'' indicates no change.
  }
  \label{tab:argora-results-extended}

  \resizebox{\textwidth}{!}{%
  \begin{tabular}{l r
                Y Y Y Y        % Baselines (cols 3--6)
                c c c c c c
                B c}  
    \toprule
    Dataset & $N$
    & \multicolumn{4}{c}{Baselines}
    & \multicolumn{6}{c}{ARGORA (\textit{pre-override})}
    & \multicolumn{2}{c}{ARGORA (\textit{post-override})}
    \\
    \cmidrule(lr){3-6}
    \cmidrule(lr){7-12}
    \cmidrule(lr){13-14}

    % second header row: baseline prompt types + semantics groupings
    &
    & \multicolumn{1}{c}{\makecell{Direct\\1$\times$}}
    & \multicolumn{1}{c}{\makecell{Direct\\MV$_3$}}
    & \multicolumn{1}{c}{\makecell{ARGORA-like\\(modified\ CoT)\\1$\times$}}
    & \multicolumn{1}{c}{\makecell{ARGORA-like\\(modified\ CoT)\\MV$_3$}}
    & \multicolumn{3}{c}{DF-QuAD}
    & \multicolumn{3}{c}{Best}
    & \multicolumn{1}{c}{DF-QuAD}  % <- removes blue header background
    & \multicolumn{1}{c}{Best}
    \\
    \cmidrule(lr){7-9}\cmidrule(lr){10-12}

    % third header row: metric names
    &
    & \multicolumn{1}{c}{Acc}
    & \multicolumn{1}{c}{Acc}
    & \multicolumn{1}{c}{Acc}
    & \multicolumn{1}{c}{Acc}
    & \makecell{Acc} & \makecell{NRE} & \makecell{$\Delta_{\mathrm{correct}}$}
    & \makecell{Acc} & \makecell{NRE} & \makecell{$\Delta_{\mathrm{correct}}$}
    & \multicolumn{1}{c}{Acc}      % <- removes blue header background
    & \makecell{Acc}
    \\
    \midrule

    MMLU-Pro & 500
      & 0.604 & 0.604 & 0.602 & \underline{0.618}
      & 0.638 & \textcolor{blue}{$+$} & \textcolor{blue}{$+$0.041}
      & 0.640 & \textcolor{blue}{$+$} & \textcolor{blue}{$+$0.039} (\texttt{REB})
      & \textbf{0.636} (\textcolor{red}{$\downarrow$}) & 0.642 (\textcolor{blue}{$\uparrow$})
    \\
    MedQA & 500
      & 0.796 & \textbf{0.822} & 0.812 & 0.812
      & 0.812 & \textcolor{blue}{$+$} & \textcolor{blue}{$+$0.186}
      & \bestIsDFQuADThree
      & \underline{0.816} (\textcolor{blue}{$\uparrow$}) & \bestIsDFQuADOne
    \\
    TruthfulQA & 500
      & 0.792 & 0.758 & \underline{0.826} & 0.824
      & 0.882 & \textcolor{blue}{$+$} & \textcolor{blue}{$+$0.269}
      & 0.886 & \textcolor{blue}{$+$} & \textcolor{blue}{$+$0.172} (\texttt{SDQ})
      & \textbf{0.882} ($-$) & 0.886 ($-$)
    \\
    GPQA Diamond & 198
      & 0.394 & 0.399 & 0.429 & \textbf{0.459}
      & 0.450 & \textcolor{blue}{$+$} & \textcolor{blue}{$+$0.045}
      & 0.480 & \textcolor{blue}{$+$} & \textcolor{blue}{$+$0.043} (\texttt{QE})
      & \underline{0.455} (\textcolor{blue}{$\uparrow$}) & 0.480 ($-$)
    \\
    MuSR (Murder Mystery) & 250
      & 0.612 & 0.604 & 0.604 & \underline{0.632}
      & 0.664 & \textcolor{blue}{$+$} & \textcolor{blue}{$+$0.021}
      & \bestIsDFQuADThree
      & \textbf{0.664} ($-$) & \bestIsDFQuADOne
    \\
    MuSR (Object Placement) & 256
      & \textbf{0.523} & 0.511 & 0.503 & 0.511
      & 0.523 & \textcolor{red}{$-$} & \textcolor{blue}{$+$0.003}
      & 0.532 & 0 & \textcolor{blue}{$+$0.008} (\texttt{EBT})
      & \textbf{0.523} ($-$) & 0.551 (\textcolor{blue}{$\uparrow$})
    \\
    MuSR (Team Allocation) & 250
      & 0.536 & 0.516 & \underline{0.548} & 0.532
      & 0.564 & \textcolor{blue}{$+$} & \textcolor{red}{$-$0.047}
      & \bestIsDFQuADThree
      & \textbf{0.564} ($-$) & \bestIsDFQuADOne
    \\
    \bottomrule
  \end{tabular}%
  }
\end{table*}

\subsection{Evaluation Benchmarks}
Our goal is to demonstrate that \framework is broadly compatible with diverse task formats and domains, rather than being specialized to a single benchmark family. Accordingly, we evaluate on a suite spanning: (i) broad multi-domain knowledge and reasoning, (ii) domain-specific professional QA (medicine and science), (iii) robustness to common misconceptions and imitative falsehoods, and (iv) long-context, multi-step narrative reasoning. To this end, we use the following set of benchmark datasets:
\textbf{MMLU-Pro}~\cite{10.5555/3737916.3740934mmlu-pro}: harder MMLU with \emph{ten} choices and reasoning-focused curation;
\textbf{MedQA}~\cite{app11146421medqa}: clinically grounded medical MCQA;
\textbf{TruthfulQA}~\cite{lin-etal-2022-truthfulqa}: truthfulness under misconception-inducing prompts;
\textbf{GPQA Diamond}~\cite{rein2024gpqa}: graduate-level science QA subset with strong expert agreement;
\textbf{MuSR}~\cite{musr2024}: long-context multi-step narrative reasoning (murder mysteries, object placement, team assignment). In all of our evaluations, we randomly shuffle the datasets on a fixed random seed for reproducibility.

\subsection{Baselines and Evaluation Settings}
\label{sec:eval-baselines}

\framework integrates (i) multi-expert discussion, (ii) deterministic aggregation via QBAF modular semantics, and (iii) a counterfactual intervention-driven override process. Closest related systems~\cite{liang-etal-2024-encouraging,freedman2025argumentative} differ substantially in their intermediate representations and outputs (e.g., free-form debate transcripts or generative synthesis), which makes a strictly fair end-to-end comparison difficult without re-implementing and re-tuning each method under equal settings.

We therefore focus on controlled baselines that isolate the effect of \framework's orchestration and aggregation from improvements attributable to a stronger backbone model. Our primary baseline is a \emph{single-model} instance of the same base LLM used for \framework's experts, prompted to answer each question directly (``\textit{select the best answer.}''). We additionally evaluate an identical CoT-like task prompt used in \framework for the single-model baseline.

We also include a compute-matched \emph{majority-vote ensemble} baseline akin to self-consistency~\cite{wang2022self}. For each instance, we run the single-model baseline for $n$ independent trials (matching the number of experts in \framework) and select the majority answer; ties are broken by an \textit{LLM-as-a-judge}. Table~\ref{tab:comparative-landscape} summarizes the main design differences between \framework and related multi-expert reasoning frameworks.

Finally, the hyperparameters used for our evaluation are given in Table~\ref{tab:eval-hparams} in the Appendix; in all experiments, temperature is set to 0.

%We also report a best-effort comparison to ArgLLM~\cite{freedman2025argumentative}, which shares the high-level principle of leveraging argumentation structure through QBAFs to improve reliability. While the two systems differ in their exact elicitation protocol and aggregation mechanism, comparing against ArgLLM provides context relative to a representative argumentation-based baseline where evaluation settings can be reasonably aligned.

%ArgLLM evaluates claim verification on adapted datasets, including \emph{StrategyClaim}, which is derived from StrategyQA \cite{geva-etal-2021-aristotle}. However, their reported StrategyClaim results are based on a \emph{subset}: they select 700 claims, use 200 for prompt selection, and run main experiments on the remaining 500 claims, balanced to 250 ``true'' and 250 ``false'' \cite{freedman2025argumentative}. The close inspection of data has shown that per-data instance difficulty in StrategyQA varies prominently compared to MedQA and TruthfulQA; consequently, headline accuracy can be sensitive to which subset is chosen. To avoid a biased comparison driven by sampling variability, we focus our head-to-head evaluation on MedQA and TruthfulQA, where variability bias is less consequential.

%Can't directly compare ArgLLM because of its ambiguous evaluation method - will take care of this later

\subsection{Results and Discussion}

Table~\ref{tab:argora-results-extended} summarizes the main results across seven benchmarks using \texttt{GPT-4o-mini} throughout, with $3$ experts for \framework and a compute-matched majority-vote baseline (MV$_3$). We use DF-QuAD as our default semantics and report per-benchmark best semantics only for reference; since this selection uses ground-truth labels, it is an oracle upper bound rather than a deployable strategy.
Overall, \framework improves accuracy over both direct prompting and majority vote baselines on most datasets, with substantial gains on general reasoning and truthfulness benchmarks. On the three MuSR subsets, \framework consistently matches or exceeds the baselines, with the largest post-override boost (under the ``Best'' semantics) appearing on the Object Placement subtask. MedQA is the main exception where the majority-vote baseline is already strong, and \framework remains competitive while still providing additional mechanistic diagnostics beyond accuracy.

ARGORA's argumentation layer is often \emph{corrective} when experts genuinely disagree: NRE is positive on almost all benchmarks. We also observe mostly positive correctness margins $\Delta_{\mathrm{correct}}$ (e.g., MedQA $+0.186$, TruthfulQA $+0.269$ pre-override), indicating that QBAF strengths tend to separate correct-label main arguments from incorrect-label ones.

Two notable caveats are MuSR's Object Placement and Team Allocation subtasks; However, additional evaluations conducted in Appendix~\ref{app:eval-ext} suggest that these results are largely due to base-model limitations rather than the argumentation layer. We also report further analyses (significance tests, override statistics, additional models, ablations) to complement our evaluations in Appendix~\ref{app:eval-ext} and Appendix~\ref{app:ablation}.

\section{Use Case Evaluation}
\label{sec:usecase}

To complement our counterfactual diagnostics with a qualitative assessment of explainability, we study a cybersecurity scenario in which credibility judgments must be explicitly
localized within a narrative. Models are tasked with distinguishing a \textit{fabricated} ransomware incident report from a genuine breach report. We use this case study to qualitatively examine whether the inner discussion process of \framework highlights its final decision and produces an accessible, interpretable consensus.

\subsection{Setup}

% for llm feedback: Clarity and Reproducibility (Ambiguous Model Identification)
We evaluate three systems:
(1) \framework, composed of 3 experts, each using Qwen/Qwen3-14B~\citep{yang2025qwen3} instruction-tuned models from the official HuggingFace repository, served via vLLM in an OpenAI-compatible API mode;
(2) GPT-OSS 120B~\citep{agarwal2025gpt}, an open-weight model from the official HuggingFace repository (\texttt{openai/gpt-oss-120b}), served via vLLM in an OpenAI-compatible API mode with the default checkpoint; and
(3) Gemini 2.5 Pro~\citep{comanici2025gemini}, a proprietary model accessed via the official Google Generative Language API.

All systems are provided with the same synthetic incident report describing a fictitious company in the industrial sector. The incident was constructed using a generation pipeline designed to produce plausible threat-intelligence narratives while embedding controlled inconsistencies, including OPSEC violations, timeline irregularities, and missing forensic signals. The absence of indicators or leaked artifacts from the real world has been verified by human reviewers.

The task is framed as a binary classification problem (\textit{real} vs. \textit{fabricated}),
with successful performance additionally requiring anomaly criterion-driven explanations.
All criteria in Table~\ref{tab:model_comparison} correspond to this single synthetic scenario and
were evaluated on identical inputs.

\subsection{Model Outputs}

% \begin{table}[t]
% \centering
% \caption{\textbf{Comparison of reasoning performance across models.}}
% \label{tab:model_comparison}
% \small
% % \begin{tabular}{lccc}
% \begin{tabular}{p{0.25\linewidth}ccc}
% \toprule
% \textbf{Criterion} &
% \textbf{\framework} &
% \textbf{OSS 120B} &
% \textbf{Gemini 2.5} \\
% \midrule
% OPSEC anomaly detected &
% \ding{51} & \ding{55} & \ding{55} \\
% Timeline inconsistency &
% \ding{51} & \ding{55} & \textbf{\textasciitilde} \\
% Missing forensic artifacts &
% \ding{51} & \ding{55} & \ding{55} \\
% TTP mismatch flagged &
% \ding{51} & \ding{55} & \ding{55} \\
% \midrule
% \textbf{Final verdict} &
% \textbf{Fabricated} &
% Real &
% Real \\
% \bottomrule
% \end{tabular}
% \end{table}

\begin{table}[t]
\centering
% \caption{Reasoning comparison across models.}
\caption{Reasoning comparison across models. All criteria are derived from a single synthetic incident shared by all systems.}
\label{tab:model_comparison}
\small

\setlength{\tabcolsep}{2.5pt} % reduces column spacing
\begin{tabular}{lccc}
\toprule
Criterion &
\framework &
OSS 120B &
Gemini 2.5 \\
\midrule

OPSEC anomaly detected & \ding{51} & \ding{55} & \ding{55} \\
Timeline inconsistency & \ding{51} & \ding{55} &  $\sim$ \\
Missing forensic artifacts & \ding{51} & \ding{55} & \ding{55} \\
TTP mismatch flagged & \ding{51} & \ding{55} & \ding{55} \\

\midrule
Final verdict &
\textbf{Fabricated} &
Real &
Real \\
\bottomrule
\end{tabular}
\end{table}

\paragraph{\framework (Qwen-3 14B, 3 experts).}

The system correctly classified the incident as \textit{fabricated}.
During the discussion, experts uncovered multiple inconsistencies that were individually subtle but collectively indicative of a lack of forensic coherence. Rather than relying on any single anomaly, agents iteratively challenged and refined 
the assumptions of each other, exposing deeper structural issues in the narrative. In particular, the consensus decision was supported by explicit criterion-level explanations as follows:

\begin{itemize}[leftmargin=*, nosep]
\item \textbf{OPSEC anomaly.} The system identified the ransomware portal domain
``\textit{horizonhelpdesk45n1lk.onion}'' as embedding the victim’s name, a violation of
BlackSuit’s OPSEC norms that strongly suggests deliberate mimicry rather than authentic infrastructure.
\item \textbf{Timeline inconsistency.} The analysis flagged the attacker’s public claim occurring on the
same date as system restoration as implausible for a double-extortion campaign, where disclosure is
typically delayed to preserve leverage.
\item \textbf{Missing forensic artifacts.} The absence of verifiable encryption artifacts or exfiltration
evidence (e.g., hashes or leak-site material) was treated as incompatible with a genuine ransomware incident.
\item \textbf{TTP mismatch.} Deviations such as the use of a custom Python-based exfiltration mechanism
conflicted with tooling commonly associated with BlackSuit, reinforcing the hypothesis of misattribution
or a staged false flag.
\end{itemize}

Through this process, the experts converged on the interpretation that the report closely mimicked the stylistic conventions of real industrial ransomware disclosures, but lacking the tightly
interconnected technical signals characteristic of genuine breaches.

% for llm feedback: Validity (Confounded Variables in Use Case)
For comparison, we also evaluated a single-agent baseline using the same Qwen-3 14B instruction-tuned model.
Under identical inputs and prompts, the single-agent configuration failed to surface any of the four embedded forensic inconsistencies and consequently classified the incident as \textit{real}. This highlights that the gains of \framework arise from coordinated multi-expert reasoning rather than from the underlying model alone.

% \paragraph{GPT-OSS 120B.}
% The model judged the incident \textit{real}, emphasizing the coherence of the timeline, 
% the presence of a ransom note and \textit{.onion} address, 
% and realistic sector-specific artifacts (SCADA configurations, IoT impact metrics).  
% However, it did not evaluate global consistency or question whether the forensic substrate 
% was sufficient to support a genuine attack.
% %
% However, the analysis largely treated these signals in isolation, without assessing whether 
% they jointly formed a coherent evidentiary basis for a real attack.

% \paragraph{Gemini 2.5 Pro.}
% Gemini also predicted the incident as \textit{real}.  
% Its analysis focused on operational plausibility and typical incident-response uncertainty 
% (e.g., unknown ransom payment status), but similarly did not integrate 
% cross-dimensional anomalies or challenge missing forensic evidence.
% %
% As a result, the model accepted plausible uncertainty rather than actively challenging whether 
% the available evidence was collectively sufficient.
% %
% While the model briefly noted minor temporal ambiguity, this signal was not 
% treated as decisive and did not meaningfully affect the final classification.

\paragraph{GPT-OSS 120B and Gemini 2.5 Pro.}

Both models classified the incident as \textit{real}, emphasizing narrative coherence, sector-specific detail, and plausible operational uncertainty. However, neither model integrated cross-dimensional anomalies or questioned whether the available forensic evidence was jointly sufficient to support a genuine attack. While Gemini briefly noted a minor timeline ambiguity, this signal was not treated as decisive and did
not affect the final classification (marked using $\sim$ symbol in Table~\ref{tab:model_comparison}).

This case illustrates how debate-driven orchestration of 
expert perspectives, rather than scale alone, can provide robustness in adversarial, detail-sensitive domains such as cybersecurity. For completeness and reproducibility, we provide the full incident specification, representative model outputs (including counterfactual ``what-if'' reasoning), and a detailed breakdown of the discussion process in Appendix~\ref{app:use_case_task} to complement the formal counterfactual definitions.

\section{Conclusion}
\label{sec:conclusion}

In this paper, we presented \framework, an orchestrated multi-expert decision-making framework that builds structured argumentation graphs and casts their evaluation as a structural causal model for decision-level explanations. This enables edge-local counterfactual interventions that identify influential argumentative channels. In addition, \framework introduces an observation-aligned counterfactual override policy that performs cost-regularized self-correction when the internally induced consensus is misaligned with an external observational judgment. Across diverse benchmarks and a case study, \framework is competitive in performance while providing argumentative utility, structural stability, and mechanistic explainability not typically available in existing multi-LLM discussion frameworks. Future research directions include richer intervention families beyond single-edge edits and human-centered evaluation of causal explanations.

\section*{Impact Statement}

This paper presents work whose goal is to advance the field of Machine Learning, more specifically, the mechanistic interpretability and reliability of multi-expert decision-making via structured argumentation and causal interventions. There are many potential societal consequences of our work, none which we feel must be specifically highlighted here.

\bibliography{bibliography_custom}

@inproceedings{Rago2016DFQuAD,
  author    = {Antonio Rago and Francesca Toni and Marco Aurisicchio and Pietro Baroni},
  title     = {Discontinuity-Free Decision Support with Quantitative Argumentation Debates},
  booktitle = {Principles of Knowledge Representation and Reasoning: Proceedings of the Fifteenth International Conference (KR 2016)},
  publisher = {AAAI Press},
  address   = {Cape Town, South Africa},
  pages     = {63--73},
  year      = {2016},
  url       = {http://www.aaai.org/ocs/index.php/KR/KR16/paper/view/12874}
}

@inproceedings{Potyka2018CDS,
  author    = {Nico Potyka},
  title     = {Continuous Dynamical Systems for Weighted Bipolar Argumentation},
  booktitle = {Proceedings of the Sixteenth International Conference on Principles of Knowledge Representation and Reasoning (KR 2018)},
  publisher = {AAAI Press},
  pages     = {148--157},
  year      = {2018}
}

@inproceedings{WBAG,
  author    = {Leila Amgoud and Jonathan Ben{-}Naim},
  title     = {Weighted Bipolar Argumentation Graphs: Axioms and Semantics},
  booktitle = {Proceedings of the Twenty-Seventh International Joint Conference on Artificial Intelligence (IJCAI 2018)},
  publisher = {IJCAI},
  pages     = {5194--5198},
  year      = {2018},
  url       = {https://www.ijcai.org/proceedings/2018/720}
}

@misc{MossakowskiNeuhaus2018,
  author       = {Till Mossakowski and Fabian Neuhaus},
  title        = {Modular Semantics and Characteristics for Bipolar Weighted Argumentation Graphs},
  year         = {2018},
  eprint       = {1807.06685},
  archivePrefix= {arXiv},
  primaryClass = {cs.AI},
  doi          = {10.48550/arXiv.1807.06685}
}

@article{BaroniRagoToni2019Spectrum,
  author  = {Pietro Baroni and Antonio Rago and Francesca Toni},
  title   = {From fine-grained properties to broad principles for gradual argumentation: A principled spectrum},
  journal = {International Journal of Approximate Reasoning},
  year    = {2019},
  volume  = {105},
  pages   = {252--286},
  doi     = {10.1016/j.ijar.2018.11.019}
}

@book{Surowiecki2004,
  author    = {James Surowiecki},
  title     = {The Wisdom of Crowds},
  publisher = {Doubleday},
  year      = {2004},
  address   = {New York},
  isbn      = {0385503865}
}

@article{Lorenz2011PNAS,
  author  = {Jan Lorenz and Heiko Rauhut and Frank Schweitzer and Dirk Helbing},
  title   = {How social influence can undermine the wisdom of crowd effect},
  journal = {Proceedings of the National Academy of Sciences},
  year    = {2011},
  volume  = {108},
  number  = {22},
  pages   = {9020--9025},
  doi     = {10.1073/pnas.1008636108}
}

@article{HsuSandford2007Delphi,
  author  = {Chia-Chien Hsu and Brian A. Sandford},
  title   = {The Delphi Technique: Making Sense of Consensus},
  journal = {Practical Assessment, Research, and Evaluation},
  year    = {2007},
  volume  = {12},
  number  = {10},
  url     = {https://openpublishing.library.umass.edu/pare/article/id/1418/}
}

@article{DiehlStroebe1987,
  author  = {Michael Diehl and Wolfgang Stroebe},
  title   = {Productivity Loss in Brainstorming Groups: Toward the Solution of a Riddle},
  journal = {Journal of Personality and Social Psychology},
  year    = {1987},
  volume  = {53},
  number  = {3},
  pages   = {497--509},
  doi     = {10.1037/0022-3514.53.3.497}
}

@article{Mullen1991Meta,
  author  = {Brian Mullen and Craig Johnson and Eduardo Salas},
  title   = {Productivity Loss in Brainstorming Groups: A Meta-Analytic Integration},
  journal = {Basic and Applied Social Psychology},
  year    = {1991},
  volume  = {12},
  number  = {1},
  pages   = {3--23},
  doi     = {10.1207/s15324834basp1201_1}
}

@misc{FishkinOverview,
  author = {James S. Fishkin},
  title  = {What is Deliberative Polling?},
  year   = {2024},
  howpublished = {\url{https://deliberation.stanford.edu/what-deliberative-polling/}},
  note   = {Accessed 2026-01-28}
}

@inproceedings{Kenton2024WeakJudgingStrong,
  title={On Scalable Oversight with Weak LLMs Judging Strong LLMs},
  author={Kenton, Zachary and Siegel, Noah Y. and Kram{\'a}r, J{\'a}nos and Brown-Cohen, Jonah and Albanie, Samuel and Bulian, Jannis and Agarwal, Rishabh and Lindner, David and Tang, Yunhao and Goodman, Noah D. and Shah, Rohin},
  booktitle={NeurIPS},
  year={2024},
  url={https://proceedings.neurips.cc/paper_files/paper/2024/file/899511e37a8e01e1bd6f6f1d377cc250-Paper-Conference.pdf}
}

@inproceedings{Liu2024MultiLLMDebate,
  title={Multi-LLM Debate: Framework, Principles, and Interventions},
  author={Zhang, Hanlin and Wang, Yang and Liu, Yang and others},
  booktitle={NeurIPS (openreview poster)},
  year={2024},
  url={https://openreview.net/forum?id=sy7eSEXdPC}
}

@inproceedings{Chen2024JudgementBias,
  title={Humans or LLMs as the Judge? A Study on Judgement Bias},
  author={Chen, Guanting and Gao, Xingyao and Song, Kaiqiang and others},
  booktitle={EMNLP},
  year={2024},
  url={https://aclanthology.org/2024.emnlp-main.474.pdf}
}

@article{Zhao2024LLMExplainabilitySurvey,
  title={Explainability for Large Language Models: A Survey},
  author={Zhao, Haiwei and others},
  journal={ACM Computing Surveys},
  year={2024},
  doi={10.1145/3639372},
  url={https://dl.acm.org/doi/full/10.1145/3639372}
}

@article{Kampik2024Contribution,
  title={Contribution Functions for Quantitative Bipolar Argumentation Graphs: A Principle-Based Analysis},
  author={Kampik, Timotheus and Potyka, Nico and Yin, Xiang and {\v{C}}yras, Kristijonas and Toni, Francesca},
  journal={International Journal of Approximate Reasoning},
  year={2024},
  doi={10.1016/j.ijar.2024.109255},
  url={https://spiral.imperial.ac.uk/server/api/core/bitstreams/5b90f38a-297a-4fa5-beb4-efd2f33fdb5a/content}
}

@book{Pearl2009,
  title={Causality: Models, Reasoning, and Inference},
  author={Pearl, Judea},
  edition={2nd},
  year={2009},
  publisher={Cambridge University Press},
  doi={10.1017/CBO9780511803161},
  url={https://www.cambridge.org/core/books/causality/B0046844FAE10CBF274D4ACBDAEB5F5B}
}

@inproceedings{AvinShpitserPearl2005Path,
  title={Identifiability of Path-Specific Effects},
  author={Avin, Chen and Shpitser, Ilya and Pearl, Judea},
  booktitle={IJCAI},
  year={2005},
  url={https://ftp.cs.ucla.edu/pub/stat_ser/r321-ijcai05.pdf}
}

@article{ShpitserRichardsonRobins2016,
  author  = {Ilya Shpitser and Eric Tchetgen Tchetgen},
  title   = {Causal Inference with a Graphical Hierarchy of Interventions},
  journal = {The Annals of Statistics},
  year    = {2016},
  volume  = {44},
  number  = {6},
  pages   = {2433--2466},
  doi     = {10.1214/15-AOS1411}
}

@techreport{RichardsonRobins2013SWIGs,
  title       = {Single World Intervention Graphs (SWIGs): A Unification of the Counterfactual and Graphical Approaches to Causality},
  author      = {Richardson, Thomas S. and Robins, James M.},
  institution = {Center for Statistics and the Social Sciences, University of Washington},
  number      = {Working Paper 128},
  year        = {2013}
}

@inproceedings{NEURIPS2022_9d560961,
 author = {Wei, Jason and Wang, Xuezhi and Schuurmans, Dale and Bosma, Maarten and Ichter, Brian and Xia, Fei and Chi, Ed and Le, Quoc V and Zhou, Denny},
 booktitle = {Advances in Neural Information Processing Systems},
 editor = {S. Koyejo and S. Mohamed and A. Agarwal and D. Belgrave and K. Cho and A. Oh},
 pages = {24824--24837},
 publisher = {Curran Associates, Inc.},
 title = {Chain-of-Thought Prompting Elicits Reasoning in Large Language Models},
 url = {https://proceedings.neurips.cc/paper_files/paper/2022/file/9d5609613524ecf4f15af0f7b31abca4-Paper-Conference.pdf},
 volume = {35},
 year = {2022}
}

@inproceedings{10.5555/3692070.3692537,
author = {Du, Yilun and Li, Shuang and Torralba, Antonio and Tenenbaum, Joshua B. and Mordatch, Igor},
title = {Improving factuality and reasoning in language models through multiagent debate},
year = {2024},
publisher = {JMLR.org},
abstract = {Large language models (LLMs) have demonstrated remarkable capabilities in language generation, understanding, and few-shot learning in recent years. An extensive body of work has explored how their performance may be further improved through the tools of prompting, ranging from verification, self-consistency, or intermediate scratchpads. In this paper, we present a complementary approach to improve language responses where multiple language model instances propose and debate their individual responses and reasoning processes over multiple rounds to arrive at a common final answer. Our findings indicate that this approach significantly enhances mathematical and strategic reasoning across a number of tasks. We also demonstrate that our approach improves the factual validity of generated content, reducing fallacious answers and hallucinations that contemporary models are prone to. Our approach may be directly applied to existing black-box models and uses identical procedure and prompts for all tasks we investigate. Overall, our findings suggest that such "society of minds" approach has the potential to significantly advance the capabilities of LLMs and pave the way for further breakthroughs in language generation and understanding.},
booktitle = {Proceedings of the 41st International Conference on Machine Learning},
articleno = {467},
numpages = {31},
location = {Vienna, Austria},
series = {ICML'24}
}

@inproceedings{liang-etal-2024-encouraging,
    title = "Encouraging Divergent Thinking in Large Language Models through Multi-Agent Debate",
    author = "Liang, Tian  and
      He, Zhiwei  and
      Jiao, Wenxiang  and
      Wang, Xing  and
      Wang, Yan  and
      Wang, Rui  and
      Yang, Yujiu  and
      Shi, Shuming  and
      Tu, Zhaopeng",
    editor = "Al-Onaizan, Yaser  and
      Bansal, Mohit  and
      Chen, Yun-Nung",
    booktitle = "Proceedings of the 2024 Conference on Empirical Methods in Natural Language Processing",
    month = nov,
    year = "2024",
    address = "Miami, Florida, USA",
    publisher = "Association for Computational Linguistics",
    url = "https://aclanthology.org/2024.emnlp-main.992/",
    doi = "10.18653/v1/2024.emnlp-main.992",
    pages = "17889--17904",
    abstract = "Modern large language models (LLMs) like ChatGPT have shown remarkable performance on general language tasks but still struggle on complex reasoning tasks, which drives the research on cognitive behaviors of LLMs to explore human-like problem-solving strategies. Along this direction, one representative strategy is self-reflection, which asks an LLM to refine the solution with the feedback generated by itself iteratively. However, our study shows that such reflection-style methods suffer from the Degeneration-of-Thought (DoT) problem: once the LLM has established confidence in its solutions, it is unable to generate novel thoughts later through reflection even if its initial stance is incorrect. To address the DoT problem, we propose a Multi-Agent Debate (MAD) framework, in which multiple agents express their arguments in the state of ``tit for tat'' and a judge manages the debate process to obtain a final solution. Clearly, our MAD framework encourages divergent thinking in LLMs which would be helpful for tasks that require deep levels of contemplation. Experiment results on two challenging datasets, commonsense machine translation and counter-intuitive arithmetic reasoning, demonstrate the effectiveness of our MAD framework. Extensive analyses suggest that the adaptive break of debate and the modest level of ``tit for tat'' state are required for MAD to obtain good performance. Moreover, we find that LLMs might not be a fair judge if different LLMs are used for agents."
}

@inproceedings{long-etal-2024-multi-expert,
    title = "Multi-expert Prompting Improves Reliability, Safety and Usefulness of Large Language Models",
    author = "Long, Do Xuan  and
      Yen, Duong Ngoc  and
      Luu, Anh Tuan  and
      Kawaguchi, Kenji  and
      Kan, Min-Yen  and
      Chen, Nancy F.",
    editor = "Al-Onaizan, Yaser  and
      Bansal, Mohit  and
      Chen, Yun-Nung",
    booktitle = "Proceedings of the 2024 Conference on Empirical Methods in Natural Language Processing",
    month = nov,
    year = "2024",
    address = "Miami, Florida, USA",
    publisher = "Association for Computational Linguistics",
    url = "https://aclanthology.org/2024.emnlp-main.1135/",
    doi = "10.18653/v1/2024.emnlp-main.1135",
    pages = "20370--20401",
    abstract = "We present Multi-expert Prompting, a novel enhancement of ExpertPrompting (Xu et al., 2023), designed to improve the large language model (LLM) generation. Specifically, it guides an LLM to fulfill an input instruction by simulating multiple experts, aggregating their responses, and selecting the best among individual and aggregated responses. This process is performed in a single chain of thoughts through our seven carefully designed subtasks derived from the Nominal Group Technique (Ven and Delbecq, 1974), a well-established decision-making framework. Our evaluations demonstrate that Multi-expert Prompting significantly outperforms ExpertPrompting and comparable baselines in enhancing the truthfulness, factuality, informativeness, and usefulness of responses while reducing toxicity and hurtfulness. It further achieves state-of-the-art truthfulness by outperforming the best baseline by 8.69{\%} with ChatGPT. Multi-expert Prompting is efficient, explainable, and highly adaptable to diverse scenarios, eliminating the need for manual prompt construction."
}

@inproceedings{li-etal-2024-simulating,
    title = "Simulating Expert Discussions with Multi-agent for Enhanced Scientific Problem Solving",
    author = "Li, Ziyue  and
      Chang, Yuan  and
      Le, Xiaoqiu",
    editor = "Ghosal, Tirthankar  and
      Singh, Amanpreet  and
      Waard, Anita  and
      Mayr, Philipp  and
      Naik, Aakanksha  and
      Weller, Orion  and
      Lee, Yoonjoo  and
      Shen, Shannon  and
      Qin, Yanxia",
    booktitle = "Proceedings of the Fourth Workshop on Scholarly Document Processing (SDP 2024)",
    month = aug,
    year = "2024",
    address = "Bangkok, Thailand",
    publisher = "Association for Computational Linguistics",
    url = "https://aclanthology.org/2024.sdp-1.23/",
    pages = "243--256",
    abstract = "Large Language Models (LLMs) have shown remarkable potential across various domains, yet their application in addressing complex scientific problems remains a formidable challenge. This paper presents a novel methodology to augment the problem-solving capabilities of LLMs by assigning them roles as domain-specific experts. By simulating a panel of experts, each LLM is tasked with delivering professional and cautious responses to scientific inquiries. Our approach involves querying multiple LLMs and assessing the consistency of their responses. High agreement among the LLMs suggests greater confidence in the proposed solution, whereas discrepancies prompt a collaborative discussion among the LLMs to reach a consensus. This method emulates real-world scientific problem-solving processes, fostering a more reliable and robust mechanism for LLMs to tackle scientific questions. Our experimental results show that assigning roles to multiple LLMs as domain-specific experts significantly improves their accuracy and reliability in solving scientific problems. This framework has the potential to advance the application of AI in scientific research, enhancing its effectiveness and trustworthiness."
}

@inproceedings{chen-etal-2024-reconcile,
    title = "{R}e{C}oncile: Round-Table Conference Improves Reasoning via Consensus among Diverse {LLM}s",
    author = "Chen, Justin  and
      Saha, Swarnadeep  and
      Bansal, Mohit",
    editor = "Ku, Lun-Wei  and
      Martins, Andre  and
      Srikumar, Vivek",
    booktitle = "Proceedings of the 62nd Annual Meeting of the Association for Computational Linguistics (Volume 1: Long Papers)",
    month = aug,
    year = "2024",
    address = "Bangkok, Thailand",
    publisher = "Association for Computational Linguistics",
    url = "https://aclanthology.org/2024.acl-long.381/",
    doi = "10.18653/v1/2024.acl-long.381",
    pages = "7066--7085",
    abstract = "Large Language Models (LLMs) still struggle with natural language reasoning tasks. Motivated by the society of minds (Minsky, 1988), we propose ReConcile, a multi-model multi-agent framework designed as a round table conference among diverse LLM agents. ReConcile enhances collaborative reasoning between LLM agents via multiple rounds of discussion, learning to convince other agents to improve their answers, and employing a confidence-weighted voting mechanism that leads to a better consensus. In each round, ReConcile initiates discussion between agents via a `discussion prompt' that consists of (a) grouped answers and explanations generated by each agent in the previous round, (b) their confidence scores, and (c) demonstrations of answer-rectifying human explanations, used for convincing other agents. Experiments on seven benchmarks demonstrate that ReConcile significantly improves LLMs' reasoning {--} both individually and as a team {--} surpassing prior single-agent and multi-agent baselines by up to 11.4{\%} and even outperforming GPT-4 on three datasets. ReConcile also flexibly incorporates different combinations of agents, including API-based, open-source, and domain-specific models, leading to an 8{\%} improvement on MATH. Finally, we analyze the individual components of ReConcile, demonstrating that the diversity originating from different models is critical to its superior performance."
}

@inproceedings{freedman2025argumentative,
  title     = {Argumentative Large Language Models for Explainable and Contestable Claim Verification},
  author    = {Freedman, Gabriel and Dejl, Adam and G{\"o}r{\"u}r, Deniz and Yin, Xiang and Rago, Antonio and Toni, Francesca},
  booktitle = {Proceedings of the AAAI Conference on Artificial Intelligence},
  year      = {2025},
  volume    = {39},
  pages     = {14930--14939}
}

@inproceedings{pmlr-v284-zhu25a,
  title     = {ArgRAG: Explainable Retrieval Augmented Generation using Quantitative Bipolar Argumentation},
  author    = {Zhu, Yuqicheng and Potyka, Nico and Hern{\'a}ndez, Daniel and He, Yuan and Ding, Zifeng and Xiong, Bo and Zhou, Dongzhuoran and Kharlamov, Evgeny and Staab, Steffen},
  booktitle = {Proceedings of The 19th International Conference on Neurosymbolic Learning and Reasoning},
  pages     = {697--718},
  year      = {2025},
  series    = {Proceedings of Machine Learning Research},
  volume    = {284},
  url       = {https://proceedings.mlr.press/v284/zhu25a.html}
}

@article{gorur2025retrieval,
  title   = {Retrieval and Argumentation Enhanced Multi-Agent {LLMs} for Judgmental Forecasting},
  author  = {G{\"o}r{\"u}r, Deniz and Rago, Antonio and Toni, Francesca},
  journal = {CoRR},
  volume  = {abs/2510.24303},
  year    = {2025},
  url     = {https://arxiv.org/abs/2510.24303},
  eprint  = {2510.24303},
  archivePrefix = {arXiv}
}

@inproceedings{dalal-etal-2024-inference,
  title     = {Inference to the Best Explanation in Large Language Models},
  author    = {Dalal, Dhairya and Valentino, Marco and Freitas, Andre and Buitelaar, Paul},
  booktitle = {Proceedings of the 62nd Annual Meeting of the Association for Computational Linguistics (Volume 1: Long Papers)},
  year      = {2024},
  month     = aug,
  address   = {Bangkok, Thailand},
  publisher = {Association for Computational Linguistics},
  pages     = {217--235},
  url       = {https://aclanthology.org/2024.acl-long.14/},
  doi       = {10.18653/v1/2024.acl-long.14}
}

@article{chu2024causalguardrails,
  title   = {A Causal Explainable Guardrails for Large Language Models},
  author  = {Chu, Zhixuan and Wang, Yan and Li, Longfei and Wang, Zhibo and Qin, Zhan and Ren, Kui},
  journal = {arXiv preprint arXiv:2405.04160},
  year    = {2024},
  url     = {https://arxiv.org/abs/2405.04160}
}

@inproceedings{BaroniRagoToni18,
  author    = {Pietro Baroni and
               Antonio Rago and
               Francesca Toni},
  title     = {How Many Properties Do We Need for Gradual Argumentation?},
  booktitle = {Proceedings of the Thirty-Second AAAI Conference on Artificial Intelligence (AAAI-18)},
  pages     = {1736--1743},
  publisher = {AAAI Press},
  year      = {2018},
  doi       = {10.1609/aaai.v32i1.11544}
}

@InProceedings{pmlr-v213-massidda23a,
  title = 	 {Causal Abstraction with Soft Interventions},
  author =       {Massidda, Riccardo and Geiger, Atticus and Icard, Thomas and Bacciu, Davide},
  booktitle = 	 {Proceedings of the Second Conference on Causal Learning and Reasoning},
  pages = 	 {68--87},
  year = 	 {2023},
  editor = 	 {van der Schaar, Mihaela and Zhang, Cheng and Janzing, Dominik},
  volume = 	 {213},
  series = 	 {Proceedings of Machine Learning Research},
  month = 	 {11--14 Apr},
  publisher =    {PMLR},
  pdf = 	 {https://proceedings.mlr.press/v213/massidda23a/massidda23a.pdf},
  url = 	 {https://proceedings.mlr.press/v213/massidda23a.html},
  abstract = 	 {Causal abstraction provides a theory describing how several causal models can represent the same system at different levels of detail. Existing theoretical proposals limit the analysis of abstract models to "hard" interventions fixing causal variables to be constant values. In this work, we extend causal abstraction to "soft" interventions, which assign possibly non-constant functions to variables without adding new causal connections. Specifically, (i) we generalize $\tau$-abstraction from Beckers and Halpern (2019) to soft interventions, (ii) we propose a further definition of soft abstraction to ensure a unique map $\omega$ between soft interventions, and (iii) we prove that our constructive definition of soft abstraction guarantees the intervention map $\omega$ has a specific and necessary explicit form.}
}

@article{hallucination_survey,
author = {Huang, Lei and Yu, Weijiang and Ma, Weitao and Zhong, Weihong and Feng, Zhangyin and Wang, Haotian and Chen, Qianglong and Peng, Weihua and Feng, Xiaocheng and Qin, Bing and Liu, Ting},
title = {A Survey on Hallucination in Large Language Models: Principles, Taxonomy, Challenges, and Open Questions},
year = {2025},
issue_date = {March 2025},
publisher = {Association for Computing Machinery},
address = {New York, NY, USA},
volume = {43},
number = {2},
issn = {1046-8188},
url = {https://doi.org/10.1145/3703155},
doi = {10.1145/3703155},
abstract = {The emergence of large language models (LLMs) has marked a significant breakthrough in natural language processing (NLP), fueling a paradigm shift in information acquisition. Nevertheless, LLMs are prone to hallucination, generating plausible yet nonfactual content. This phenomenon raises significant concerns over the reliability of LLMs in real-world information retrieval (IR) systems and has attracted intensive research to detect and mitigate such hallucinations. Given the open-ended general-purpose attributes inherent to LLMs, LLM hallucinations present distinct challenges that diverge from prior task-specific models. This divergence highlights the urgency for a nuanced understanding and comprehensive overview of recent advances in LLM hallucinations. In this survey, we begin with an innovative taxonomy of hallucination in the era of LLM and then delve into the factors contributing to hallucinations. Subsequently, we present a thorough overview of hallucination detection methods and benchmarks. Our discussion then transfers to representative methodologies for mitigating LLM hallucinations. Additionally, we delve into the current limitations faced by retrieval-augmented LLMs in combating hallucinations, offering insights for developing more robust IR systems. Finally, we highlight the promising research directions on LLM hallucinations, including hallucination in large vision-language models and understanding of knowledge boundaries in LLM hallucinations.},
journal = {ACM Trans. Inf. Syst.},
month = jan,
articleno = {42},
numpages = {55},
keywords = {Large Language Models, Hallucination, Factuality, Faithfulness}
}

@inproceedings{dhuliawala-etal-2024-chain,
    title = "Chain-of-Verification Reduces Hallucination in Large Language Models",
    author = "Dhuliawala, Shehzaad  and
      Komeili, Mojtaba  and
      Xu, Jing  and
      Raileanu, Roberta  and
      Li, Xian  and
      Celikyilmaz, Asli  and
      Weston, Jason",
    editor = "Ku, Lun-Wei  and
      Martins, Andre  and
      Srikumar, Vivek",
    booktitle = "Findings of the Association for Computational Linguistics: ACL 2024",
    month = aug,
    year = "2024",
    address = "Bangkok, Thailand",
    publisher = "Association for Computational Linguistics",
    url = "https://aclanthology.org/2024.findings-acl.212/",
    doi = "10.18653/v1/2024.findings-acl.212",
    pages = "3563--3578",
    abstract = "Generation of plausible yet incorrect factual information, termed hallucination, is an unsolved issue in large language models. We study the ability of language models to deliberate on the responses they give in order to correct their mistakes. We develop the Chain-of-Verification (CoVe) method whereby the model first (i) drafts an initial response; then (ii) plans verification questions to fact-check its draft; (iii) answers those questions independently so the answers are not biased by other responses; and (iv) generates its final verified response. In experiments, we show CoVe decreases hallucinations across a variety of tasks, from list-based questions from Wikidata, closed book MultiSpanQA and longform text generation."
}

@misc{yao2023reactsynergizingreasoningacting,
      title={ReAct: Synergizing Reasoning and Acting in Language Models}, 
      author={Shunyu Yao and Jeffrey Zhao and Dian Yu and Nan Du and Izhak Shafran and Karthik Narasimhan and Yuan Cao},
      year={2023},
      eprint={2210.03629},
      archivePrefix={arXiv},
      primaryClass={cs.CL},
      url={https://arxiv.org/abs/2210.03629}, 
}

@inproceedings{rein2024gpqa,
  title={Gpqa: A graduate-level google-proof q\&a benchmark},
  author={Rein, David and Hou, Betty Li and Stickland, Asa Cooper and Petty, Jackson and Pang, Richard Yuanzhe and Dirani, Julien and Michael, Julian and Bowman, Samuel R},
  booktitle={First Conference on Language Modeling},
  year={2024}
}

@article{wang2022self,
  title={Self-consistency improves chain of thought reasoning in language models},
  author={Wang, Xuezhi and Wei, Jason and Schuurmans, Dale and Le, Quoc and Chi, Ed and Narang, Sharan and Chowdhery, Aakanksha and Zhou, Denny},
  journal={arXiv preprint arXiv:2203.11171},
  year={2022}
}

@inproceedings{10.5555/3737916.3740934mmlu-pro,
author = {Wang, Yubo and Ma, Xueguang and Zhang, Ge and Ni, Yuansheng and Chandra, Abhranil and Guo, Shiguang and Ren, Weiming and Arulraj, Aaran and He, Xuan and Jiang, Ziyan and Li, Tianle and Ku, Max and Wang, Kai and Zhuang, Alex and Fan, Rongqi and Yue, Xiang and Chen, Wenhu},
title = {MMLU-Pro: a more robust and challenging multi-task language understanding benchmark},
year = {2024},
isbn = {9798331314385},
publisher = {Curran Associates Inc.},
address = {Red Hook, NY, USA},
abstract = {In the age of large-scale language models, benchmarks like the Massive Multitask Language Understanding (MMLU) have been pivotal in pushing the boundaries of what AI can achieve in language comprehension and reasoning across diverse domains. However, as models continue to improve, their performance on these benchmarks has begun to plateau, making it increasingly difficult to discern differences in model capabilities. This paper introduces MMLU-Pro, an enhanced dataset designed to extend the mostly knowledge-driven MMLU benchmark by integrating more challenging, reasoning-focused questions and expanding the choice set from four to ten options. Additionally, MMLU-Pro eliminates the trivial and noisy questions in MMLU. Our experimental results show that MMLU-Pro not only raises the challenge, causing a significant drop in accuracy by 16\% to 33\% compared to MMLU but also demonstrates greater stability under varying prompts. With 24 different prompt styles tested, the sensitivity of model scores to prompt variations decreased from 4-5\% in MMLU to just 2\% in MMLU-Pro. Additionally, we found that models utilizing Chain of Thought (CoT) reasoning achieved better performance on MMLU-Pro compared to direct answering, which is in stark contrast to the findings on the original MMLU, indicating that MMLU-Pro includes more complex reasoning questions. Our assessments confirm that MMLU-Pro is a more discriminative benchmark to better track progress in the field.},
booktitle = {Proceedings of the 38th International Conference on Neural Information Processing Systems},
articleno = {3018},
numpages = {25},
location = {Vancouver, BC, Canada},
series = {NIPS '24}
}

@Article{app11146421medqa,
AUTHOR = {Jin, Di and Pan, Eileen and Oufattole, Nassim and Weng, Wei-Hung and Fang, Hanyi and Szolovits, Peter},
TITLE = {What Disease Does This Patient Have? A Large-Scale Open Domain Question Answering Dataset from Medical Exams},
JOURNAL = {Applied Sciences},
VOLUME = {11},
YEAR = {2021},
NUMBER = {14},
ARTICLE-NUMBER = {6421},
URL = {https://www.mdpi.com/2076-3417/11/14/6421},
ISSN = {2076-3417},
ABSTRACT = {Open domain question answering (OpenQA) tasks have been recently attracting more and more attention from the natural language processing (NLP) community. In this work, we present the first free-form multiple-choice OpenQA dataset for solving medical problems, MedQA, collected from the professional medical board exams. It covers three languages: English, simplified Chinese, and traditional Chinese, and contains 12,723, 34,251, and 14,123 questions for the three languages, respectively. We implement both rule-based and popular neural methods by sequentially combining a document retriever and a machine comprehension model. Through experiments, we find that even the current best method can only achieve 36.7%, 42.0%, and 70.1% of test accuracy on the English, traditional Chinese, and simplified Chinese questions, respectively. We expect MedQA to present great challenges to existing OpenQA systems and hope that it can serve as a platform to promote much stronger OpenQA models from the NLP community in the future.},
DOI = {10.3390/app11146421}
}

@inproceedings{lin-etal-2022-truthfulqa,
    title = "{T}ruthful{QA}: Measuring How Models Mimic Human Falsehoods",
    author = "Lin, Stephanie  and
      Hilton, Jacob  and
      Evans, Owain",
    editor = "Muresan, Smaranda  and
      Nakov, Preslav  and
      Villavicencio, Aline",
    booktitle = "Proceedings of the 60th Annual Meeting of the Association for Computational Linguistics (Volume 1: Long Papers)",
    month = may,
    year = "2022",
    address = "Dublin, Ireland",
    publisher = "Association for Computational Linguistics",
    url = "https://aclanthology.org/2022.acl-long.229/",
    doi = "10.18653/v1/2022.acl-long.229",
    pages = "3214--3252",
    abstract = "We propose a benchmark to measure whether a language model is truthful in generating answers to questions. The benchmark comprises 817 questions that span 38 categories, including health, law, finance and politics. We crafted questions that some humans would answer falsely due to a false belief or misconception. To perform well, models must avoid generating false answers learned from imitating human texts. We tested GPT-3, GPT-Neo/J, GPT-2 and a T5-based model. The best model was truthful on 58{\%} of questions, while human performance was 94{\%}. Models generated many false answers that mimic popular misconceptions and have the potential to deceive humans. The largest models were generally the least truthful. This contrasts with other NLP tasks, where performance improves with model size. However, this result is expected if false answers are learned from the training distribution. We suggest that scaling up models alone is less promising for improving truthfulness than fine-tuning using training objectives other than imitation of text from the web."
}

@inproceedings{musr2024,
  author       = {Zayne Sprague and
                  Xi Ye and
                  Kaj Bostrom and
                  Swarat Chaudhuri and
                  Greg Durrett},
  title        = {MuSR: Testing the Limits of Chain-of-thought with Multistep Soft Reasoning},
  booktitle    = {The Twelfth International Conference on Learning Representations,
                  {ICLR} 2024, Vienna, Austria, May 7-11, 2024},
  publisher    = {OpenReview.net},
  year         = {2024},
  url          = {https://openreview.net/forum?id=jenyYQzue1},
  biburl       = {https://dblp.org/rec/conf/iclr/SpragueYBCD24.bib},
  bibsource    = {dblp computer science bibliography, https://dblp.org}
}

@article{Kampik2024ContributionFunctions,
  title   = {Contribution functions for quantitative bipolar argumentation graphs: A principle-based analysis},
  author  = {Kampik, Timotheus and Potyka, Nico and Yin, Xiang and \v{C}yras, Kristijonas and Toni, Francesca},
  journal = {International Journal of Approximate Reasoning},
  volume  = {173},
  pages   = {109255},
  year    = {2024},
  month   = oct,
  doi     = {10.1016/j.ijar.2024.109255}
}

@misc{chen-etal-2024-USC,
    title={Universal Self-Consistency for Large Language Model Generation},
    author={Xinyun Chen and Renat Aksitov and Uri Alon and Jie Ren and Kefan Xiao and Pengcheng Yin and Sushant Prakash and Charles Sutton and Xuezhi Wang and Denny Zhou},
    year={2024},
    eprint={2311.17311},
    archivePrefix={arXiv},
    primaryClass={cs.CL},
    url={https://arxiv.org/abs/2311.17311}
}

@misc{apple2025illusion,
    title={The Illusion of Thinking: Understanding the Strengths and Limitations of Reasoning Models via the Lens of Problem Complexity},
    author={Parshin Shojaee and Iman Mirzadeh and Keivan Alizadeh and Maxwell Horton and Samy Bengio and Mehrdad Farajtabar},
    year={2025},
    howpublished={Apple Machine Learning Research},
    url={https://ml-site.cdn-apple.com/papers/the-illusion-of-thinking.pdf}
}

@misc{anthropic2025reasoning,
    title={Reasoning Models Don't Always Say What They Think},
    author={{Anthropic}},
    year={2025},
    howpublished={Anthropic Research},
    url={https://www.anthropic.com/research/reasoning-models-dont-say-think}
}

@inproceedings{turpin2024language,
    title={Language Models Don't Always Say What They Think: Unfaithful Explanations in Chain-of-Thought Prompting},
    author={Turpin, Miles and Michael, Julian and Perez, Ethan and Bowman, Samuel R.},
    booktitle={Advances in Neural Information Processing Systems},
    volume={36},
    year={2024},
    url={https://arxiv.org/abs/2305.04388}
}

@inproceedings{besta2024graph,
    title={Graph of Thoughts: Solving Elaborate Problems with Large Language Models},
    author={Besta, Maciej and Blach, Nils and Kubicek, Ales and Gerstenberger, Robert and Podstawski, Michal and Gianinazzi, Lukas and Gajda, Joanna and Lehmann, Tomasz and Nyczyk, Hubert and Muller, Piotr and Hoefler, Torsten},
    booktitle={Proceedings of the AAAI Conference on Artificial Intelligence},
    volume={38},
    pages={17682--17690},
    year={2024},
    doi={10.1609/aaai.v38i16.29720}
}

@inproceedings{yao2023tree,
    title={Tree of Thoughts: Deliberate Problem Solving with Large Language Models},
    author={Yao, Shunyu and Yu, Dian and Zhao, Jeffrey and Shafran, Izhak and Griffiths, Tom and Cao, Yuan and Narasimhan, Karthik},
    booktitle={Advances in Neural Information Processing Systems},
    volume={36},
    year={2023},
    url={https://arxiv.org/abs/2305.10601}
}

@misc{wang2024mixture,
    title={Mixture-of-Agents Enhances Large Language Model Capabilities},
    author={Junlin Wang and Jue Wang and Ben Athiwaratkun and Ce Zhang and James Zou},
    year={2024},
    eprint={2406.04692},
    archivePrefix={arXiv},
    primaryClass={cs.CL},
    url={https://arxiv.org/abs/2406.04692}
}

@inproceedings{wei2022chain,
    title={Chain-of-Thought Prompting Elicits Reasoning in Large Language Models},
    author={Wei, Jason and Wang, Xuezhi and Schuurmans, Dale and Bosma, Maarten and Ichter, Brian and Xia, Fei and Chi, Ed and Le, Quoc and Zhou, Denny},
    booktitle={Advances in Neural Information Processing Systems},
    volume={35},
    pages={24824--24837},
    year={2022}
}

@article{yang2025qwen3,
  title={Qwen3 technical report},
  author={Yang, An and Li, Anfeng and Yang, Baosong and Zhang, Beichen and Hui, Binyuan and Zheng, Bo and Yu, Bowen and Gao, Chang and Huang, Chengen and Lv, Chenxu and others},
  journal={arXiv preprint arXiv:2505.09388},
  year={2025}
}

@article{agarwal2025gpt,
  title={gpt-oss-120b \& gpt-oss-20b model card},
  author={Agarwal, Sandhini and Ahmad, Lama and Ai, Jason and Altman, Sam and Applebaum, Andy and Arbus, Edwin and Arora, Rahul K and Bai, Yu and Baker, Bowen and Bao, Haiming and others},
  journal={arXiv preprint arXiv:2508.10925},
  year={2025}
}

@article{comanici2025gemini,
  title={Gemini 2.5: Pushing the frontier with advanced reasoning, multimodality, long context, and next generation agentic capabilities},
  author={Comanici, Gheorghe and Bieber, Eric and Schaekermann, Mike and Pasupat, Ice and Sachdeva, Noveen and Dhillon, Inderjit and Blistein, Marcel and Ram, Ori and Zhang, Dan and Rosen, Evan and others},
  journal={arXiv preprint arXiv:2507.06261},
  year={2025}
}
\bibliographystyle{icml2026}

%%%%%%%%%%%%%%%%%%%%%%%%%%%%%%%%%%%%%%%%%%%%%%%%%%%%%%%%%%%%

\newpage
\appendix
\onecolumn

\section{An Illustrative Overview of \framework}
\label{app:overview-figure}

\begin{figure}[!htbp]
  \centering
  \includegraphics[
    width=0.75\linewidth,
    keepaspectratio,
    trim=39mm 25mm 30mm 25mm,
    clip
  ]{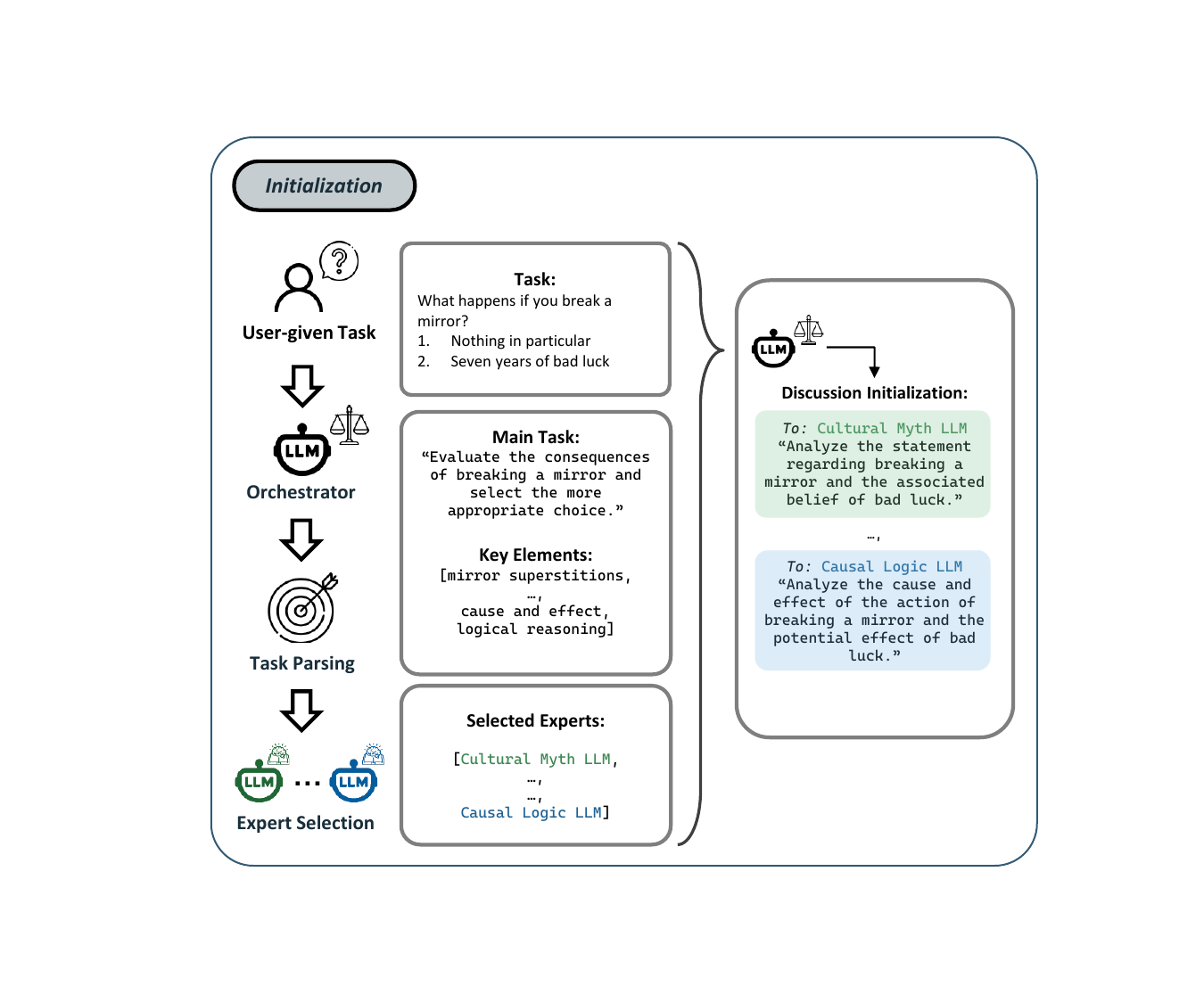}
  \caption{\textbf{\framework\ overview (Initialization).} Pre-discussion parsing, key-element extraction, expert selection, and prompt generation (Sec.~\ref{subsec:pre-discussion-init}, Prompt.~\ref{prompt:exp-prompt-gen}).}
  \label{fig:argora-1}
\end{figure}

\begin{figure}[!htbp]
  \centering
  \includegraphics[
    width=\linewidth,
    keepaspectratio,
    trim=12mm 10mm 10mm 15mm,
    clip
  ]{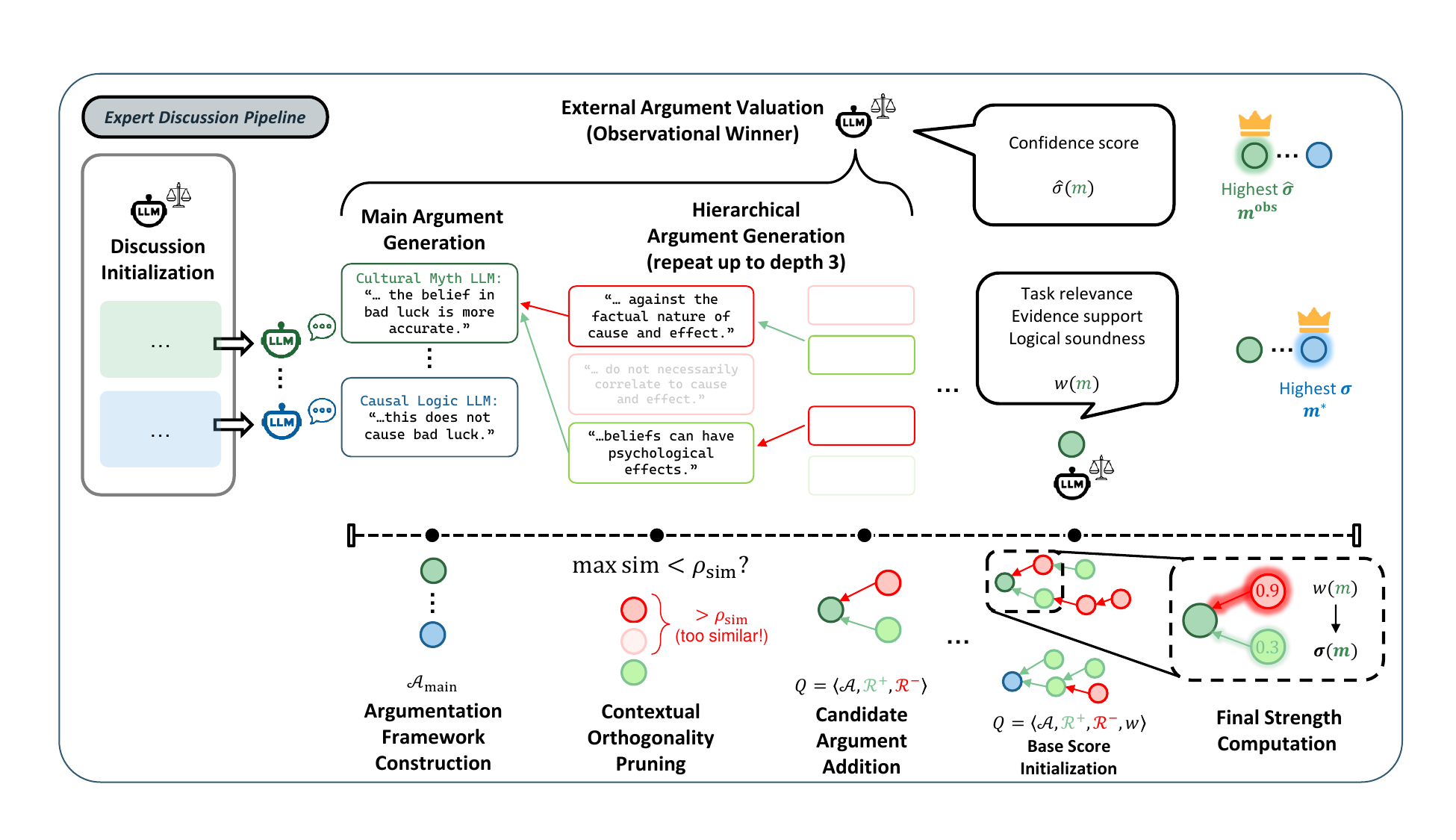}
  \caption{\textbf{\framework\ overview (Expert discussion).} Hierarchical argument elicitation, QBAF construction/pruning, and semantics-based aggregation (Alg.~\ref{alg:first-level-gen}--\ref{alg:third-level-gen}, Sec.~\ref{app:contextual-orthogonality-pruning}, Sec.~\ref{subsec:prior-strength}, Alg.~\ref{alg:full-debate-round}).}
  \label{fig:argora-2}
\end{figure}

\begin{figure}[!htbp]
  \centering
  \includegraphics[
    width=\linewidth,
    keepaspectratio,
    trim=25mm 10mm 20mm 10mm,
    clip
  ]{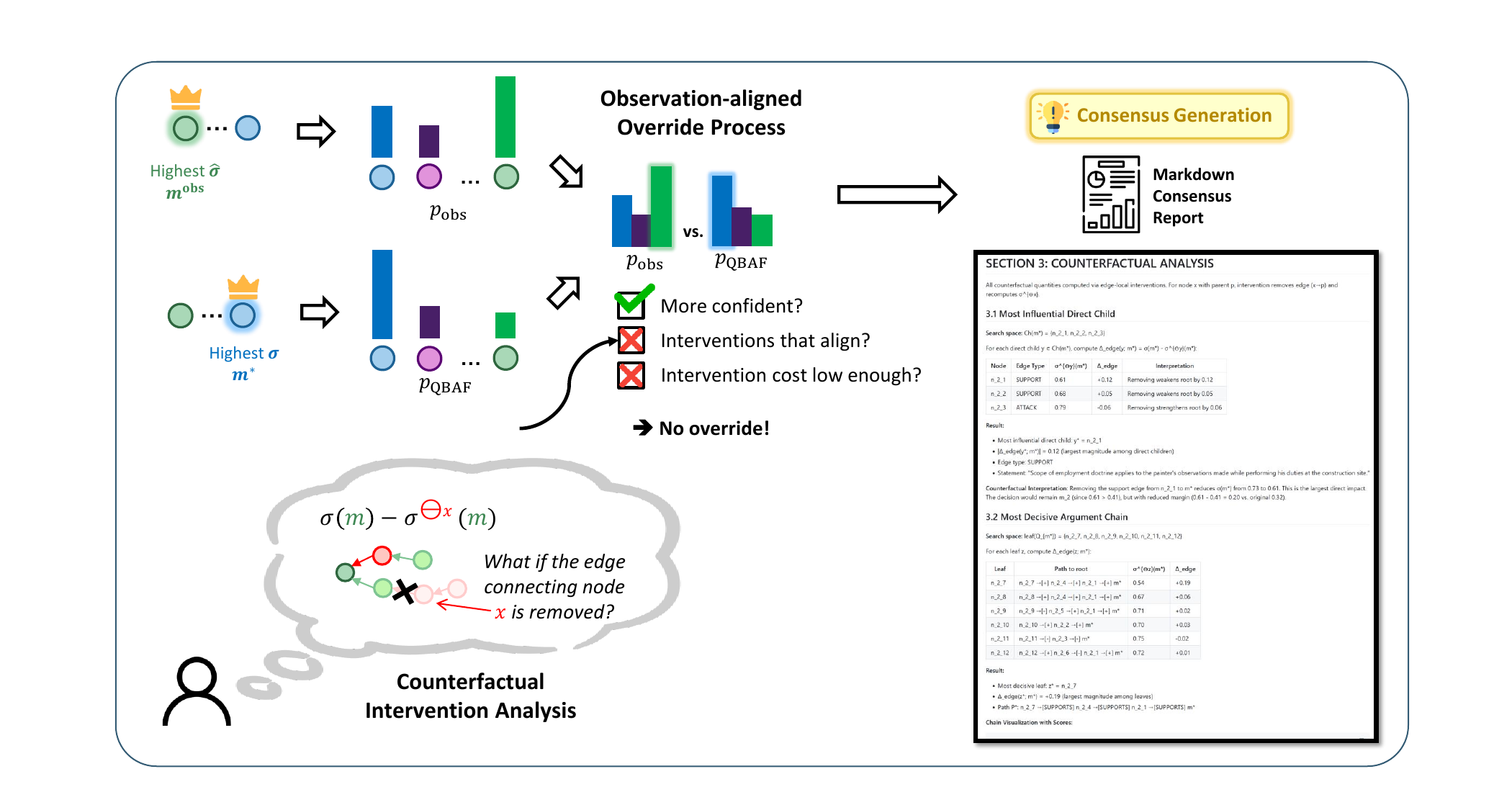}
  \caption{\textbf{\framework\ overview (Consensus and diagnostics).} SCM casting, counterfactual interventions, and observation-aligned override logic (Sec.~\ref{subsec:causal-cf}, Def.~\ref{def:edge-local-intervention}, Def.~\ref{def:obs-aligned-override}).}
  \label{fig:argora-3}
\end{figure}

We provide an end-to-end illustrated guide to \framework\ in Figs.~\ref{fig:argora-1}--\ref{fig:argora-3}, using a toy myth-buster-like topic (also referred to as \textit{Task} in the figure) for illustration. All components are identical in the benchmark evaluations; only task prompts and expert instantiations differ.

\paragraph{Initialization.}
As shown in Fig.~\ref{fig:argora-1}, \framework\ begins with a pre-discussion initialization phase in which the Orchestrator parses the user input topic into a main task and extracts key elements that will guide the subsequent discussion (Sec.~\ref{subsec:pre-discussion-init}).
These outputs determine (i) which specialized expert LLM instances are selected and (ii) how per-expert discussion prompts are generated.
In particular, the Orchestrator uses the topic, the parsed main task, and the extracted key elements to synthesize a tailored prompt for each expert (Prompt.~\ref{prompt:exp-prompt-gen}), ensuring that experts receive distinct roles and guidance aligned with the same underlying decision problem.

\paragraph{Expert discussion and QBAF construction.}
Fig.~\ref{fig:argora-2} illustrates the discussion phase.
Conditioned on the Orchestrator-generated prompts, experts first produce an initial main argument and then expand it hierarchically up to three levels (Algs.~\ref{alg:first-level-gen}--\ref{alg:third-level-gen}) by generating supplementary arguments that either support or attack the main argument.
\framework\ incrementally constructs one QBAF per main argument as supplementary arguments are generated. Each QBAF is rooted at its corresponding main argument and encodes support and attack relations among the generated arguments.
To prevent redundancy and maintain argumentative diversity, \framework\ applies contextual orthogonality pruning to remove near-duplicate or semantically overlapping arguments, promoting a diverse set of argumentative contributions (Sec.~\ref{app:contextual-orthogonality-pruning}).
Each retained argument is assigned a base score $w$ derived from the Orchestrator (Sec.~\ref{subsec:prior-strength}) and evaluated under the selected modular semantics to yield final argument strengths $\sigma$ across the constructed QBAFs (Alg.~\ref{alg:full-debate-round}).
The final strengths of the main arguments are compared to determine the argumentative winner $m^\star$ (Def.~\ref{def:final-consensus}).

\paragraph{Consensus generation, counterfactual diagnostics, and overrides.}
Finally, Fig.~\ref{fig:argora-3} summarizes the post-discussion stage.
For explainability, \framework\ casts each constructed QBAF into a deterministic SCM induced by the chosen quantitative semantics (Sec.~\ref{subsec:causal-cf}).
This enables edge-local counterfactual interventions that selectively remove individual influence edges while holding node-local mechanisms and base scores fixed (Def.~\ref{def:edge-local-intervention}), allowing \framework\ to quantify the causal impact of arguments and argument chains on the final decision (Defs.~\ref{def:cf-impacts-edge}--\ref{def:cf-explanations}).
In parallel, \framework\ constructs a structurally independent observational consensus by re-using the Orchestrator in an \textit{LLM-as-a-judge} role with no access to QBAF scores or graph structure; given only the main arguments and a compressed discussion transcript, it produces an observational distribution $p_{\mathrm{obs}}$ and winner $m^{\mathrm{obs}}$ (Def.~\ref{def:final-consensus}).
When the observational and argumentative winners disagree, \framework\ evaluates a family of candidate counterfactual intervention configurations and re-evaluates the QBAFs to obtain an intervened distribution $p_{\mathrm{QBAF}}^{\Ical}$, which is used to determine eligibility for an observation-aligned override (Def.~\ref{def:obs-aligned-override}).
The final output is a consensus decision together with a structured Markdown report documenting the winning arguments, confidence scores, counterfactual analyses, and causal diagnostics supporting the decision (App.~\ref{app:consensus-report}).

\section{Extended Related Works}
\label{sec:extended-related-works}

This appendix provides an extended comparison of \framework with recent work on how LLMs reason, retrieve information, and engage in multi-agent discussions. We identify two key limitations in existing approaches that motivate \framework's design:

\begin{enumerate}
\item \textbf{Unreliable explanations:} The step-by-step reasoning that LLMs produce cannot always be relied upon as an accurate reflection of their decision-making process, making it hard to diagnose errors.
\item \textbf{Lack of justification for decisions:} Many methods improve reliability by generating multiple answers and voting, but they offer limited explainability about the reasoning behind the final answer.
\end{enumerate}

\framework addresses these gaps by representing multi-agent discussions as structured argument graphs (called QBAFs) and treating the evaluation process as a deterministic mathematical model (specifically, a Structural Causal Model or SCM). This design enables us to perform targeted ``what-if'' analyses.

\paragraph{Interpretability of LLM Reasoning} 

Prompting techniques like Chain-of-Thought (CoT) ask LLMs to show their work step-by-step, which can improve performance on complex tasks~\citep{wei2022chain}. However, these generated reasoning traces may be unfaithful, and reasoning models can exhibit collapse when tasks become more complex~\citep{apple2025illusion,anthropic2025reasoning,turpin2024language}. For high-stakes applications, we need more than plausible-sounding explanations—we need mechanisms that can identify which reasoning steps were actually decisive and measure how sensitive the final answer is to changes in those steps.

Graph-of-Thoughts (GoT) organizes intermediate reasoning states into a directed graph, enabling branching, merging, and iterative refinement beyond linear CoT~\citep{besta2024graph,yao2023tree}. This is well-suited for inference-time search and aggregation, where node expansion and merging are guided by prompts and heuristic scoring. In contrast, \framework focuses on argumentative structure: relations are explicitly typed (support/attack) and evaluated under a deterministic quantitative semantics, which yields a persistent object for diagnosis and counterfactual intervention.

\paragraph{Limitations of Self-Consistency} 

Self-Consistency (SC) improves answer quality by generating multiple reasoning paths and selecting the most common answer through majority voting~\citet{wang2022self}. While effective, this approach has fundamental structural limitations: SC achieves better accuracy primarily by generating many solutions and hoping the correct one appears most frequently. This means computational costs scale roughly linearly with the number of samples, and many of these samples are essentially saying the same thing in different words—they provide repetition rather than genuinely diverse perspectives.

When SC produces a wrong answer, the method doesn't tell us why the incorrect reasoning path won the vote. We don't learn whether the error stemmed from a flawed premise, a logical mistake, or a systematic misunderstanding. The voting process treats all paths equally without identifying which specific steps led to failure.

Since SC treats each reasoning path as a disposable sample, we can't easily test ``what-if'' scenarios. We could generate new samples, but we can't directly intervene on the existing reasoning structure. Subsequent improvements that add selective resampling or internal filtering~\citep{chen-etal-2024-USC} still largely treat reasoning paths as temporary artifacts rather than analyzable objects for causal investigation.

\paragraph{Multi-Agent Systems} 

Multi-agent frameworks attempt to improve reasoning through collaboration among multiple AI agents. For example, Mixture-of-Agents uses layered refinement where each layer of agents builds upon previous layers~\citep{wang2024mixture}, while debate protocols have agents argue iteratively to reach better conclusions~\citep{liang-etal-2024-encouraging}. 

However, these methods typically produce only conversation transcripts. The crucial moments where the discussion turned toward the right or wrong answer are buried in unstructured text, making it difficult to trace failures back to their root causes. Without a structured representation of the debate, it's challenging to perform targeted interventions or to measure how a particular critique has an effect the outcome. \framework addresses this by converting discussions into an argumentation framework that can be evaluated deterministically and queried at specific points for structured explanations than simply reviewing transcripts.

\begin{table}[htbp]
\centering
\caption{Comparative Analysis of Reasoning and Aggregation Approaches for Large Language Models}
\label{tab:comparative-landscape}
\resizebox{\textwidth}{!}{%
\begin{tabular}{@{}lccccc@{}}
\toprule
\textbf{Feature} & 
\begin{tabular}[c]{@{}c@{}}\textbf{Self-Consistency}\\\citep{wang2022self}\end{tabular} & 
\begin{tabular}[c]{@{}c@{}}\textbf{Graph-of-Thoughts}\\\cite{besta2024graph}\end{tabular} & 
\begin{tabular}[c]{@{}c@{}}\textbf{Mixture-of-Agents}\\\citep{wang2024mixture}\end{tabular} & 
\begin{tabular}[c]{@{}c@{}}\textbf{MAD}\\\citep{liang-etal-2024-encouraging}\end{tabular} & 
\begin{tabular}[c]{@{}c@{}}\textbf{\framework}\end{tabular} \\
\midrule
\textbf{Core Mechanism} & 
\begin{tabular}[c]{@{}c@{}}Answer\\marginalization\\(often majority vote)\end{tabular} & 
\begin{tabular}[c]{@{}c@{}}Arbitrary graph\\topology operations\\(Aggregate, Refine)\end{tabular} & 
\begin{tabular}[c]{@{}c@{}}Layered agent\\collaboration\\and synthesis\end{tabular} & 
\begin{tabular}[c]{@{}c@{}}Multi-agent debate\\with judge\end{tabular} & 
\begin{tabular}[c]{@{}c@{}}Quantitative Bipolar\\Argumentation (QBAF)\\+ SCM\end{tabular} \\
\midrule
\textbf{Aggregation} & 
\begin{tabular}[c]{@{}c@{}}Statistical\\(Frequency-based)\end{tabular} & 
\begin{tabular}[c]{@{}c@{}}Operator-based\\graph transforms\\(often LLM-mediated)\\+ scoring/pruning\end{tabular} &
\begin{tabular}[c]{@{}c@{}}Generative synthesis\\(LLM prompt)\end{tabular} & 
\begin{tabular}[c]{@{}c@{}}Judge-mediated\\debate synthesis\end{tabular} & 
\begin{tabular}[c]{@{}c@{}}Formal semantics\end{tabular} \\
\midrule
\textbf{Interpretability} & 
\begin{tabular}[c]{@{}c@{}}Low\\(Opaque path selection)\end{tabular} & 
\begin{tabular}[c]{@{}c@{}}Medium\\(Visible workflow)\end{tabular} & 
\begin{tabular}[c]{@{}c@{}}Medium\\(Visible workflow)\end{tabular} & 
\begin{tabular}[c]{@{}c@{}}Medium\\(Visible debate\\transcript)\end{tabular} & 
\begin{tabular}[c]{@{}c@{}}\textbf{High}\\(Explicit Support/Attack\\structure \& Base Scores)\end{tabular} \\
\midrule
\textbf{Causal Insight} & 
\begin{tabular}[c]{@{}c@{}}No explicit\\causal model\end{tabular} &
\begin{tabular}[c]{@{}c@{}}No explicit\\causal model\end{tabular} & 
\begin{tabular}[c]{@{}c@{}}No explicit\\causal model\end{tabular} & 
\begin{tabular}[c]{@{}c@{}}No explicit\\causal model\end{tabular} & 
\begin{tabular}[c]{@{}c@{}}\textbf{Supported}\\(Edge-local interventions\\and Counterfactuals)\end{tabular} \\
\midrule
\textbf{Redundancy} & 
\begin{tabular}[c]{@{}c@{}}High\\(Major limitation,\\linear cost scaling)\end{tabular} & 
\begin{tabular}[c]{@{}c@{}}Medium\\(Mitigated by\\pruning/selection)\end{tabular} & 
\begin{tabular}[c]{@{}c@{}}Medium\\(Layered\\redundancy)\end{tabular} & 
\begin{tabular}[c]{@{}c@{}}Medium\\(Mitigated by\\debate dynamics)\end{tabular} & 
\begin{tabular}[c]{@{}c@{}}\textbf{Low}\\(Controlled via Contextual\\Orthogonality Pruning)\end{tabular} \\
\midrule
\textbf{Corrective Action} & 
\begin{tabular}[c]{@{}c@{}}No explicit\\correction mechanism\end{tabular} & 
\begin{tabular}[c]{@{}c@{}}Feedback loops\\(Generative)\end{tabular} & 
\begin{tabular}[c]{@{}c@{}}Iterative refinement\\(layered synthesis)\end{tabular} &
\begin{tabular}[c]{@{}c@{}}Judge-mediated\\resolution\end{tabular} &
\begin{tabular}[c]{@{}c@{}}Observation-Aligned\\Counterfactual\\Override\end{tabular} \\
\bottomrule
\end{tabular}%
}
\end{table}

\paragraph{Summary} Existing approaches typically involve trade-offs between diversity, structure, and the ability to diagnose problems. Sampling-based methods like Self-Consistency add diversity through multiple attempts but remain opaque about why particular answers won. Multi-agent debate adds rich interaction but leaves only conversation transcripts that are difficult to analyze systematically.

\framework unifies these desirable properties by: (1) compiling diverse expert perspectives into a typed argument graph with explicit support and attack relationships, (2) evaluating this graph using deterministic mathematical rules rather than another LLM call, and (3) enabling counterfactual analysis through the SCM framework, along with cost-regularized overrides when needed. The result is an artifact that supports both competitive decision quality and principled, localized diagnosis—capabilities that are difficult to obtain from purely sampling-based methods or purely prompt-mediated aggregation.
\section{Extended Preliminaries and Proofs.}
\label{app:prelim}

This section collects the formal preliminaries and conventions used throughout the technical development and the complete proofs to serve as an extension to Sections~\ref{sec:prelim} and~\ref{sec:argora} in the main paper.

\subsection{Abstract arguments, QBAFs, rooted trees, and neighborhood notation}
\label{app:prelim-qbaf}

\begin{definition}[Argument]
Generally, in abstract argumentation, no
internal structure is enforced in an argument. As such, in \framework, a single \textit{argument} is defined to be a free-form textual model output.
\end{definition}

\begin{definition}[QBAF (\textit{recall})]
\label{def:app-qbaf}
A \emph{quantitative bipolar argumentation framework (QBAF)} is a tuple
\[
  Q \;=\; \langle \Acal, R^{-}, R^{+}, w \rangle,
\]
where $\Acal$ is a finite set of arguments,
$R^{-} \subseteq \Acal \times \Acal$ is the (directed) \emph{attack} relation,
$R^{+} \subseteq \Acal \times \Acal$ is the (directed) \emph{support} relation,
and $w : \Acal \to [0,1]$ assigns each $a \in \Acal$ a base score.
We write $R:=R^{-}\cup R^{+}$.
\end{definition}

\begin{definition}[Rooted directed tree QBAF (\textit{recall})]
\label{def:app-rooted-tree}
A QBAF $Q=\langle \Acal,R^{-},R^{+},w\rangle$ is a \emph{rooted tree} (with root
$m\in\Acal$) if $(\Acal,R)$ is a finite directed tree with unique root $m$ and
each non-root node has a unique parent. All edges are oriented
\emph{child$\to$parent} as in Def.~\ref{def:parent-children}. For any non-root
$a\in\Acal$, its unique parent is $\pa(a)$, and for any $a\in\Acal$, its children are
\[
  \ch(a)\;:=\;\{\,c\in\Acal : (c\!\to\!a)\in R\,\}.
\]
\end{definition}

\paragraph{Polarity signs and vectors.}
For any edge $(c\!\to\!a)\in R$, define
\[
  \epsilon(c,a) \;:=\;
  \begin{cases}
    +1, & (c\!\to\!a)\in R^{+},\\
    -1, & (c\!\to\!a)\in R^{-}.
  \end{cases}
\]
If $\ch(a)=\{c_1,\dots,c_n\}$ is any fixed ordering, define the polarity vector
\[
  \pi(a)\;:=\;(\epsilon(c_1,a),\dots,\epsilon(c_n,a))\in\{-1,+1\}^n.
\]
(For leaves, $n=0$.)

\subsection{Gradual semantics and modular semantics}
\label{app:prelim-semantics}

\begin{definition}[Gradual semantics]
\label{def:gradual-semantics}
A (well-defined) gradual semantics assigns to every QBAF
$Q = \langle \Acal, R^{-}, R^{+}, w \rangle$
a \emph{strength function}
\[
  \sigma : \Acal \to [0,1],
  \qquad
  a \;\mapsto\; \sigma(a),
\]
which gives the final acceptability degree of each argument.
\end{definition}

We use the modular view of quantitative semantics~\citep{Rago2016DFQuAD,Potyka2018CDS,BaroniRagoToni18,MossakowskiNeuhaus2018}.

\begin{definition}[Modular update equation (\textit{recall})]
\label{def:app-modular-update}
Fix a modular pair $(\alpha,\iota)$ as in Def.~\ref{def:modular-semantics}.
For any rooted-tree QBAF $Q=\langle \Acal,R^{-},R^{+},w\rangle$ and any $a\in\Acal$
with $\ch(a)=\{c_1,\dots,c_n\}$ (any fixed ordering) and polarity vector $\pi(a)$,
a strength function $\sigma:\Acal\to[0,1]$ is \emph{admissible} if it satisfies
\begin{equation}
\label{eq:app-modular-equation}
  \sigma(a)
  \;=\;
  \iota_{w(a)}\!\left(
    \alpha_{\pi(a)}\bigl(\sigma(c_1),\dots,\sigma(c_n)\bigr)
  \right).
\end{equation}
In the leaf case ($n=0$), we assume the semantics are normalized so that
$\iota_{w(a)}(\alpha_{\pi(a)}(\,)) = w(a)$, hence $\sigma(a)=w(a)$.
\end{definition}

\paragraph{Common modular building blocks.}
For completeness, Table~\ref{tab:modular-building-blocks} lists common choices of
aggregation and influence functions used in the literature and in the implementation of \framework.
\begin{table}[t]
\centering
\caption{Building blocks of modular semantics (Def.~\ref{def:modular-semantics}).
Aggregation functions summarize the effects of supporters and attackers;
influence functions transform the aggregated value into a final strength.}
\label{tab:modular-building-blocks}
\small
\renewcommand{\arraystretch}{1.3}
\setlength{\tabcolsep}{6pt}

% ----------------- Aggregation block -----------------
\begin{tabularx}{0.65\linewidth}{@{}l X@{}}
\multicolumn{2}{c}{\textbf{Aggregation functions $\alpha_\pi(s)$}}\\
\toprule
\textbf{Sum} &
$\displaystyle \alpha^\Sigma_\pi(s)=\sum_{i=1}^n \pi_i\, s_i$ \\
\textbf{Product} &
$\displaystyle \alpha^\Pi_\pi(s)=\prod_{i: \pi_i=-1}(1-s_i)-\prod_{i: \pi_i=1}(1-s_i)$ \\
\textbf{Top} &
$\displaystyle \alpha^{\max}_\pi(s)=M_\pi(s)-M_{-\pi}(s)$,\;
$\displaystyle M_\pi(s)=\max\{0,\pi_i s_i\}$ \\
\bottomrule
\end{tabularx}

\vspace{0.35em}

% ----------------- Influence block -----------------
\begin{tabularx}{0.65\linewidth}{@{}l X@{}}
\multicolumn{2}{c}{\textbf{Influence functions $\iota_w(s)$}}\\
\toprule
\textbf{Linear}$(\kappa)$ &
$\displaystyle \iota_w^{\ell}(s)=
w-\frac{w}{\kappa}\max\{0,-s\}+\frac{1-w}{\kappa}\max\{0,s\}$ \\
\textbf{Euler-based} &
$\displaystyle \iota_w^{e}(s)=1-\frac{1-w^2}{1+w\,\exp(s)}$ \\
\textbf{$p$-Max}$(\kappa)$ &
$\displaystyle \iota_w^{p}(s)=
w - w\,h_p(-s/\kappa) + (1-w)\,h_p(s/\kappa)$ \\
\bottomrule
\end{tabularx}

\vspace{0.25em}

% ----------------- Parameters / auxiliary functions -----------------
\begin{tabularx}{0.65\linewidth}{@{}l X@{}}
\multicolumn{2}{@{}X@{}}{\footnotesize
\emph{Parameters / auxiliary functions.}
$\kappa>0$ is a \emph{conservativeness} (damping) parameter that scales the aggregated input:
larger $\kappa$ makes updates more conservative (changes stay closer to $w$).
For Linear$(\kappa)$ we assume the aggregation output lies in $[-\kappa,\kappa]$
(e.g., by design of $\alpha_\pi$ or by clipping), ensuring $\iota_w^{\ell}(s)\in[0,1]$.
For $p$-Max, $p\in\{1,2,\dots\}$ and
$\displaystyle h_p(x)=\frac{\max\{0,x\}^{p}}{1+\max\{0,x\}^{p}}$.} \\
\end{tabularx}

\vspace{0.35em}

% ----------------- Pairings block -----------------
\begin{tabularx}{0.65\linewidth}{@{}l X@{}}
\multicolumn{2}{c}{\textbf{Common modular pairings}}\\
\toprule
DF\mbox{-}QuAD$(\kappa)$~\citep{Rago2016DFQuAD} & Product $+$ Linear$(\kappa)$ \\
Euler/REB~\citep{WBAG} & Sum $+$ Euler-based \\
QE$(\kappa)$~\citep{Potyka2018CDS} & Sum $+$ 2-Max$(\kappa)$ \\
SD\mbox{-}DF\mbox{-}QuAD$(\kappa)$~\citep{Kampik2024ContributionFunctions} & Product $+$ 1-Max$(\kappa)$ \\
EBT~\citep{Kampik2024ContributionFunctions} & Top $+$ Euler-based \\
\bottomrule
\end{tabularx}
\end{table}

\begin{assumption}[Permutation invariance of the modular aggregator]
\label{assump:app-perm-inv}
For any $a\in\Acal$ with $\ch(a)=\{c_1,\dots,c_n\}$, let
$\pi(a)=(\epsilon(c_1,a),\dots,\epsilon(c_n,a))\in\{-1,+1\}^n$ be the polarity
vector under some ordering.
We assume that for every permutation $\tau$ of $\{1,\dots,n\}$ and any
$x_1,\dots,x_n\in[0,1]$,
\[
  \alpha_{(\pi_1,\dots,\pi_n)}(x_1,\dots,x_n)
  \;=\;
  \alpha_{(\pi_{\tau(1)},\dots,\pi_{\tau(n)})}(x_{\tau(1)},\dots,x_{\tau(n)}).
\]
Equivalently, $\alpha$ depends only on the multiset $\{(\pi_i,x_i)\}_{i=1}^n$ and
not on the chosen ordering of children.
\end{assumption}

\begin{remark}[Order-independence of the modular equation]
\label{rem:app-order-independence}
Assumption~\ref{assump:app-perm-inv} implies that the right-hand side of
\eqref{eq:app-modular-equation} is independent of how $\ch(a)$ is ordered.
Thus the phrase ``any fixed ordering'' in Def.~\ref{def:app-modular-update} is
without loss of generality.
\end{remark}

\subsection{Structural causal models and interventions}
\label{app:prelim-scm}

\begin{definition}[Deterministic SCM]
\label{def:app-scm}
A deterministic structural causal model (SCM) is a tuple
$\mathcal{M}=(\mathbf{V},\mathbf{U},\mathcal{F})$, where:
\begin{itemize}
  \item $\mathbf{V}$ is a finite set of endogenous variables;
  \item $\mathbf{U}$ is a set of exogenous variables; and
  \item $\mathcal{F}=\{f_v : v\in\mathbf{V}\}$ is a family of structural assignments
  \[
    v \;=\; f_v\!\bigl(u_v,\,\pa_{\mathcal{M}}(v)\bigr),
  \]
  where $u_v\in\mathbf{U}$ denotes the exogenous input associated with $v$ and
  $\pa_{\mathcal{M}}(v)\subseteq \mathbf{V}$ is the set of endogenous parents of $v$
  in the induced causal graph.
\end{itemize}
In a deterministic SCM, the structural assignments contain no stochastic noise
terms, and (when the induced graph is acyclic) the values of all variables in
$\mathbf{V}$ are uniquely determined by the equations in $\mathcal{F}$ under fixed
exogenous values.
\end{definition}

We adapt the conventions for SCM interventions laid out by
\citet{pmlr-v213-massidda23a} for the following definitions.

\begin{definition}[Intervention]
\label{def:app-intervention}
Let $\mathcal{M}=(\mathbf{V},\mathbf{U},\mathcal{F})$ be a deterministic SCM with
$\mathcal{F}=\{f_v:v\in\mathbf{V}\}$ and equations
\[
  v \;=\; f_v\bigl(u_v,\pa_{\mathcal{M}}(v)\bigr).
\]
An \emph{intervention} on a subset $W\subseteq\mathbf{V}$ is specified by replacement
mechanisms $\{g_w:w\in W\}$ (where each $g_w$ is a function of $(u_w,\pa_{\mathcal{M}}(w))$)
and yields the intervened model
\[
  \mathcal{M}_{\do(W\leftarrow g)}
  :=(\mathbf{V},\mathbf{U},\mathcal{F}_{\do(W\leftarrow g)}),
  \qquad
  \mathcal{F}_{\do(W\leftarrow g)}
  :=(\mathcal{F}\setminus\{f_w:w\in W\})\cup\{g_w:w\in W\}.
\]
\end{definition}

\begin{definition}[Soft intervention]
\label{def:app-soft-intervention}
An intervention $\do(W\leftarrow g)$ is \emph{soft} if each replacement mechanism
$g_w$ for $w\in W$ is allowed to depend only on $(u_w,\pa_{\mathcal{M}}(w))$ and may
ignore any subset of its original inputs (but cannot depend on any variable
outside $\{u_w\}\cup\pa_{\mathcal{M}}(w)$).
\end{definition}

In \framework, we do not aim to \emph{force} an argument node to a fixed value; rather,
we ask a mechanistic counterfactual question: what changes if a \emph{single} influence
edge is removed? This naturally affects only one structural equation while keeping
all other mechanisms intact, which is consistent with soft interventions.

\paragraph{Edge deletion as restricted-mechanism replacement.}
Deleting an influence edge $(x\!\to\!p)$ in the rooted QBAF corresponds, in the induced
SCM $\mathcal{M}=(\mathbf{V},\mathbf{U},\mathcal{F})$, to replacing only the mechanism
$f_p$ by a restricted mechanism $f'_p$ that omits the input $v_x$:
\[
  v_p \;=\; f_p\bigl(u_p,\pa_{\mathcal{M}}(v_p)\bigr)
  \quad\leadsto\quad
  v_p \;=\; f'_p\bigl(u_p,\pa_{\mathcal{M}}(v_p)\setminus\{v_x\}\bigr),
\]
leaving all other mechanisms unchanged.

\subsection{Structural conditions for well-posed evaluation and SCM casting}
\label{app:scm-casting}

The following conditions are referenced in Prop.~\ref{prop:eval-unique-scm}. This set of assumptions is used to
formalize the restricted graph class and invariances assumed when we interpret QBAF
evaluation as a deterministic SCM and when we perform edge-local edits.

\begin{assumption}[Conditions for SCM casting under edge edits]
\label{assump:app-scm-casting}
Fix a per-main QBAF $Q_m=\langle \Acal_m,R^{-}_m,R^{+}_m,w_m\rangle$ and write
$R_m:=R^{-}_m\cup R^{+}_m$.
\begin{enumerate}
  \item (\textbf{Rooted directed tree}) $Q_m$ is a rooted directed tree with root
  $m$ in the sense of Def.~\ref{def:app-rooted-tree}; in particular, every non-root
  argument has a unique parent and edges are oriented child$\to$parent.
  \item (\textbf{Node-local modular update rule}) Evaluation uses a fixed modular
  pair $(\alpha,\iota)$ (Def.~\ref{def:modular-semantics}). Any admissible strength
  function $\sigma:\Acal_m\to[0,1]$ must satisfy, for every $a\in\Acal_m$,
  the modular update equation \eqref{eq:app-modular-equation}.
  \item (\textbf{Mechanism invariance under edge edits}) Under edge addition/deletion
  in $R_m$, we keep fixed: (i) all base scores $w_m(a)$ and (ii) the modular
  mechanisms $(\alpha,\iota)$ (equivalently, the induced node-local structural
  functions). Only the adjacency structure (and thus which child inputs are
  provided to $\alpha_{\pi(a)}$) is allowed to change.
\end{enumerate}
\end{assumption}

\subsection{Proof of Proposition~\ref{prop:eval-unique-scm} (A1 and A2)}
\label{app:proof-prop-eval-unique-scm}

\begin{proof}[Proof of Proposition~\ref{prop:eval-unique-scm}]
We prove the two claims stated in the proposition.

\paragraph{(A1) Well-posed evaluation (existence and uniqueness of $\sigma$).}
Fix $Q_m=\langle \Acal_m,R_m^{-},R_m^{+},w_m\rangle$ and a modular pair
$(\alpha,\iota)$ as in Assumption~\ref{assump:app-scm-casting}.
Because $Q_m$ is a finite rooted tree oriented child$\to$parent, we can evaluate
strengths bottom-up.

Define the \emph{height} of a node $a\in\Acal_m$ recursively by
\[
  \mathrm{ht}(a)
  \;:=\;
  \begin{cases}
    0, & \ch(a)=\varnothing,\\
    1+\max_{c\in\ch(a)} \mathrm{ht}(c), & \text{otherwise}.
  \end{cases}
\]
Since $Q_m$ is finite and acyclic, $\mathrm{ht}(a)<\infty$ for all $a$.

\emph{Existence.}
We construct $\sigma:\Acal_m\to[0,1]$ by induction on height.
For each leaf $a$ (i.e., $\mathrm{ht}(a)=0$), set
\[
  \sigma(a)
  \;:=\;
  \iota_{w_m(a)}\!\Bigl(\alpha_{\pi(a)}(\,)\Bigr),
\]
where $\alpha_{\pi(a)}(\,)$ denotes the $n=0$ (empty-input) case.
Now assume $\sigma(c)$ has been defined for all nodes $c$ with
$\mathrm{ht}(c)\le k$. Let $a$ be any node with $\mathrm{ht}(a)=k+1$ and write
$\ch(a)=\{c_1,\dots,c_n\}$. Then each child satisfies $\mathrm{ht}(c_i)\le k$, so
$\sigma(c_i)$ is already defined, and we may set
\[
  \sigma(a)
  \;:=\;
  \iota_{w_m(a)}\!\Bigl(
    \alpha_{\pi(a)}\bigl(\sigma(c_1),\dots,\sigma(c_n)\bigr)
  \Bigr).
\]
Proceeding until the maximum height yields a total map $\sigma$ satisfying the
modular equations \eqref{eq:app-modular-equation} at every node.

\emph{Uniqueness.}
Suppose $\tilde\sigma:\Acal_m\to[0,1]$ is any other function satisfying
\eqref{eq:app-modular-equation} for all nodes. We show $\tilde\sigma=\sigma$ by
induction on height. For a leaf $a$, both $\sigma(a)$ and $\tilde\sigma(a)$ are
forced by the same leaf equation (the $n=0$ case), hence
$\tilde\sigma(a)=\sigma(a)$. Assume $\tilde\sigma(c)=\sigma(c)$ for all
$\mathrm{ht}(c)\le k$. Let $a$ have $\mathrm{ht}(a)=k+1$ with children
$\{c_1,\dots,c_n\}$. Applying \eqref{eq:app-modular-equation} to $\tilde\sigma$
and using the induction hypothesis on each child yields
\[
  \tilde\sigma(a)
  =
  \iota_{w_m(a)}\!\Bigl(
    \alpha_{\pi(a)}\bigl(\tilde\sigma(c_1),\dots,\tilde\sigma(c_n)\bigr)
  \Bigr)
  =
  \iota_{w_m(a)}\!\Bigl(
    \alpha_{\pi(a)}\bigl(\sigma(c_1),\dots,\sigma(c_n)\bigr)
  \Bigr)
  =
  \sigma(a).
\]
Thus $\tilde\sigma(a)=\sigma(a)$ for all nodes, proving uniqueness. In other words, if we fix a modular semantics $(\alpha, \iota)$ as specified in Def.~\ref{def:modular-semantics}, then the induced strength function $\sigma$ is well-defined and unique.

\paragraph{(A2) Casting the evaluation system as a deterministic SCM.}
We now construct the SCM satisfying the proposition.
Define endogenous variables
\[
  \mathbf{V} \;:=\; \{\, v_a : a\in\Acal_m \,\},
\]
and introduce exogenous inputs encoding base scores via
\[
  \mathbf{U} \;:=\; \{\, u_a : a\in\Acal_m \,\},
  \qquad
  \text{with fixed instantiation } u_a := w_m(a).
\]
For each node $a\in\Acal_m$, let $\ch(a)=\{c_1,\dots,c_n\}$ be any fixed ordering
(possibly $n=0$), and let $\pi(a)\in\{-1,+1\}^n$ be the corresponding polarity vector.
Define the node-local structural assignment
\begin{equation}
\label{eq:app-scm-eq}
  v_a
  \;=\;
  f_{v_a}\!\bigl(u_a,\pa_{\mathcal{M}_m}(v_a)\bigr)
  \;:=\;
  \iota_{u_a}\!\Bigl(
    \alpha_{\pi(a)}(v_{c_1},\dots,v_{c_n})
  \Bigr),
\end{equation}
where the endogenous parent set is
\[
  \pa_{\mathcal{M}_m}(v_a)
  \;:=\;
  \{\, v_c \in \mathbf{V} : c \in \ch(a) \,\}.
\]
Equation \eqref{eq:app-scm-eq} is exactly the modular update equation
\eqref{eq:app-modular-equation} under the identification $u_a=w_m(a)$ and
$v_a=\sigma(a)$.

Because $Q_m$ is a rooted tree with edges oriented child$\to$parent, the induced
causal graph on $\mathbf{V}$ has edges $v_c\to v_a$ whenever $(c\!\to\!a)\in R_m$
and is acyclic. Hence the deterministic SCM
$\mathcal{M}_m=(\mathbf{V},\mathbf{U},\mathcal{F})$ with
$\mathcal{F}=\{f_{v_a}:a\in\Acal_m\}$ is well-defined, and its unique solution
under fixed exogenous values coincides with the unique strength function $\sigma$
from (A1). In particular, the root variable $v_m$ equals the evaluated root
strength $\sigma(m)$.

Finally, Assumption~\ref{assump:app-scm-casting}(3) ensures that when edges are
added/removed (e.g., for edge-local interventions), the mechanisms $f_{v_a}$ and
the exogenous base scores are held fixed: only the parent/child incidence (and thus
the input list to $\alpha_{\pi(a)}$) changes. This is precisely the invariance
required to interpret such graph edits as causal interventions within the SCM.
\end{proof}

\begin{lemma}[Well-posedness under edge deletions]
\label{lem:app-wellposed-edge-deletions}
Let $Q=\langle \Acal, R^{-}, R^{+}, w\rangle$ be a rooted QBAF tree as in
Def.~\ref{def:app-rooted-tree}, and fix a modular pair $(\alpha,\iota)$.
Let $R'\subseteq R$ be any subset of edges (obtained by deleting edges), and set
$R'^{-}:=R'\cap R^{-}$, $R'^{+}:=R'\cap R^{+}$, and
\[
  Q' \;:=\; \langle \Acal, R'^{-}, R'^{+}, w\rangle.
\]
Then the modular system \eqref{eq:app-modular-equation} (with child sets induced by
$R'$) admits a unique strength function $\sigma':\Acal\to[0,1]$.
If Assumption~\ref{assump:app-perm-inv} holds, this $\sigma'$ is independent of
the chosen orderings of the children in $Q'$.

In particular, the edge-local deletions used in the main paper (deleting a single
edge $(x\!\to\!p)$) yield a well-defined and unique counterfactual root strength.
\end{lemma}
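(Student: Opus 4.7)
The plan is to reduce the lemma to the same bottom-up induction used in the proof of Proposition~\ref{prop:eval-unique-scm}(A1), after observing that edge deletion preserves the structural properties that make that induction work. First I would note that since the original $(\Acal,R)$ is a finite rooted tree oriented child$\to$parent, the subgraph $(\Acal,R')$ is finite and acyclic (a forest whose connected components are rooted subtrees: each node that had its outgoing edge to its parent deleted becomes the root of its own component). Crucially, no new child relation is introduced by edge deletion: for every $a\in\Acal$, the child set $\ch'(a)$ in $Q'$ is a subset of $\ch(a)$ in $Q$, and the polarity signs on retained edges are unchanged. Thus Assumption~\ref{assump:app-scm-casting}(3) applies: the base scores $w$ and the modular mechanisms $(\alpha,\iota)$ are held fixed, and only the input lists fed into $\alpha_{\pi'(a)}$ at each node are modified.

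Next I would define the height function $\mathrm{ht}'$ on $(\Acal,R')$ exactly as in the proof of Proposition~\ref{prop:eval-unique-scm}, which is well defined because $(\Acal,R')$ is finite and acyclic. Proceeding by induction on $\mathrm{ht}'$, I would (i) set $\sigma'(a):=\iota_{w(a)}(\alpha_{\pi'(a)}(\,))$ at leaves of $Q'$ (including nodes newly orphaned by deletions, whose leaf equation collapses to $\sigma'(a)=w(a)$ by the normalization convention in Def.~\ref{def:app-modular-update}), and (ii) at an internal node $a$ with $\ch'(a)=\{c_1,\dots,c_k\}$, define $\sigma'(a):=\iota_{w(a)}\!\bigl(\alpha_{\pi'(a)}(\sigma'(c_1),\dots,\sigma'(c_k))\bigr)$. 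This establishes existence. Uniqueness follows by the identical induction: any admissible $\tilde\sigma'$ must agree with $\sigma'$ at leaves (forced by the same leaf equation) and, inductively, at every internal node (forced by \eqref{eq:app-modular-equation} applied to the common child values).

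For the order-independence claim, I would invoke Assumption~\ref{assump:app-perm-inv}: at each node $a$, the right-hand side of the modular equation in $Q'$ depends only on the multiset $\{(\epsilon(c_i,a),\sigma'(c_i)):c_i\in\ch'(a)\}$, so by Remark~\ref{rem:app-order-independence} the constructed $\sigma'$ does not depend on the chosen ordering of $\ch'(a)$. The single-edge deletion case $(x\!\to\!p)$ used in Def.~\ref{def:edge-local-intervention} is then an immediate instance with $R':=R\setminus\{(x\!\to\!p)\}$, giving a unique counterfactual root strength $\sigma^{\ominus x}(m)$.

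The main obstacle is essentially bookkeeping rather than any substantive new argument: one must be careful that nodes orphaned by an edge deletion are correctly handled as leaves of $Q'$ (with the empty-input leaf convention), and that polarity vectors are read off from the restricted child set $\ch'(a)$ rather than the original $\ch(a)$. Once those conventions are spelled out, the proof is a direct transcription of Proposition~\ref{prop:eval-unique-scm}(A1) to the edge-pruned graph.
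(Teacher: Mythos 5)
Your proposal follows essentially the same route as the paper's proof: observe that deleting edges from a finite rooted tree yields a finite acyclic directed forest with edges still oriented child$\to$parent, define the height function $\mathrm{ht}'$ on the pruned graph, establish existence and uniqueness of $\sigma'$ by the same bottom-up induction used in Proposition~\ref{prop:eval-unique-scm}(A1), and derive order-independence from Assumption~\ref{assump:app-perm-inv}. The additional bookkeeping you flag (orphaned nodes becoming leaves of $Q'$, polarity vectors read off $\ch'(a)$ rather than $\ch(a)$) is correct and matches the paper's treatment, so the argument is sound and no gap remains.
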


\begin{proof}
Since $(\Acal,R)$ is a finite directed tree, any subgraph $(\Acal,R')$ obtained by
deleting edges is still finite and acyclic, and every node has at most one parent.
Hence $(\Acal,R')$ is a directed forest (possibly disconnected), with edges still
oriented child$\to$parent.

Write the children in $Q'$ as
\[
  \ch'(a) \;:=\;\{\,c\in\Acal : (c\!\to\!a)\in R'\,\}.
\]
Define the height $\mathrm{ht}'(a)$ recursively by
\[
  \mathrm{ht}'(a)
  \;:=\;
  \begin{cases}
    0, & \ch'(a)=\varnothing,\\
    1+\max_{c\in\ch'(a)} \mathrm{ht}'(c), & \text{otherwise}.
  \end{cases}
\]
Acyclicity and finiteness imply $\mathrm{ht}'(a)<\infty$ for all $a$.

\emph{Existence.}
Construct $\sigma'$ by induction on $\mathrm{ht}'$ exactly as in the proof of
Proposition~\ref{prop:eval-unique-scm}(A1): set leaves by the (empty-input) modular
equation, and then evaluate each node once all its children have been assigned.

\emph{Uniqueness.}
Let $\sigma_1',\sigma_2'$ satisfy the modular equations on $Q'$.
By induction on height: they coincide on all leaves (height $0$). If they coincide
on all nodes of height $\le k$, then for any node $a$ with height $k+1$, all
children $c\in\ch'(a)$ have height $\le k$, hence $\sigma_1'(c)=\sigma_2'(c)$, and
the modular update equation forces $\sigma_1'(a)=\sigma_2'(a)$.
Thus $\sigma_1'=\sigma_2'$ on all of $\Acal$.

Finally, if Assumption~\ref{assump:app-perm-inv} holds, the value of each update is
independent of the chosen ordering of $\ch'(a)$, so $\sigma'$ is order-independent.
\end{proof}

\begin{remark}[Irrelevance of disconnected components for the root]
\label{rem:root-component}
Let $Q'=\langle A, R'^{-}, R'^{+}, w\rangle$ be obtained from a rooted tree $Q$
by deleting edges, so $(A,R')$ is a directed forest (Lemma~\ref{lem:app-wellposed-edge-deletions}).
Fix the root $m$ of the original tree and define the root-relevant set
\[
A^{\uparrow}_m := \{a\in A : \text{there exists a directed path } a\to\cdots\to m \text{ in } (A,R')\}.
\]
Let $Q'|_{A^{\uparrow}_m}$ denote the induced sub-QBAF on $A^{\uparrow}_m$
(with inherited edges and weights).
Then the evaluated root strength computed on $Q'$ equals that computed on
$Q'|_{A^{\uparrow}_m}$; in particular, any component disconnected from $m$ is
irrelevant to $\sigma'(m)$.
For an edge-local deletion $(x\to p)$, the detached subtree rooted at $x$ lies
outside $A^{\uparrow}_m$ and therefore has no effect on the counterfactual root
strength $\sigma^{\ominus x}(m)$.
\end{remark}
\section{Extended Methodology for \framework}
\label{app:arg-building}

This section provides a detailed specification of the argument generation pipeline and QBAF construction procedure in \framework. We describe the main pipeline stages algorithmically and discuss the design rationale.

\subsection{Pre-discussion Initialization}
\label{subsec:pre-discussion-init}

\framework takes as input a user-defined topic $t$. Depending on the subject, $t$ may bundle multiple sub-questions, implicit assumptions, open-ended goals, or present multiple choices. We therefore establish two anchors before any discussion takes place:

\begin{itemize}
\item \textbf{Main task $s_M$}: The main task converts an open topic into a single, well-formed \emph{imperative}—the concrete decision or query that \framework must resolve.

\item \textbf{Key elements $K$}: Key elements are the salient factors, entities, and conditions relevant to the user-provided topic and the main task. They serve as a \emph{shared artifact} for evidence generation, retrieval, and critique, and as coverage targets for the discussion.
\end{itemize}

These two anchors guide the participating experts on \emph{what} needs to be decided and \emph{which} dimensions matter.

\paragraph{Main task extraction.}
The main task $s_M$ is extracted from the user topic $t$ via a prompted LLM call to the Orchestrator. The extraction prompt (see Prompt~\ref{prompt:main-task-ext}) instructs the Orchestrator to produce a single-sentence, well-formed imperative that specifies the core objective of $t$.

\textbf{Example (topic and main task):}
\begin{itemize}
\item \emph{Topic $t$}: ``Should our team adopt quantization for a 70B LLM serving pipeline to cut latency while maintaining accuracy?''

\item \emph{Main task $s_M$}: ``Decide whether to adopt quantization for the 70B LLM serving pipeline and justify the decision with expected latency–accuracy trade-offs.''
\end{itemize}

The main task clarifies that the output should be a \emph{decision} with \emph{justification}, not merely a comparison or analysis, which anchors the discussion around a concrete resolution.

\paragraph{Key element extraction.}
The key-element set $K$ is extracted via another LLM call to the Orchestrator, conditioned on both $t$ and $s_M$ (see Prompt~\ref{prompt:key-element-ext}). The extraction prompt instructs the Orchestrator to identify a finite set of intended semantics—key entities, factors, conditions, or constraints—that are directly relevant to addressing $(t, s_M)$. These elements serve as explicit coverage targets: experts are encouraged to ground their arguments in these dimensions, and the Orchestrator uses them when assessing argument relevance during base score assignment (see Prompt~\ref{prompt:prior-str-gen} and Appendix~\ref{subsec:prior-strength}).

\textbf{Example (key elements for the above topic and main task):}
\[
K = \{\text{``70B model family''}, \text{``target latency budget''}, \text{``throughput''}, \text{``accuracy metric''}\}
\]

\paragraph{Orchestrator and expert roles.}
\framework uses one \textbf{Orchestrator} model and a user-defined number of \textbf{expert} models. The Orchestrator is responsible for:
\begin{enumerate}
\item Extracting $(s_M, K)$ from the user topic $t$ (see Prompts~\ref{prompt:main-task-ext} and~\ref{prompt:key-element-ext})
\item Selecting and assigning domain-specific roles to each expert in the expert set $E$ (see Prompt~\ref{prompt:exp-sel})
\item Generating tailored, role-conditioned prompts for each expert at the start of each discussion round (see Prompt~\ref{prompt:exp-prompt-gen})
\item Mediating the discussion by collecting expert responses after each discussion round.
\item Assigning prior strengths to argument nodes (see Prompt~\ref{prompt:prior-str-gen})
\item Providing an external observational judgment for the override mechanism (Section~\ref{subsec:final-consensus-override})
\end{enumerate}

Each \textbf{expert} $e \in {E}$ is an LLM instance conditioned on a domain-specific system prompt (see Prompt~\ref{prompt:exp-sys-prompt}) that specifies its area of expertise. In each round of \framework, the Orchestrator issues a \emph{specialized prompt} to every expert $e \in E$, conditioned on the topic $t$, the main task $s_M$, and key elements $K$ in order to generate the main arguments which serve as the root argument nodes for QBAF generation. Unlike existing discussion frameworks where discussions begin with inter-expert interactions, this process is intentionally \emph{orchestrator$\leftrightarrow$expert}: experts independently produce responses that are later organized into main arguments. Only after this independent output generation do we introduce cross-expert contestation through supplementary argument generation, which allows us to capitalize on viewpoint diversity before introducing inter-expert influence.

We implement inter-expert independent output generation as a core design principle to:
\begin{enumerate}
\item \textbf{Maximize diversity.} Collective performance improves when judgments are diverse and formed \emph{independently} before aggregation, a principle established in the wisdom-of-crowds literature~\cite{Surowiecki2004, Lorenz2011PNAS} and nominal group elicitation methods~\cite{HsuSandford2007Delphi}.

\item \textbf{Avoid premature convergence.} Turn-taking multi-agent dialogues can induce conformity pressures or early anchoring effects, reducing the effective diversity of the final argument pool. With independent main argument responses, we preserve the full range of perspectives that each expert would naturally generate given only the topic and their role.

\item \textbf{Mitigate production blocking.} Sequential turn-taking can suppress idea generation relative to independent parallel production, a phenomenon known as production blocking in group decision-making research~\cite{DiehlStroebe1987, Mullen1991Meta, FishkinOverview}. Independent first-pass responses (through main argument generation) mitigate this effect.

\item \textbf{Enable system-level concurrency.} This design permits straightforward parallelism: each expert receives an independent, role-conditioned prompt and produces a reply with no shared chat history across experts. LLM prompts can be dispatched concurrently and processed in parallel, which enables near-linear speedups for argument generation along with reduced token usage from independent history tracking.
\end{enumerate}

In the remainder of this appendix, we describe the detailed construction procedures for each level of the discussion process, along with argumentation framework generation.

\subsection{Discussion Overview}

\framework constructs a hierarchical argumentation framework (graph in a form of a tree structure) where arguments are organized into levels based on their relationship to the main claims:

\begin{itemize}
    \item \textbf{Level 0 (Main Arguments):} Direct responses to the discussion topic, representing primary claims or answers.
    \item \textbf{Level 1 (First-Level Arguments):} Supporting or attacking arguments that directly address the main arguments. Each first-level argument is attributed to a specific expert, referred to as its \emph{author}.
    \item \textbf{Level 2 (Second-Level Arguments):} Review arguments from \emph{non-author} experts---those who did not write the first-level argument under review---who evaluate and respond to first-level arguments with supporting or attacking justifications.
    \item \textbf{Level 3 (Third-Level Arguments):} Targeted rebuttals where the original \emph{author} of a first-level argument responds to attacks raised against their argument at level 2, defending their position. By design, third-level arguments can only have an attack relation with respect to its parent argument (second-level argument) due to its inherent role as a targeted rebuttal.
\end{itemize}

This author/non-author distinction is required to simulate a typical discussion pipeline: experts cannot review their own arguments at level 2, which promotes critical external evaluation, while authors retain the opportunity to defend their claims at level 3.

\subsection{Main Argument Generation and Extraction}
\label{app:main-arg-gen-ext}

\begin{algorithm}[t]
\caption{Main Argument Extraction}
\label{alg:main-arg-extraction}
\begin{algorithmic}[1]
\REQUIRE Topic $t\in\Tcal$, extracted main task $s_M\in\Scal$, key elements $K\subseteq\Kcal$, expert set $E$
\ENSURE Main-argument node set $\Acal_{\mathrm{main}}\subseteq\Acal$, source expert map $\mathcal{C}:\Acal_{\mathrm{main}}\to 2^{E}$

\STATE $\tilde{\Acal}_{\mathrm{main}} \leftarrow \emptyset$
\COMMENT{Candidate (statement text, expert) pairs in $\Sigma^*\times E$}

\STATE \COMMENT{\textbf{Phase 1: Query experts in parallel}}
\FORALL{$e \in E$ \textbf{in parallel}}
    \STATE $u_e \leftarrow \textsc{QueryExpert}(e;\, t, s_M, K)$
    \COMMENT{Returns a single proposed statement text $u_e\in\Sigma^*$, see Prompt~\ref{prompt:main-arg-gen}}
    \STATE $\tilde{\Acal}_{\mathrm{main}} \leftarrow \tilde{\Acal}_{\mathrm{main}} \cup \{(u_e,e)\}$
\ENDFOR

\STATE \COMMENT{\textbf{Phase 2: Construct main argument nodes}}
\STATE $\Acal_{\mathrm{main}} \leftarrow \emptyset$
\STATE $\mathcal{C} \leftarrow \emptyset$
\COMMENT{$\mathcal{C}[m]$ stores experts supporting canonical node $m$}

\FOR{each $(u,e)\in \tilde{\Acal}_{\mathrm{main}}$}
    \STATE $\bar m \leftarrow \textsc{FindExactMatch}\bigl(u,\, \{\lambda(m): m\in\Acal_{\mathrm{main}}\}\bigr)$
    \COMMENT{Returns a node $\bar m\in\Acal_{\mathrm{main}}$ s.t. $\lambda(\bar m)$ matches $u$, or \texttt{null}}
    \IF{$\bar m = \texttt{null}$}
        \STATE $\bar m \leftarrow \textsc{MakeArg}(u)$
        \COMMENT{Creates a new argument node $\bar m\in\Acal$ with label $\lambda(\bar m)=u$}
        \STATE $\Acal_{\mathrm{main}} \leftarrow \Acal_{\mathrm{main}} \cup \{\bar m\}$
        \STATE $\mathcal{C}[\bar m] \leftarrow \emptyset$
    \ENDIF
    \STATE $\mathcal{C}[\bar m] \leftarrow \mathcal{C}[\bar m] \cup \{e\}$
\ENDFOR

\STATE \textbf{return} $\Acal_{\mathrm{main}},\, \mathcal{C}$
\end{algorithmic}
\end{algorithm}

The first phase identifies a set of \emph{main argument nodes} that serve as the roots of subsequent argumentation graphs. Given a discussion topic $t\in\Tcal$, extracted main task $s_M\in\Scal$, and a set of key elements $K\subseteq\Kcal$, we query each expert $e\in E$ in parallel to elicit a single candidate response formulated as a textual statement $u_e\in\Sigma^*$. At this stage, expert outputs are treated purely as text, without assuming any pre-existing graph structure. We provide the algorithm for main argument extraction as implemented in \framework in Algorithm~\ref{alg:main-arg-extraction}.

To ensure a well-typed argumentation framework, we explicitly distinguish between argument \emph{nodes} and their textual statements. Argument nodes are elements of the global argument set $\Acal$, while their textual statements are given by a statement-label function $\lambda:\Acal\to\Sigma^*$. Once we have all expert outputs for main arguments, we perform case-insensitive exact matching on the proposed text $u_e$ against the existing label set $\{\lambda(m): m\in\Acal_{\mathrm{main}}\}$. If there exists $m\in\Acal_{\mathrm{main}}$ whose label matches $u_e$ exactly, we treat it as another instance of the same main argument, and record $e$ as an additional source expert for $m$. Otherwise, we instantiate a new main argument node $m\in\Acal$ with label $\lambda(m)=u_e$ and add it to $\Acal_{\mathrm{main}}$. The outcome of this step is (i) a set of main argument nodes $\Acal_{\mathrm{main}}\subseteq\Acal$ and (ii) a source-expert mapping $\mathcal{C}:\Acal_{\mathrm{main}}\to 2^{E}$, where $\mathcal{C}(m)$ contains the expert (or experts) that proposed the statement $\lambda(m)$. This source-expert mapping is retained for use in subsequent phases of the pipeline. In particular, it supports the enforcement of author/non-author constraints when generating higher-level arguments: experts are prevented from reviewing their own first-level arguments at level~2, while original authors can be selectively re-invited at level~3 to produce targeted rebuttals against attacks.

We intentionally use exact string matching at this stage rather than semantic or embedding-based merging. Since main arguments function as root claims that anchor the downstream support/attack structure, we have empirically observed that aggressively collapsing contextually similar main argument statements tends to reduce the diversity of subsequent generations: once root-level variation is removed, later stages lead to a narrower discussion space and, in turn, a degradation in overall discussion quality. We therefore defer semantic similarity-based filtering to later phases (Sec.~\ref{subsec:pruning-sim}), where it is applied explicitly through contextual orthogonality pruning to encourage diversity among supporting and attacking arguments.

\subsection{Contextual Orthogonality Pruning}
\label{app:contextual-orthogonality-pruning}

\begin{algorithm}[t!]
\caption{Contextual Orthogonality Pruning}
\label{alg:co-pruning}
\begin{algorithmic}[1]
\REQUIRE Candidate set $\Acal_{\text{cand}}^{(\ell)}$, threshold $\rho_{\text{sim}}\in[0,1]$,
sentence encoder $\phi:\Sigma^*\to\R^d$
\ENSURE Pruned candidate set $\Acal_{\text{sel}}^{(\ell)}\subseteq \Acal_{\text{cand}}^{(\ell)}$

\STATE \textbf{Define} $\Sim(u,v) := \cos\bigl(\phi(u),\phi(v)\bigr)$ for $u,v \in \Sigma^*$
\COMMENT{Cosine similarity}
\STATE $\Acal_{\text{filt}}^{(\ell)} \leftarrow \emptyset$
\STATE $a_{\mathrm{fb}} \leftarrow \texttt{null};\quad \rho_{\min} \leftarrow +\infty$
\COMMENT{Fallback candidate tracking}

\STATE \COMMENT{\textbf{Stage 1: Parent-level orthogonality check}}
\FOR{each $a \in \Acal_{\text{cand}}^{(\ell)}$}
  \STATE $\rho_{\mathrm{par}}(a) \leftarrow \max_{p \in \Acal^{(\ell-1)}} \Sim\bigl(\lambda(a),\,\lambda(p)\bigr)$
  \IF{$\rho_{\mathrm{par}}(a) < \rho_{\min}$}
    \STATE $a_{\mathrm{fb}} \leftarrow a;\quad \rho_{\min} \leftarrow \rho_{\mathrm{par}}(a)$
  \ENDIF
  \IF{$\rho_{\mathrm{par}}(a) \le \rho_{\text{sim}}$}
    \STATE $\Acal_{\text{filt}}^{(\ell)} \leftarrow \Acal_{\text{filt}}^{(\ell)} \cup \{a\}$
  \ENDIF
\ENDFOR

\IF{$\Acal_{\text{filt}}^{(\ell)} = \emptyset$}
  \STATE \textbf{return} $\{a_{\mathrm{fb}}\}$ if $a_{\mathrm{fb}} \neq \texttt{null}$ else $\emptyset$
  \COMMENT{Fallback: least parent-similar}
\ENDIF

\STATE \COMMENT{\textbf{Stage 2: Sibling orthogonality check}}
\STATE Sort $\Acal_{\text{filt}}^{(\ell)}$ by $\rho_{\mathrm{par}}(\cdot)$ ascending $\to (a_1,\dots,a_k)$
\STATE $\Acal_{\text{sel}}^{(\ell)} \leftarrow \emptyset$

\FOR{$i = 1$,\dots,$k$}
  \STATE $S_{\mathrm{sib}} \leftarrow \{ a' \in \Acal_{\text{sel}}^{(\ell)} : \pa(a') = \pa(a_i) \}$
  \COMMENT{Already-selected siblings}
  \STATE $\rho_{\mathrm{sib}}(a_i) \leftarrow \max_{a' \in S_{\mathrm{sib}}} \Sim\bigl(\lambda(a_i),\,\lambda(a')\bigr)$
  \COMMENT{$0$ if $S_{\mathrm{sib}}=\emptyset$}
  \IF{$\rho_{\mathrm{sib}}(a_i) \le \rho_{\text{sim}}$}
    \STATE $\Acal_{\text{sel}}^{(\ell)} \leftarrow \Acal_{\text{sel}}^{(\ell)} \cup \{a_i\}$
  \ENDIF
\ENDFOR

\STATE \textbf{return} $\Acal_{\text{sel}}^{(\ell)}$
\end{algorithmic}
\end{algorithm}

A key design component of \framework is \emph{contextual orthogonality pruning}, an argument filtering / selection rule that prevents the argumentation graph from being dominated by
near-duplicate argument statements. In early implementations and empirical evaluations of \framework, we found that a purely greedy expansion strategy---adding all newly generated candidate arguments at
each depth---often yields an explosive growth of highly overlapping statements, and creates two potential practical issues.

First, it rapidly inflates the effective context length for each expert, which must condition on its accumulated interaction history. As the history grows, relevant details
are more likely to be truncated or diluted, degrading the quality of subsequent generation.

Second, oversaturation by contextually similar argument nodes reduces the
\emph{effective} diversity of the argumentation framework. Empirically, this can
lead to (i) strength saturation under modular semantics updates (many
near-identical supports/attacks push strengths toward $0$ or $1$), and (ii) weak
counterfactual signals (removing the edges of one of many redundant nodes produces marginal
changes), which in turn degrades the usefulness of our counterfactual explanations.

To address these issues, we prune candidates using a two-stage contextual orthogonality
test driven by embedding similarity (as presented in Alg.~\ref{alg:co-pruning}). Intuitively, Stage~1 filters out candidates
that are too similar to the already-established context in the \emph{parent node} hierarchy, and Stage~2 enforces diversity \emph{within each
parent} by greedily rejecting candidates that are too similar to already chosen
siblings. We measure contextual overlap between two statement texts $u,v\in\Sigma^*$ using
the cosine similarity of their sentence embeddings:
\[
  \Sim(u,v)
  \;:=\;
  \cos\bigl(\phi(u),\,\phi(v)\bigr)
  \in [-1,1],
\]
where $\phi:\Sigma^*\to\mathbb{R}^d$ is a sentence encoder (we use
\texttt{all-MiniLM-L6-v2}). As above (Sec.~\ref{app:main-arg-gen-ext}), $\lambda(a)$ denotes
the statement text associated with an argument node $a$.

\paragraph{Inputs.}
Fix a depth level $\ell\ge 1$ and let $\Acal_{\text{cand}}^{(\ell)}$ be the set
of candidate argument nodes proposed for insertion at level $\ell$. Each
$a\in\Acal_{\text{cand}}^{(\ell)}$ has statement text $\lambda(a)$ and a parent
identifier $\pa(a)$ (candidates with the same $\pa(\cdot)$ are siblings).

For Stage~1, we compare candidates against the \emph{parent-level context}, i.e.,
the set of already-instantiated argument nodes at level $\ell-1$. Let
$\Acal^{(\ell)}$ denote the set of argument nodes currently present at level $\ell$;
then the parent-level context is $\Acal^{(\ell-1)}$. We also define $\rho_{\text{sim}}$ as the similarity threshold that controls pruning.

\paragraph{Stage 1: parent-level orthogonality.}
For each candidate $a\in\Acal_{\text{cand}}^{(\ell)}$, we compute its maximum
similarity against the parent-level context $\Acal^{(\ell-1)}$:
\[
  \rho_{\mathrm{par}}(a)
  \;:=\;
  \max_{p\in\Acal^{(\ell-1)}} \Sim\bigl(\lambda(a),\,\lambda(p)\bigr).
\]
A candidate passes Stage~1 if $\rho_{\mathrm{par}}(a)\le \rho_{\text{sim}}$.
This test prevents adding candidates that are essentially restatements of the existing context at the parent level. A purely hard threshold can occasionally reject \emph{all} candidates. To avoid producing an empty expansion in such cases, we additionally track a fallback candidate
\[
  a_{\mathrm{fb}}
  \;:=\;
  \argmin_{a\in\Acal_{\text{cand}}^{(\ell)}}\ \rho_{\mathrm{par}}(a),
\]
i.e., the least parent-similar candidate. If Stage~1 yields no remaining candidates from pruning, we
return $\{a_{\mathrm{fb}}\}$.

\paragraph{Stage 2: sibling orthogonality.}
Among the candidates that pass Stage~1, we greedily construct a diverse subset
$\Acal_{\text{sel}}^{(\ell)}$. We first sort the Stage~1 survivors in ascending
order of $\rho_{\mathrm{par}}$, so we consider the most parent-orthogonal
candidates first. We then iterate through this ordering and accept a candidate
only if it is sufficiently orthogonal to already selected siblings under the
same parent:
\[
  \rho_{\mathrm{sib}}(a)
  \;:=\;
  \max_{a' \in \Acal_{\text{sel}}^{(\ell)}:\ \pa(a')=\pa(a)}
  \Sim\bigl(\lambda(a),\,\lambda(a')\bigr),
\]
with the convention $\rho_{\mathrm{sib}}(a)=0$ if no sibling has been selected
yet. We add $a$ to $\Acal_{\text{sel}}^{(\ell)}$ if
$\rho_{\mathrm{sib}}(a)\le \rho_{\text{sim}}$. This second test prevents the
selected set from being dominated by multiple similar argument nodes attached to the same parent.

\paragraph{Threshold behavior.}
The parameter $\rho_{\text{sim}}$ controls the diversity--coverage trade-off.
Smaller values enforce stricter orthogonality (fewer, more diverse arguments),
while larger values admit more semantically overlapping candidates (higher
coverage but greater redundancy). In our experiments we set $\rho_{\text{sim}}$ to an empirically chosen default that maintains argument diversity while preserving enough coverage to support downstream modular updates and
counterfactual explanation queries. We show the effect of different values for the similarity threshold in our ablation tests in Appendix~\ref{app:ablation}.

\subsection{First-Level Argument Generation}

\begin{algorithm}[t!]
\caption{First-Level Argument Generation}
\label{alg:first-level-gen}
\begin{algorithmic}[1]
\REQUIRE Main-argument set $\Acal_{\mathrm{main}}$, expert set $E$, source map $\mathcal{C}:\Acal_{\mathrm{main}}\to 2^{E}$,
topic $t$, task statement $s_M$, key elements $K$,
similarity threshold $\rho_{\text{sim}}$, sentence encoder $\phi$
\ENSURE For each $m\in\Acal_{\mathrm{main}}$, sets of first-level supporting and attacking arguments

\FOR{each $m\in\Acal_{\mathrm{main}}$}
  \STATE $\tilde{\Acal}_{m}^{(1)} \leftarrow \emptyset$
  \COMMENT{Level-1 candidate arguments for $m$}
  \STATE $e_{\mathrm{auth}} \leftarrow \textsc{GetPrimarySource}(m,\mathcal{C})$

  \STATE \COMMENT{\textbf{Phase 1: Query experts in parallel}}
  \FORALL{$e\in E$ \textbf{in parallel}}
    \STATE $\textit{role} \leftarrow \texttt{author}$ if $e = e_{\mathrm{auth}}$ else $\texttt{peer}$
    \STATE $(\sigma_e, U_e) \leftarrow \textsc{QueryFirstLevel}(e;\, t, s_M, K, \lambda(m), role)$
    \COMMENT{$\sigma_e \in \{\texttt{agree}, \texttt{disagree}\}$, $U_e$ is list of reasoning texts, see Prompt~\ref{prompt:l1-arg_prompt}}
    \FOR{each $u \in U_e$}
      \STATE $a \leftarrow \textsc{MakeArg}(u)$
      \STATE $a.\textit{polarity} \leftarrow \texttt{support}$ if $\sigma_e = \texttt{agree}$ else $\texttt{attack}$
      \STATE $a.\textit{expert} \leftarrow e$
      \STATE $\tilde{\Acal}_{m}^{(1)} \leftarrow \tilde{\Acal}_{m}^{(1)} \cup \{a\}$
    \ENDFOR
  \ENDFOR

  \STATE \COMMENT{\textbf{Phase 2: Contextual orthogonality pruning}}
  \STATE $\Pcal \leftarrow \{t\} \cup \{s_M\} \cup K$
  \COMMENT{Parent context includes topic, task, and key elements}
  \STATE $\Acal_{\text{sel}}^{(1)} \leftarrow \textsc{ContextualOrthogonalityPruning}(\tilde{\Acal}_{m}^{(1)}, \Pcal, \rho_{\text{sim}}, \phi)$
  \COMMENT{See Alg.~\ref{alg:co-pruning}}

  \STATE \COMMENT{\textbf{Phase 3: Build relation sets from selected arguments}}
  \STATE $R_m^{+} \leftarrow \{(a, m) : a \in \Acal_{\text{sel}}^{(1)} \land a.\textit{polarity} = \texttt{support}\}$
  \STATE $R_m^{-} \leftarrow \{(a, m) : a \in \Acal_{\text{sel}}^{(1)} \land a.\textit{polarity} = \texttt{attack}\}$
  \STATE $\Acal_m \leftarrow \{m\} \cup \Acal_{\text{sel}}^{(1)}$
\ENDFOR

\STATE \textbf{return} $\{(\Acal_m, R_m^{+}, R_m^{-}) : m\in\Acal_{\mathrm{main}}\}$
\end{algorithmic}
\end{algorithm}

After the set of main arguments $\Acal_{\mathrm{main}}$ is fixed, \framework expands
each main argument into a localized argumentation structure by generating first-level
supporting and attacking arguments. This stage constructs what we call the
\emph{direct children} of the main arguments: argument nodes that provide the
primary justifications most directly connected to each main claim. In particular,
these are the nodes used by the counterfactual explanation query in
Def.~\ref{def:cf-explanations} that identifies the most influential direct child
of a main argument.

For each main argument $m \in \Acal_{\mathrm{main}}$, we first designate a
\emph{primary source expert} $e_{\mathrm{auth}}$ using the source mapping
$\mathcal{C}$ (Alg.~\ref{alg:first-level-gen}, line~3). All experts
$e \in E$ are then queried \emph{in parallel} to assess the validity of $m$ under
the shared topic $t$, task statement $s_M$, and key elements $K$. The primary
source expert is queried in an \emph{author} role, while all other experts act as
\emph{peers}. Each expert returns (i) a binary stance
$\sigma_e \in \{\texttt{agree}, \texttt{disagree}\}$ toward $m$, and
(ii) a set of free-form reasoning statements justifying that stance
(Alg.~\ref{alg:first-level-gen}, lines~6--11). Each reasoning statement is
instantiated as a candidate argument node and assigned a polarity
(\texttt{support} or \texttt{attack}) based on the expert’s stance. To prevent redundancy, the resulting candidate set
$\tilde{\Acal}_m^{(1)}$ is filtered using \emph{contextual orthogonality pruning} (Sec.~\ref{app:contextual-orthogonality-pruning}). Finally, the selected arguments are attached to the main argument $m$ via signed
relations: supporting arguments induce edges in $R_m^{+}$, and attacking
arguments induce edges in $R_m^{-}$.
The resulting structure $\langle \Acal_m, R_m^{+}, R_m^{-} \rangle$ forms the
depth-1 (first-level) expansion of the argumentation graph rooted at $m$.

\subsection{Second-Level Argument Generation}

\begin{algorithm}[ht!]
\caption{Second-Level Argument Generation}
\label{alg:second-level-gen}
\begin{algorithmic}[1]
\REQUIRE For each $m\in\Acal_{\mathrm{main}}$: the depth-1 structure
$\langle \Acal_m, R_m^{+}, R_m^{-}\rangle$ with first-level nodes
$\Acal_{m}^{(1)} := \Acal_m \setminus \{m\}$, expert set $E$,
topic $t$, task $s_M$, key elements $K$,
threshold $\rho_{\text{sim}}$, encoder $\phi$
\ENSURE Updated depth-2 structure for each $m\in\Acal_{\mathrm{main}}$

\FOR{each $m\in\Acal_{\mathrm{main}}$}
  \STATE $\tilde{\Acal}_{m}^{(2)} \leftarrow \emptyset$

  \STATE \COMMENT{\textbf{Phase 1: Assign review sets per expert}}
  \FOR{each $e \in E$}
    \STATE $\mathcal{R}_e \leftarrow \{p \in \Acal_{m}^{(1)} : p.\textit{expert} \neq e\}$
    \COMMENT{Nodes not authored by $e$}
  \ENDFOR

  \STATE \COMMENT{\textbf{Phase 2: Batch query experts in parallel}}
  \FORALL{$e \in E$ with $\mathcal{R}_e \neq \emptyset$ \textbf{in parallel}}
    \STATE $\mathcal{V}_e \leftarrow \textsc{QuerySecondLevel}(e;\, t, s_M, K, \lambda(m), \mathcal{R}_e)$
    \COMMENT{Returns list of (node index, stance, justification) tuples, see Prompt~\ref{prompt:l2-arg-prompt}}
    \FOR{each $(i, \sigma, u) \in \mathcal{V}_e$}
      \IF{$\sigma \in \{\texttt{AGREE}, \texttt{DISAGREE}\}$ and $u \neq \emptyset$}
        \STATE Let $p$ be the $i$-th node in $\mathcal{R}_e$
        \STATE $a \leftarrow \textsc{MakeArg}(u)$
        \STATE $a.\textit{expert} \leftarrow e$
        \STATE $a.\textit{parent} \leftarrow p$
        \STATE $a.\textit{polarity} \leftarrow \texttt{support}$ if $\sigma = \texttt{AGREE}$ else $\texttt{attack}$
        \STATE $\tilde{\Acal}_{m}^{(2)} \leftarrow \tilde{\Acal}_{m}^{(2)} \cup \{a\}$
      \ENDIF
    \ENDFOR
  \ENDFOR

  \STATE \COMMENT{\textbf{Phase 3: Contextual orthogonality pruning}}
  \STATE $\Pcal \leftarrow \{\lambda(p) : p \in \Acal_{m}^{(1)}\}$
  \COMMENT{Parent pool: first-level statements only}
  \STATE $\Acal_{\text{sel}}^{(2)} \leftarrow \textsc{ContextualOrthogonalityPruning}(\tilde{\Acal}_{m}^{(2)}, \Pcal, \rho_{\text{sim}}, \phi)$
  \COMMENT{See Alg.~\ref{alg:co-pruning}}

  \STATE \COMMENT{\textbf{Phase 4: Update graph with selected nodes}}
  \STATE $\Acal_m \leftarrow \Acal_m \cup \Acal_{\text{sel}}^{(2)}$
  \STATE $R_m^{+} \leftarrow R_m^{+} \cup \{(a, a.\textit{parent}) : a \in \Acal_{\text{sel}}^{(2)} \land a.\textit{polarity} = \texttt{support}\}$
  \STATE $R_m^{-} \leftarrow R_m^{-} \cup \{(a, a.\textit{parent}) : a \in \Acal_{\text{sel}}^{(2)} \land a.\textit{polarity} = \texttt{attack}\}$
\ENDFOR

\STATE \textbf{return} $\{(\Acal_m, R_m^{+}, R_m^{-}) : m \in \Acal_{\mathrm{main}}\}$
\end{algorithmic}
\end{algorithm}

Given the depth-1 expansion for each main argument $m\in\Acal_{\mathrm{main}}$,
\framework further refines the argumentation graph by generating second-level arguments that \emph{support} or \emph{attack} the first-level arguments of $m$. While first-level nodes provide direct justifications for (or against) the main claim, second-level nodes capture \emph{cross-expert review}: experts who did not author a first-level argument assess its validity and provide
stance-justified feedback. This stage essentially begins a debate-like procedure by inducing disagreement- or agreement-conditioned responses that attach directly to first-level nodes.

Concretely, for each $m$, we form the first-level set
$\Acal_m^{(1)} = \Acal_m\setminus\{m\}$ and construct, for each expert $e\in E$, a
\emph{review set} $\Rcal_e \subseteq \Acal_m^{(1)}$ consisting of those first-level
nodes not authored by $e$ (Alg.~\ref{alg:second-level-gen}, Phase~1). Each expert
then reviews its assigned set in a \emph{batched} manner: we issue a single
parallel query per expert that conditions on the shared topic $t$, task statement
$s_M$, key elements $K$, and the main claim $\lambda(m)$, together with the
statements of the nodes in $\Rcal_e$ (Phase~2). The expert returns, for each
reviewed node, a stance $\sigma\in\{\texttt{AGREE},\texttt{DISAGREE}\}$ and a
free-form justification. Each non-empty justification is instantiated as a
second-level argument node $a$, attributed to expert $e$, and attached to the
corresponding first-level parent $p\in\Rcal_e$. The edge polarity is determined
solely by the stance: $\texttt{AGREE}$ induces a support relation and $\texttt{DISAGREE}$ induces an attack relation.

As in the first-level stage, we apply contextual orthogonality pruning to control
redundancy among candidate second-level arguments. Selected nodes are incorporated into the depth-2 structure by
adding them to $\Acal_m$ and connecting each $a$ to its parent $p$ via signed
relations in $R_m^{+}$ or $R_m^{-}$ (Phase~4). The resulting
$\langle \Acal_m, R_m^{+}, R_m^{-}\rangle$ therefore constitutes the depth-2 (second-level)
expansion rooted at $m$, in which first-level claims are explicitly subjected to
peer review and structured rebuttal.

\subsection{Third-Level Argument Generation}
\label{subsec:third-level-gen}

\begin{algorithm}[h!]
\caption{Third-Level Argument Generation}
\label{alg:third-level-gen}
\begin{algorithmic}[1]
\REQUIRE For each $m\in\Acal_{\mathrm{main}}$: the current depth-2 structure
$\langle \Acal_m, R_m^{+}, R_m^{-}\rangle$ with first-level nodes
$\Acal_m^{(1)}$ and second-level nodes $\Acal_m^{(2)}$;
expert set $E$, source map $\mathcal{C}:\Acal_{\mathrm{main}}\to 2^{E}$,
topic $t$, task $s_M$, key elements $K$,
threshold $\rho_{\text{sim}}$, encoder $\phi$
\ENSURE Updated depth-3 structure for each $m\in\Acal_{\mathrm{main}}$ (targeted rebuttals)

\FOR{each $m\in\Acal_{\mathrm{main}}$}
  \STATE $\tilde{\Acal}_{m}^{(3)} \leftarrow \emptyset$
  \COMMENT{Level-3 candidate rebuttals for $m$}

  \STATE \COMMENT{\textbf{Phase 1: Identify second-level attacks per first-level parent}}
  \STATE $\Acal_{m,\mathrm{atk}}^{(2)} \leftarrow \{a \in \Acal_m^{(2)} : a.\textit{polarity}=\texttt{attack}\}$
  \FOR{each $p \in \Acal_m^{(1)}$}
    \STATE $\Ccal_p \leftarrow \{v \in \Acal_{m,\mathrm{atk}}^{(2)} : v.\textit{parent}=p\}$
    \COMMENT{Attacks on parent $p$}
  \ENDFOR

  \STATE \COMMENT{\textbf{Phase 2: Query original authors for targeted rebuttals (in parallel)}}
  \FORALL{$p \in \Acal_m^{(1)}$ with $\Ccal_p \neq \emptyset$ \textbf{in parallel}}
    \STATE $e_p \leftarrow p.\textit{expert}$ \COMMENT{Author of the first-level parent}
    \STATE $\Rcal_p \leftarrow \textsc{QueryThirdLevel}(e_p;\, t, s_M, K, \lambda(m), \lambda(p), \{(\lambda(v), v.\textit{expert}) : v\in\Ccal_p\})$
    \COMMENT{Returns a list of rebuttal texts aligned to attacks in $\Ccal_p$, see Prompt~\ref{prompt:l3-arg-prompt}}
    \FOR{each attack $v \in \Ccal_p$ with rebuttal text $u \in \Rcal_p$}
      \IF{$u \neq \emptyset$}
        \STATE $a \leftarrow \textsc{MakeArg}(u)$
        \STATE $a.\textit{expert} \leftarrow e_p$
        \STATE $a.\textit{parent} \leftarrow v$
        \STATE $a.\textit{polarity} \leftarrow \texttt{attack}$
        \COMMENT{Rebuttal attacks the critique}
        \STATE $\tilde{\Acal}_{m}^{(3)} \leftarrow \tilde{\Acal}_{m}^{(3)} \cup \{a\}$
      \ENDIF
    \ENDFOR
  \ENDFOR

  \STATE \COMMENT{\textbf{Phase 3: Contextual orthogonality pruning}}
  \STATE $\Pcal \leftarrow \{\lambda(a.\textit{parent}) : a \in \tilde{\Acal}_{m}^{(3)}\}$
  \COMMENT{Parent pool: second-level attacks that received rebuttals}
  \STATE $\Acal_{\text{sel}}^{(3)} \leftarrow \textsc{ContextualOrthogonalityPruning}(\tilde{\Acal}_{m}^{(3)}, \Pcal, \rho_{\text{sim}}, \phi)$
  \COMMENT{See Alg.~\ref{alg:co-pruning}}

  \STATE \COMMENT{\textbf{Phase 4: Update graph with selected rebuttals}}
  \STATE $\Acal_m \leftarrow \Acal_m \cup \Acal_{\text{sel}}^{(3)}$
  \STATE $R_m^{-} \leftarrow R_m^{-} \cup \{(a, a.\textit{parent}) : a \in \Acal_{\text{sel}}^{(3)}\}$
\ENDFOR

\STATE \textbf{return} $\{(\Acal_m, R_m^{+}, R_m^{-}) : m \in \Acal_{\mathrm{main}}\}$
\end{algorithmic}
\end{algorithm}

The third level introduces a targeted \emph{rebuttal phase} in which the original authors of first-level arguments respond to critiques raised at the second level. This mechanism is designed to preserve authorship accountability while maintaining a
structured back-and-forth exchange: first-level claims are peer-reviewed at level 2, and any resulting attacks can be answered directly by the originating author at level 3. Rebuttals are modeled as \emph{attacks} on the critiquing second-level
arguments.

Concretely, for each main argument $m\in\Acal_{\mathrm{main}}$, we consider the
current depth-2 structure $\langle \Acal_m, R_m^{+}, R_m^{-}\rangle$ and extract
the set of second-level attacks
$\Acal_{m,\mathrm{atk}}^{(2)} := \{a\in\Acal_m^{(2)} : a.\textit{polarity}=\texttt{attack}\}$
(Alg.~\ref{alg:third-level-gen}, Phase~1). These attacks are grouped by their
first-level parent $p\in\Acal_m^{(1)}$, yielding for each parent a set
$\Ccal_p$ of critiques that directly target $p$. For every parent with at least
one critique, we query the original author $e_p := p.\textit{expert}$ to generate
targeted rebuttals (Phase~2). The rebuttal query conditions on the shared context
$(t,s_M,K)$, the main claim $\lambda(m)$, the parent claim $\lambda(p)$, and the
set of critiques in $\Ccal_p$ (including critique provenance via the attacking
expert identifier). The author returns rebuttal texts aligned to the individual
critiques; each non-empty rebuttal is instantiated as a new argument node
$a\in\tilde{\Acal}_m^{(3)}$, attributed to $e_p$, and attached to the corresponding
second-level critique $v\in\Ccal_p$ as its parent. All such rebuttal nodes are
assigned polarity \texttt{attack}, reflecting that they are counter-arguments
directed at the critiques. As in earlier stages, we apply contextual orthogonality pruning to avoid redundant
rebuttals (Phase~3). The retained rebuttals $\Acal_{\text{sel}}^{(3)}$ are incorporated
into the depth-3 structure by adding them to $\Acal_m$ and inserting edges into the
attack relation set $R_m^{-}$ from each rebuttal to its parent critique
(Phase~4).

\subsection{Base Score Assignment}
\label{subsec:prior-strength}

\begin{algorithm}[ht!]
\caption{Base Score Assignment}
\label{alg:prior-strength}
\begin{algorithmic}[1]
\REQUIRE For each $m \in \Acal_{\mathrm{main}}$: argument set $\Acal_m$,
topic $t$, main task $s_M$, key elements $K$
\ENSURE Base score function $w_m : \Acal_m \to (0,1)$ for each $m \in \Acal_{\mathrm{main}}$

\STATE \COMMENT{\textbf{Parallel evaluation across all main arguments and nodes}}
\FORALL{$m \in \Acal_{\mathrm{main}}$ \textbf{in parallel}}
  \FORALL{$a \in \Acal_m$ \textbf{in parallel}}
    \STATE $(w_m^{\text{task}}(a), w_m^{\text{supp}}(a), w_m^{\text{logi}}(a)) \leftarrow \textsc{EvaluateCriteria}(\lambda(a);\, t, s_M, K)$
    \COMMENT{See Prompt~\ref{prompt:prior-str-gen}}
    \STATE $w_m(a) \leftarrow \displaystyle\frac{w_m^{\text{task}}(a) + w_m^{\text{supp}}(a) + w_m^{\text{logi}}(a)}{3}$
  \ENDFOR
\ENDFOR

\STATE \textbf{return} $\{w_m : m \in \Acal_{\mathrm{main}}\}$
\end{algorithmic}
\end{algorithm}

Before evaluating the constructed argumentation graphs under a quantitative semantics, \framework assigns an initial \emph{base score (strength)} $w_m(a) \in (0,1)$ to each argument node $a \in \Acal_m$ for every main argument $m \in \Acal_{\mathrm{main}}$. These base score strengths, collectively denoted $w_m : \Acal_m \to (0,1)$, serve as the initial argument valuations in the QBAF $Q_m = \langle \Acal_m, R_m^{-}, R_m^{+}, w_m \rangle$ before propagation through support and attack relations.

\paragraph{Deviation from standard QBAF base score range.} While the standard QBAF definition (Section~\ref{sec:prelim-qbaf}) specifies base scores $w:\Acal\to[0,1]$ using the inclusive interval $[0,1]$, \framework deliberately restricts base scores to the open interval $(0,1)$, excluding the boundary values $0$ and $1$. This design choice addresses two tendencies observed when using LLMs for argument evaluation: (i)~LLMs exhibit a strong bias toward assigning extreme scores, particularly the boundary values $0$ and $1$, even when such extreme judgments are not warranted by the argument quality, and (ii)~LLMs tend to be overly generous in their assessments, frequently assigning scores in the range $[0.8, 1.0)$ regardless of quality differences. Both tendencies weaken the integrity of the resulting argument strength distributions: extreme base scores can dominate the graph evaluation, making the final strengths insensitive to the support and attack structure, while inflated scores compress the dynamic range and reduce the discriminative power of the semantics. To mitigate these issues, \framework enforces the strict bounds $(0,1)$ and explicitly instructs the evaluating LLM to treat $0.5$ as the baseline score for arguments that adequately meet evaluation criteria, with most scores expected to fall in the range $[0.30, 0.70]$.

Algorithm~\ref{alg:prior-strength} describes the assignment of base scores across all argument nodes in all main argument graphs. \framework computes base scores through a multi-criterion evaluation process that assesses each argument statement along three dimensions: task relevance, evidence support, and logical soundness. For each argument node $a \in \Acal_m$, the orchestrator LLM evaluates the statement text $\lambda(a)$ in the context of the discussion topic $t$, main task $s_M$, and key elements $K$, producing three scores:

\begin{itemize}
    \item \textbf{Task Relevance} $w_m^{\text{task}}(a) \in (0,1)$: The degree to which the statement directly and concretely addresses the discussion topic, main task, and key elements.
    \item \textbf{Evidence Support} $w_m^{\text{supp}}(a) \in (0,1)$: The extent to which the statement provides reasoning, mechanisms, or evidence for its claims.
    \item \textbf{Logical Soundness} $w_m^{\text{logi}}(a) \in (0,1)$: Whether the statement is internally coherent, free of contradictions, and follows a reasonable inferential structure.
\end{itemize}

The final base score for argument $a$ is computed as the arithmetic mean of these three criterion scores:
\begin{equation}
\nonumber
w_m(a) = \frac{w_m^{\text{task}}(a) + w_m^{\text{supp}}(a) + w_m^{\text{logi}}(a)}{3}.
\end{equation}

We adopt this multi-criterion scoring scheme based on early empirical analyses during the development of \framework. During manual inspections of expert-generated arguments, we frequently observed partial outputs that were inadequate in different ways: some responses proposed a concrete decision (e.g., an MCQA option) while providing little to no supporting rationale; others offered plausible discussion but failed to resolve the task explicitly (including occasional invalid option formats for multiple-choice benchmarks). These observations motivate separating task relevance (does the statement directly address $(t, s_M, K)$ and yield an actionable stance) from evidence support (does it provide justification, mechanisms, or evidence rather than an unsupported assertion). We additionally include logical soundness to penalize internally inconsistent or weakly connected reasoning, a tendency that was more pronounced when questions required arithmetic reasoning or tight constraint satisfaction. While human annotation would be preferable for assigning initial argument valuations, such supervision is impractical at the scale and automation level required by an end-to-end multi-LLM discussion pipeline like \framework. We therefore delegate criterion evaluation to the Orchestrator, using explicit calibration instructions to reduce score saturation and preserve dynamic range (see Prompt.~\ref{prompt:prior-str-gen}). The subsequent application of a modular quantitative semantics (Def.~\ref{def:modular-semantics}) then propagates these calibrated priors through the support/attack structure, yielding final strengths that reflect both intrinsic quality and relational context with other arguments.

\subsection{Complete Summary of \framework Discussion Round}
\label{subsec:full-debate-round}

\begin{algorithm}[ht!]
\caption{Complete Argumentation Round in \framework}
\label{alg:full-debate-round}
\begin{algorithmic}[1]
\REQUIRE Expert set $E$, topic $t$, task statement $s_M$, key elements $K$,
similarity threshold $\rho_{\text{sim}}$, modular semantics $(\alpha,\iota)$
\ENSURE A family of evaluated graphs $\{\langle Q_m,\sigma_m\rangle : m\in\Acal_{\mathrm{main}}\}$

\STATE \COMMENT{\textbf{Phase 1: Main argument extraction (Alg.~\ref{alg:main-arg-extraction})}}
\STATE $(\Acal_{\mathrm{main}}, \mathcal{C}) \leftarrow \textsc{MainArgExtraction}(r;\,E,t,s_M,K)$
\COMMENT{See prompt~\ref{prompt:main-arg-gen}}

\STATE \COMMENT{\textbf{Phase 2: First-level argument generation (Alg.~\ref{alg:first-level-gen})}}
\FORALL{$m \in \Acal_{\mathrm{main}}$ \textbf{in parallel}}
  \STATE $(\Acal_m, R_m^{+}, R_m^{-}) \leftarrow \textsc{GenFirstLevel}(m;\,E,\mathcal{C},t,s_M,K,\rho_{\text{sim}})$
  \COMMENT{See prompt~\ref{prompt:l1-arg_prompt}}
\ENDFOR

\STATE \COMMENT{\textbf{Phase 3: Second-level peer reviews (Alg.~\ref{alg:second-level-gen})}}
\FORALL{$m \in \Acal_{\mathrm{main}}$ \textbf{in parallel}}
  \STATE $(\Acal_m, R_m^{+}, R_m^{-}) \leftarrow \textsc{GenSecondLevel}(m;\,\Acal_m,R_m^{+},R_m^{-},E,t,s_M,K,\rho_{\text{sim}})$
  \COMMENT{See prompt~\ref{prompt:l2-arg-prompt}}
\ENDFOR

\STATE \COMMENT{\textbf{Phase 4: Third-level targeted rebuttals (Alg.~\ref{alg:third-level-gen})}}
\FORALL{$m \in \Acal_{\mathrm{main}}$ \textbf{in parallel}}
  \STATE $(\Acal_m, R_m^{+}, R_m^{-}) \leftarrow \textsc{GenThirdLevel}(m;\,\Acal_m,R_m^{+},R_m^{-},E,t,s_M,K,\rho_{\text{sim}})$
  \COMMENT{See prompt~\ref{prompt:l3-arg-prompt}}
\ENDFOR

\STATE \COMMENT{\textbf{Phase 5: Base score assignment (Alg.~\ref{alg:prior-strength})}}
\FORALL{$m \in \Acal_{\mathrm{main}}$ \textbf{in parallel}}
  \STATE $w_m \leftarrow \textsc{AssignBaseScores}(\Acal_m;\, t, s_M, K)$
  \COMMENT{See prompt~\ref{prompt:prior-str-gen}}
\ENDFOR

\STATE \COMMENT{\textbf{Phase 6: Argument strength computation}}
\FORALL{$m \in \Acal_{\mathrm{main}}$ \textbf{in parallel}}
  \STATE $Q_m \leftarrow \langle \Acal_m, R_m^{-}, R_m^{+}, w_m\rangle$
  \STATE $\sigma_m \leftarrow \textsc{EvaluateQBAF}(Q_m;\,\alpha,\iota)$
  \COMMENT{Compute strengths under the chosen modular semantics}
\ENDFOR

\STATE \textbf{return} $\{\langle Q_m,\sigma_m\rangle : m\in\Acal_{\mathrm{main}}\}$
\end{algorithmic}
\end{algorithm}

Alg.~\ref{alg:full-debate-round} summarizes the complete orchestration of a single discussion round in \framework. A round begins by generating candidate responses from all experts and extracting a set of main arguments
$\Acal_{\mathrm{main}}$ along with their source mapping $\mathcal{C}$. For each
main argument $m\in\Acal_{\mathrm{main}}$, \framework then constructs a rooted
argumentation graph by (i) generating first-level direct children (supporting and
attacking justifications), (ii) expanding the discussion via second-level peer reviews
of first-level nodes, and (iii) adding third-level targeted rebuttals by the
original first-level authors against second-level critiques. Once the graph structure
is complete, \framework assigns base scores to all argument nodes through
multi-criterion evaluation, producing the
base score function $w_m:\Acal_m\to(0,1)$ for each main argument graph. Finally,
the resulting QBAFs are evaluated under a chosen quantitative semantics to compute
final argument strengths, which are subsequently used for downstream decision-making
and explanation.

\subsection{Multi-Round Discussion Support and Capability}
\label{subsec:multi-round-support}

To investigate whether iterative refinement through multiple discussion rounds could improve decision quality, we also implemented multi-round support in \framework. The hypothesis was that allowing experts to revise their arguments after observing the initial consensus might lead to better reasoning outcomes. However, empirical evaluation reveals that multi-round discussions provide marginal performance improvements that do not justify the increased inference costs and introduce practical challenges in context management and scalable history utilization (therefore, all of the experiments and evaluations reported in this work are fixed as single-round). This section documents the multi-round architecture for completeness, and multi-round ablation results are presented in Appendix~\ref{app:abl_rounds}.

\paragraph{Multi-round architecture.}
If more than one round of discussion is enabled, the Orchestrator collects all discussion artifacts from the current round $r$ and orchestrates the transition to the next round through a structured prompt generation process. At the conclusion of round $r$, \framework aggregates the following discussion artifacts:
\begin{itemize}
    \item \textbf{Expert responses}: Raw textual responses from each expert $e \in E$ and their parsed structural representations.
    \item \textbf{Main arguments}: The set of main arguments $\Acal_{\mathrm{main}}^{(r)}$ along with their canonical mappings $\mathcal{C}^{(r)}$ from Alg.~\ref{alg:main-arg-extraction}.
    \item \textbf{Argumentation graphs}: For each main argument $m \in \Acal_{\mathrm{main}}^{(r)}$, the complete evaluated QBAF $\langle Q_m^{(r)}, \sigma_m^{(r)}\rangle$ including all argument nodes $\Acal_m^{(r)}$, support/attack relations $(R_m^{+})^{(r)}$, $(R_m^{-})^{(r)}$, base scores $w_m^{(r)}$, and final strengths $\sigma_m^{(r)}$.
\end{itemize}
These artifacts collectively form the \emph{discussion history} $\mathcal{H}^{(r)}$ for round $r$. However, the mechanism by which this history influences round $r+1$ operates \emph{indirectly} through expert conversation continuity rather than explicit orchestrator-mediated summarization.

For subsequent rounds, the Orchestrator generates expert prompts that condition only on the task specification $(s_M, K)$, without explicit access to the discussion history $\mathcal{H}^{(r)}$ (see Prompt~\ref{prompt:exp-prompt-gen}). 
Instead, \emph{each expert $e \in E$ maintains a persistent conversation history} with the Orchestrator through the message bus. When expert $e$ is prompted for round $r+1$, the complete context available to $e$ includes:
\begin{enumerate}
    \item The expert's domain expertise system prompt (unchanging across rounds).
    \item All Orchestrator prompts from rounds $1, \ldots, r$.
    \item All of expert $e$'s own responses from rounds $1, \ldots, r$.
    \item The Orchestrator's new prompt for round $r+1$.
\end{enumerate}

\paragraph{Limitations and future directions.}
While the multi-round architecture is fully implemented, effective iterative argumentation remains an open challenge. Key issues include: (i) \emph{context window saturation}, where accumulated history dilutes critical information and degrades generation quality; and (ii) \emph{computational cost scaling}, where marginal accuracy gains do not justify linear increases in inference cost. Addressing these challenges represents a promising direction for future work.
\section{Addendum to Counterfactual Explanations in \framework.}
\label{app:addendum-counterfactual}

In this section, we describe additional methods that build on the definitions
provided in the main paper (Sections~\ref{subsec:causal-cf} and
\ref{subsec:final-consensus-override}) to formalize overarching counterfactual explanations.

\subsection{Winner-change counterfactuals via winner-critical interventions}
\label{app:winner-critical}

The three counterfactual explanation queries in
Section~\ref{subsec:causal-cf} (Def.~\ref{def:cf-explanations})
characterize \emph{within-graph} causal influence for a fixed main argument $m$ by
measuring how edge-local deletions affect the root strength $\sigma(m)$.
However, the baseline decision in \framework\ is obtained by comparing root strengths
\emph{across} all main arguments (Def.~\ref{def:final-consensus}).
Accordingly, the natural question that we would like to ask,
\emph{``why did $m^\star$ become the winning main argument?''}
can be obtained by verifying which structural perturbations
would have changed the identity of the winning argument among $\Acal^{\mathrm{main}}$.

We formalize this by considering \emph{singleton interventions}---edge-local deletions 
that remove exactly one edge in exactly one QBAF---and identifying those that would 
have changed the winner.

\paragraph{Tie-breaking convention.}
Throughout this section, we assume that ties in all $\argmax$ operations 
(Defs.~\ref{def:final-consensus} and~\ref{def:obs-aligned-override}) 
are broken by a fixed deterministic rule (e.g., argument identifier order).
This ensures that the baseline winner $m^\star$ and any intervened winner are uniquely defined.

\begin{definition}[Singleton intervention]
\label{def:singleton-intervention}
A \emph{singleton intervention} is a pair $(m, x)$ where $m \in \Acal^{\mathrm{main}}$ 
is a main argument and $x \in \Acal_m \setminus \{m\}$ is a non-root node in its QBAF.
Applying $(m, x)$ deletes the unique outgoing edge $(x \to \pa(x))$ in $Q_m$ and 
leaves all other QBAFs $Q_{m'}$ for $m' \neq m$ unchanged.

We write $\sigma^{(m,x)}$ for the resulting strength function on $Q_m$ after the 
intervention, and $\sigma^{(m,x)}(m)$ for the intervened root strength.
\end{definition}

\begin{remark}[Effect localization]
\label{rem:singleton-localization}
Since each $Q_m$ is an independent rooted tree, the structural equations in $\Mcal_m$ 
depend only on nodes within $\Acal_m$. Thus, for any singleton intervention $(m, x)$:
\[
  \sigma^{(m,x)}(m) \neq \sigma(m)
  \quad \text{in general, but} \quad
  \sigma(m') \text{ is unchanged for all } m' \in \Acal^{\mathrm{main}} \setminus \{m\}.
\]
Consequently, deciding whether the \emph{winner} changes under $(m, x)$ requires
re-evaluating only the single modified tree and then re-taking the
$\argmax$ over $\Acal^{\mathrm{main}}$ using the updated value $\sigma^{(m,x)}(m)$.
\end{remark}

\begin{definition}[Winner-critical intervention]
\label{def:winner-critical}
Let $m^\star$ be the baseline consensus winner (Def.~\ref{def:final-consensus}).
A singleton intervention $(m, x)$ is \emph{winner-critical} if applying it changes 
the identity of the winner. 

Formally, let $m^{(m,x)}$ denote the winner after applying $(m, x)$:
\[
  m^{(m,x)} := \argmax_{m' \in \Acal^{\mathrm{main}}} \tilde{\sigma}(m'),
  \quad \text{where} \quad
  \tilde{\sigma}(m') := 
  \begin{cases}
    \sigma^{(m,x)}(m) & \text{if } m' = m, \\
    \sigma(m') & \text{otherwise}.
  \end{cases}
\]
Then $(m, x)$ is winner-critical if and only if $m^{(m,x)} \neq m^\star$.
\end{definition}

\begin{remark}[Causal status of winner-critical interventions]
\label{rem:causal-status-winner-critical}
The edge-local deletion underlying a singleton intervention $(m, x)$ is a valid 
soft intervention within the SCM $\Mcal_m$ induced by Proposition~\ref{prop:eval-unique-scm}.
The winner-change $m^{(m,x)} \neq m^\star$ is a deterministic downstream consequence 
of this intervention, propagated through the $\argmax$ selection rule.
If one wishes to treat winner-change as a first-class causal quantity, 
one can define a \emph{decision-level SCM} $\Mcal_{\mathrm{dec}}$ with endogenous variables
$V_{\mathrm{dec}} := \bigcup_{m \in \Acal^{\mathrm{main}}} V_m \cup \{v_{\mathrm{winner}}\}$
and structural assignment 
$v_{\mathrm{winner}} = \argmax_{m \in \Acal^{\mathrm{main}}} v_m^{\mathrm{root}}$.
Under this construction, winner-critical interventions become standard soft interventions 
in $\Mcal_{\mathrm{dec}}$.
\end{remark}

\begin{remark}[Existence of winner-critical interventions]
\label{rem:existence-winner-critical}
A winner-critical singleton intervention need not always exist.
If the baseline winner $m^\star$ dominates all competitors by a margin that exceeds 
the maximum possible single-edge impact, no single deletion will flip the outcome.
In such cases, the \emph{absence} of winner-critical interventions can itself be 
interpreted as evidence of decision robustness: the chosen main argument $m^\star$ 
is structurally stable under all single-edge perturbations.
\end{remark}

\paragraph{Interpretation.}
Winner-critical interventions directly answer the decision-level causal query
\emph{``which single argumentative channel, if removed, would have flipped the chosen
main argument?''}
If the intervention $(m^\star, x)$ targets the baseline winner's own QBAF, then 
winner-criticality indicates that the edge $(x \to \pa(x))$ is a \emph{necessary} 
supportive or defensive channel for $m^\star$ to prevail.
If instead the intervention $(m, x)$ targets a competing QBAF with $m \neq m^\star$,
then winner-criticality indicates that removing this edge changes the competitor's 
evaluation enough for it to overtake $m^\star$.

\paragraph{Minimal winner-change explanations.}
Among all singleton interventions, we can summarize ``why $m^\star$ won'' by reporting 
the \emph{smallest-perturbation winner flip}. Let
\[
  \Scal := \{(m, x) : m \in \Acal^{\mathrm{main}},\, x \in \Acal_m \setminus \{m\}\}
\]
denote the set of all singleton interventions. Using the intervention cost $C(\cdot)$ 
from Def.~\ref{def:obs-aligned-override}, we select any minimizer
\[
  (\hat m, \hat x) \in \argmin_{(m,x) \in \Scal}\ C(m, x)
  \quad \text{s.t.} \quad
  m^{(m,x)} \neq m^\star,
\]
when such a winner-critical intervention exists (cf.\ Remark~\ref{rem:existence-winner-critical}).
The edge $(\hat x \to \pa(\hat x))$ in $Q_{\hat m}$ can be interpreted as the most 
decision-relevant structural dependency: it is the single argumentative channel whose 
removal flips the winner with minimal perturbation to the internal state.

%%%%%%%%%%%%%%%%%%%%%%%%%%%%%%%%%%%%%%%%%%%%%%%%%%%%%%%%%%%%%%%%%%%%%%%%%%%%%%%
\subsection{Comparative causal explanations via margin decomposition}
\label{app:margin-decomposition}

Winner-critical interventions identify \emph{which} edges are necessary for the 
current winner to prevail, but they do not directly explain \emph{why} a main 
argument with an initially lower base score might nonetheless win. To complete 
the explanatory picture, we introduce a margin decomposition that separates the 
contributions of base scores from the cumulative effects of argumentation.

\begin{definition}[Net argumentative lift]
\label{def:net-lift}
For each main argument $m \in \Acal^{\mathrm{main}}$, let $w(m)$ denote its base 
score (assigned by the Orchestrator) and $\sigma(m)$ its final 
evaluated strength under the chosen modular semantics. The \emph{net argumentative 
lift} of $m$ is
\[
  \Delta_{\mathrm{lift}}(m) \;:=\; \sigma(m) - w(m).
\]
This quantity measures the cumulative effect of all supporting and attacking 
arguments on $m$'s final strength relative to its base score (which we call \textit{prior}):
\begin{itemize}
  \item $\Delta_{\mathrm{lift}}(m) > 0$ indicates that the argumentation structure 
        \emph{strengthened} $m$ beyond its prior (net support dominates);
  \item $\Delta_{\mathrm{lift}}(m) < 0$ indicates that the argumentation structure 
        \emph{weakened} $m$ below its prior (net attack dominates);
  \item $\Delta_{\mathrm{lift}}(m) = 0$ indicates that supporting and attacking 
        influences exactly cancel.
\end{itemize}
\end{definition}

\begin{definition}[Pairwise margin decomposition]
\label{def:margin-decomposition}
Let $m^\star$ be the baseline winner. For each competitor 
$m_j \in \Acal^{\mathrm{main}} \setminus \{m^\star\}$, the \emph{pairwise winning margin} 
$\sigma(m^\star) - \sigma(m_j)$ decomposes as:
\[
  \underbrace{\sigma(m^\star) - \sigma(m_j)}_{\text{final margin}_j}
  \;=\;
  \underbrace{w(m^\star) - w(m_j)}_{\text{prior margin}_j}
  \;+\;
  \underbrace{\Delta_{\mathrm{lift}}(m^\star) - \Delta_{\mathrm{lift}}(m_j)}_{\text{argumentative margin}_j}.
\]
This yields a margin decomposition for each competitor, which allows us to analyze how 
$m^\star$ prevailed against competitive main arguments.
\end{definition}

\begin{definition}[Pairwise victory type]
\label{def:victory-type}
For each competitor $m_j \in \Acal^{\mathrm{main}} \setminus \{m^\star\}$, 
we classify the pairwise victory of $m^\star$ over $m_j$ into one of three types 
based on the signs of the prior margin and argumentative margin:
\begin{enumerate}
  \item \textbf{Prior-dominated}: 
        $\text{prior margin}_j \geq 0$ and $\text{argumentative margin}_j \geq 0$.
        The winner led in prior and argumentation maintained or extended this lead.
  \item \textbf{Argumentation-reversed}: 
        $\text{prior margin}_j < 0$ and $\text{final margin}_j > 0$.
        The winner overcame a prior deficit through stronger argumentation.
  \item \textbf{Argumentation-eroded}: 
        $\text{prior margin}_j > 0$ and $\text{argumentative margin}_j < 0$.
        The winner's prior lead was narrowed by argumentation.
\end{enumerate}
\end{definition}

\begin{definition}[Aggregate robustness]
\label{def:aggregate-robustness}
Let $m^\star$ be the baseline winner. The \emph{minimum final margin} is
\[
  \Delta_{\min} := \min_{m_j \in \Acal^{\mathrm{main}} \setminus \{m^\star\}} 
  \bigl( \sigma(m^\star) - \sigma(m_j) \bigr),
\]
and the \emph{closest competitor} is any $m_j$ achieving this minimum. The larger the value of $\Delta_{\min}$, the more robust our winner is.
\end{definition}

\paragraph{Combining margin decomposition with winner-critical interventions.}
The pairwise margin decomposition provides a \emph{global} accounting of why the 
winner prevailed against each competitor, while winner-critical interventions 
provide \emph{local} identification of necessary edges. Together, they form a 
complete mechanistic explanation:

\begin{enumerate}
  \item \textbf{Compute pairwise decompositions} for all competitors to determine 
        the victory type (Def.~\ref{def:victory-type}) for each.
  \item \textbf{Identify aggregate robustness} via $\Delta_{\min}$ and the 
        closest competitor.
  \item \textbf{Enumerate winner-critical interventions} to identify which 
        specific edges in which QBAFs are necessary for the current outcome.
  \item \textbf{Cross-reference}: For each aggregate-reversed victory over $m_j$, 
        the winner-critical edges in $Q_{m^\star}$ represent argumentative 
        channels that enabled $m^\star$ to overcome its prior deficit against $m_j$. 
        Conversely, winner-critical edges in $Q_{m_j}$ represent attacks or weak 
        supports that prevented $m_j$ from capitalizing on its prior advantage.
\end{enumerate}

\paragraph{Example.}
Consider four main arguments $m_1, m_2, m_3, m_4 \in \Acal^{\mathrm{main}}$ with:
\[
\begin{array}{c|cc|c|l}
  & w(m) & \sigma(m) & \Delta_{\mathrm{lift}}(m) & \text{Structure} \\
\hline
m_1     & 0.70 & 0.68 & -0.02 & \text{slightly attacked} \\
m_2^\star & 0.55 & 0.71 & +0.16 & \text{strongly supported} \\
m_3     & 0.72 & 0.69 & -0.03 & \text{slightly attacked} \\
m_4     & 0.45 & 0.52 & +0.07 & \text{moderately supported}
\end{array}
\]
The winner is $m_2^\star$ with $\sigma(m_2^\star) = 0.71$. The pairwise 
decompositions are:
\[
\begin{array}{c|ccc|c}
  \text{vs.} & \text{Prior} & \text{Arg.} & \text{Final} & \text{Type} \\
\hline
m_1 & -0.15 & +0.18 & +0.03 & \textsc{ArgRev} \\
m_3 & -0.17 & +0.19 & +0.02 & \textsc{ArgRev} \\
m_4 & +0.10 & +0.09 & +0.19 & \textsc{PriorDom}
\end{array}
\]

\noindent\textbf{Interpretation.} $m_2^\star$ won against 3 competitors despite 
having the second-lowest prior. It overcame prior deficits against both $m_1$ 
and $m_3$ through substantially stronger argumentation ($+0.18$ and $+0.19$ 
argumentative margins, respectively). However, the victory is \textsc{Fragile}: 
the margin over $m_3$ is only $0.02$, meaning small perturbations to either 
QBAF could flip the outcome.

Winner-critical interventions would identify:
\begin{itemize}
  \item In $Q_{m_2^\star}$: key support edges whose removal drops 
        $\sigma(m_2^\star)$ below $0.69$ (the closest competitor's strength);
  \item In $Q_{m_3}$: attack edges whose removal raises $\sigma(m_3)$ above $0.71$.
\end{itemize}
Together, these explain \emph{why} $m_2^\star$ won (argumentation overcame prior 
deficits against multiple competitors) and \emph{which specific edges} are 
necessary for this outcome, while the \textsc{Fragile} robustness classification 
warns that the decision is sensitive to structural perturbations.

\paragraph{Relation to within-graph counterfactuals.}
For any main argument $m$, the net argumentative lift is
\[
  \Delta_{\mathrm{lift}}(m) \;:=\; \sigma(m) - w(m).
\]
To attribute which internal nodes are most responsible for this lift, we use the
within-graph edge-local deletion counterfactual from Def.~\ref{def:cf-impacts-edge}:
\[
  \Delta_{\mathrm{edge}}(x;\,m)
  \;:=\;
  \sigma(m)\;-\;\sigma^{\ominus x}(m),
\]
which answers ``how much would $\sigma(m)$ change if node $x$ were removed?''

Since the root strength is obtained by recursively composing the aggregation and
influence maps (Eq.~\eqref{eq:modular-update}),
\[
  \sigma(a)=\iota_{w(a)}\!\left(\alpha_{\pi(a)}\bigl(\sigma(c_1),\dots,\sigma(c_n)\bigr)\right),
\]
the mapping from the set of internal nodes to $\sigma(m)$ is generally not
additive. In particular, for the modular instantiations used in this work
(Table~\ref{tab:modular-building-blocks}), non-additivity can arise because the
update rule is typically \emph{nonlinear or piecewise-linear}. Consequently, the effect of
removing a node depends on what other nodes remain in the graph. Concretely, for distinct nodes $x,x'$ it may hold that
\[
  \Delta_{\mathrm{edge}}(x;\,m)+\Delta_{\mathrm{edge}}(x';\,m)
  \;\neq\;
  \sigma(m)-\sigma^{\ominus\{x,x'\}}(m),
\]
so the leave-one-out deletion impacts are generally \emph{not} an additive
decomposition of any global quantity. In particular,
$\sum_{x\in\Acal_m\setminus\{m\}} \Delta_{\mathrm{edge}}(x;\,m)$ need not equal
$\Delta_{\mathrm{lift}}(m)$, and rankings induced by $\Delta_{\mathrm{edge}}$
need not coincide with any additive ``marginal contribution'' allocation.
Accordingly, Def.~\ref{def:cf-explanations} uses $|\Delta_{\mathrm{edge}}(x;\,m)|$
as a counterfactual importance score: nodes with larger magnitude are those whose
removal causes a larger change in the realized root strength.

\begin{remark}[Scope of causal explanations]
\label{rem:explanation-scope}
The counterfactual queries defined in this work—within-graph edge-local 
impacts (Def.~\ref{def:cf-impacts-edge}), winner-critical interventions 
(Def.~\ref{def:winner-critical}), and pairwise margin decomposition 
(Def.~\ref{def:margin-decomposition})—collectively address the 
primary explanatory questions for \framework's decision process:
\begin{enumerate}
  \item Which arguments causally influence each main argument's strength?
  \item Which argumentative channels are necessary for the winner to prevail?
  \item Did the outcome depend more on prior assessments or on argumentation, 
        and does this vary across competitors?
  \item How robust is the winner's victory across the full field?
\end{enumerate}
A formal characterization of completeness—showing that these queries 
exhaust all meaningful causal questions about the decision—would require 
specifying the precise class of admissible queries, which we leave to 
future work. We note, however, that single-edge interventions suffice 
for any query of the form ``was argument $a$ (and its edge $e$) necessary for outcome $o$?'' 
by the modularity of the underlying SCM.
\end{remark}
\section{Consensus Report in \framework}
\label{app:consensus-report}

\framework automatically generates a technical consensus report in Markdown format upon completion of the discussion and evaluation pipeline. The report serves as a reference for checking all quantities available in the \framework framework: all quantities are computed from internal data structures (without additional LLM calls) and each entry is traceable to a formal definition, equation, or procedure introduced in the paper. The report template and generation code are included in the released implementation (Appendix~\ref{app:impl-details}). Table~\ref{tab:consensus-report-index} provides a compact index of the report entries and pointers to the corresponding locations in the paper.

\begin{table*}[!htbp]
\centering
\caption{Compact index of the Markdown technical consensus report emitted by \framework. The \textbf{Reference} column points to where each report entry is defined or discussed in the paper.}
\label{tab:consensus-report-index}
\small
\setlength{\tabcolsep}{4pt}
\renewcommand{\arraystretch}{1.15}
\begin{tabularx}{\textwidth}{@{}p{0.07\textwidth}p{0.26\textwidth}X p{0.22\textwidth}@{}}
\toprule
\textbf{Number} & \textbf{Entry} & \textbf{Short description} & \textbf{Reference} \\
\midrule

\multicolumn{4}{@{}l}{\textbf{Section 0: Configuration and Metadata}}\\
0.1 & Task specification &
User topic $t$, extracted main task $s_M$, and key elements $K$. &
Sec.~\ref{sec:argora} \\

0.2 & Framework configuration &
Experts, rounds, base model, semantics choice, pruning/override settings. &
Table~\ref{tab:modular-building-blocks}, Sec.~\ref{subsec:pruning-sim}, Def.~\ref{def:obs-aligned-override} \\

0.3 & Execution summary &
Counts of main arguments, nodes, and support/attack edges. &
Sec.~\ref{subsec:discussion-qbafs} \\

0.4 & Evaluation summary &
$m^\star$, $m^{\mathrm{obs}}$, parsed answers, override flag, and ID-to-answer mapping. &
Def.~\ref{def:final-consensus}, Def.~\ref{def:obs-aligned-override} \\

\multicolumn{4}{@{}l}{\textbf{Section 1: Main Argument Enumeration}}\\
1.1 & All main arguments (with metadata) &
Verbatim listing of each $m \in \Acal^{\mathrm{main}}$ with identifier; includes source expert, round index, and parsed answer (if applicable). &
Sec.~\ref{subsec:discussion-qbafs} \\

\multicolumn{4}{@{}l}{\textbf{Section 2: QBAF Evaluation}}\\
2.1 & Base scores and criteria ($w(m)$) &
Criterion scores and base score $w(m)$ per main argument; may include criterion-level rationale text (if recorded). &
Def.~\ref{def:modular-semantics}, Sec.~\ref{subsec:prior-strength} \\

2.2 & Per-main QBAF structure &
Rooted-tree QBAF visualization with edge polarity and node metadata. &
Def.~\ref{def:app-rooted-tree}, Sec.~\ref{subsec:discussion-qbafs} \\

2.3 & Final strengths &
Root strengths $\sigma(m)$ under the chosen modular semantics. &
Def.~\ref{def:modular-semantics} \\

2.4 & QBAF consensus distribution &
Normalized consensus $p_{\mathrm{QBAF}}$ over $\Acal^{\mathrm{main}}$. &
Def.~\ref{def:final-consensus} \\

2.5 & Winner and margin &
Winner $m^\star$ and its margin to the closest competitor. &
Def.~\ref{def:final-consensus} \\

2.6 & Winner margin decomposition &
Net lift, pairwise margin decomposition, victory type, and robustness summary. &
App.~\ref{app:margin-decomposition}, Def.~\ref{def:net-lift}--\ref{def:aggregate-robustness} \\

\multicolumn{4}{@{}l}{\textbf{Section 3: Counterfactual Analysis}}\\
3.1 & Most influential direct child &
Edge-local impacts for direct children of the winning root. &
Def.~\ref{def:edge-local-intervention}, Def.~\ref{def:cf-explanations} \\

3.2 & Most decisive argument chain &
Leaf-to-root chain with the largest edge-local impact and interpretation. &
Def.~\ref{def:edge-local-intervention}, Def.~\ref{def:cf-explanations} \\

3.3 & Most influential overall node &
Global ranking of within-$Q_{m^\star}$ nodes by impact magnitude. &
Def.~\ref{def:edge-local-intervention}, Def.~\ref{def:cf-explanations} \\

3.4 & Winner-critical interventions &
Summary metrics, list of winner-changing singleton edge deletions, and brief mechanism notes (if any exist). &
Def.~\ref{def:winner-critical} \\

\multicolumn{4}{@{}l}{\textbf{Section 4: Observational Override Analysis}}\\
4.1 & Observational distribution &
Judge scores and normalized observational consensus $p_{\mathrm{obs}}$. &
Def.~\ref{def:final-consensus} \\

4.2 & Observational winner mapping &
Parsed answer corresponding to $m^{\mathrm{obs}}$ (when applicable). &
Def.~\ref{def:final-consensus} \\

4.3 & Distribution comparison &
Per-argument differences and distributional divergence summary. &
Def.~\ref{def:obs-aligned-override} \\

4.4 & Override decision &
Whether override triggers and the reason (agreement vs. disagreement). &
Def.~\ref{def:obs-aligned-override} \\

4.5 & Override search details &
Candidate interventions and the selected $\hat{\Ical}$ (only if triggered). &
Def.~\ref{def:obs-aligned-override} \\

\multicolumn{4}{@{}l}{\textbf{Section 5: Aggregation Method Comparison}}\\
5.1 & Method comparison table &
Side-by-side winners from QBAF evaluation and the observational judge. &
Def.~\ref{def:final-consensus} \\

5.2 & Winning answer analysis &
Agreement indicator and, when available, comparison to ground truth. &
Def.~\ref{def:final-consensus} \\

\multicolumn{4}{@{}l}{\textbf{Section 6: Final Decision Summary}}\\
6 & Final decision block &
Consolidated decision with key evidence and robustness indicators. &
Def.~\ref{def:final-consensus}, Def.~\ref{def:winner-critical} \\

\bottomrule
\end{tabularx}
\end{table*}

\section{Extended Evaluation}
\label{app:eval-ext}

This appendix supplements the main evaluation (Sec.~\ref{sec:eval}) with additional experimental details,
hyperparameter settings, and statistical analyses that support our claims. We summarize the fixed evaluation settings used throughout the main experiments, provide the full paired significance-testing procedure for Net Reversal Efficiency (NRE) using the one-sided exact McNemar test, and report complete
per-benchmark significance results and diagnostics. We additionally present targeted ablations of the
observation-aligned override mechanism---including JS-divergence diagnostics, intervention-cost proxy
comparisons, and the effect of the winner-confidence gate---to characterize when and how external grounding
improves reliability.

We further include an extended evaluation with a stronger backbone model (\texttt{gpt-5-mini}) to assess whether
the reversal-pair patterns and task-dependent utility profile persist under higher base capability, alongside
additional analyses relevant to hallucination risk reduction through contestation, quantitative semantics, and
causal intervention.

\subsection{Parameter Values Used for Evaluation}

Table~\ref{tab:eval-hparams} summarizes the fixed hyperparameter settings used in the
main evaluation. These parameters govern the amount of discussion (\(R\)), pruning
behavior via similarity-based contextual orthogonality (\(\lambda_{\text{sim}}\)),
and the sensitivity of the observation-aligned override mechanism via its tradeoff
parameter (\(\lambda\)) and winner-confidence gate threshold (\(\tau\)).
Unless explicitly stated otherwise, the same values are used across all benchmarks
and base model variants reported in the main paper.

\begin{table}[t]
\centering
\caption{Evaluation settings used throughout the main experiments.}
\label{tab:eval-hparams}
\small
\setlength{\tabcolsep}{4.5pt}
\renewcommand{\arraystretch}{1.15}
\begin{tabular}{l l c l}
\toprule
\textbf{Symbol} & \textbf{Meaning} & \textbf{Value} & \textbf{Reference} \\
\midrule
$\rho_{\text{sim}}$ & contextual orthogonality similarity threshold & $0.7$ & Sec.~\ref{subsec:pruning-sim} \\
$\lambda$ & observation-aligned tradeoff parameter & $0.05$ & Def.~\ref{def:obs-aligned-override} \\
$\tau$ & winner-confidence gate threshold & $0$ & Def.~\ref{def:obs-aligned-override} \\
\bottomrule
\end{tabular}
\end{table}

\subsection{Significance Testing via McNemar's Test}

This section provides the full specification of the paired significance tests referenced in Sec.~\ref{subsec:eval-methodology}. Our evaluation compares paired outcomes on the same instances: the prior winner versus the final winner (for NRE).
For each instance in the disagreement set $\mathcal{N}_{\mathrm{disagree}}$, we observe one of four correctness transitions: correct$\to$correct, wrong$\to$correct, correct$\to$wrong, or wrong$\to$wrong.
Our key question is:
\emph{``When the mechanism changes the winner's correctness status, does it fix errors more often than it introduces them?''}

To answer this, we focus only on \emph{reversal pairs}: instances where correctness differs between the two paired conditions.
Instances that remain correct or remain wrong under both conditions are informative about base difficulty, but they do not identify the mechanism's directional effect.
This motivates McNemar's test, which conditions on reversal pairs and tests whether positive reversals systematically exceed negative reversals.

\paragraph{Exact McNemar test (one-sided).}
Our hypothesis for NRE is \emph{directional}: we seek evidence that \framework's corrective argumentation fixes errors more often than it introduces them.
Accordingly, we use the one-sided exact McNemar test with alternative
\[
H_1:\; n_{-\to+} > n_{+\to-},
\]
where $n_{-\to+}$ counts \emph{positive reversals} (incorrect $\to$ correct) and $n_{+\to-}$ counts \emph{negative reversals}
(correct $\to$ incorrect) under a paired comparison on the same instances.
The null hypothesis is symmetry of reversal directions,
\[
H_0:\; n_{-\to+} = n_{+\to-},
\]
which implies that, conditional on a reversal occurring, each direction is equally likely.
Let $T := n_{-\to+} + n_{+\to-}$ denote the total number of reversal pairs.
Under $H_0$, the exact McNemar test treats
\[
n_{-\to+} \sim \mathrm{Binomial}\!\left(T,\tfrac{1}{2}\right).
\]
We report the one-sided exact $p$-value (upper tail) corresponding to $H_1$:
\begin{equation}
\label{eq:app-mcnemar-exact-onesided}
p
=
\sum_{k=n_{-\to+}}^{T}
\binom{T}{k}\,2^{-T}.
\end{equation}
If $T=0$ (no reversal pairs), we set $p:=1$.

\paragraph{McNemar test for Net Reversal Efficiency (NRE).}
We recall the positive reversal and negative reversal counts from Sec.~\ref{subsec:eval-methodology}, as follows:
\[
n_{-\to+}
=
\sum_{i \in \mathcal{N}_{\mathrm{disagree}}}
\mathbb{I}\bigl[L(m_i^{(0)}) \neq y_i^* \,\land\, L(m_i^*) = y_i^*\bigr],
\qquad
n_{+\to-}
=
\sum_{i \in \mathcal{N}_{\mathrm{disagree}}}
\mathbb{I}\bigl[L(m_i^{(0)}) = y_i^* \,\land\, L(m_i^*) \neq y_i^*\bigr].
\]
We then compute the one-sided exact McNemar $p$-value for NRE by applying~\eqref{eq:app-mcnemar-exact-onesided} with
$T=n_{-\to+}+n_{+\to-}$, and denote the resulting value by $p_{\mathrm{NRE}}$.
We interpret statistical support for improvement only when the corresponding net metric is nonnegative ($\mathrm{NRE}\geq0$).

\subsection{NRE Significance Test Results and Discussion}
\newcommand{\bestIsDFQuADFive}{\multicolumn{4}{c}{\makecell{\footnotesize $\equiv$}}}

\begin{table*}[t]
  \centering
  \scriptsize
  \setlength{\tabcolsep}{3.0pt}
  \renewcommand{\arraystretch}{1.15}
  \caption{
    Paired significance tests for Argumentative Utility under ARGORA (\textit{pre-override}).
    For each dataset, we report the pre-override accuracy (Acc), Net Reversal Efficiency (NRE),
    disagreement-conditioned positive and negative reversal counts $(n_{-\to+},\, n_{+\to-})$,
    and the exact McNemar $p$-value for NRE ($p_{\mathrm{NRE}}$) as defined in Appendix~\ref{app:eval-ext}.
    Results are shown for DF-QuAD and the best-performing semantics (``Best''). We indicate $p_\mathrm{NRE}$ with N/A if the $\mathrm{NRE}$ value is negative, as our one-sided $p$-value test is only interpretable when the metric is nonnegative.
  }
  \label{tab:argora-nre-significance}

  \resizebox{\textwidth}{!}{%
  \begin{tabular}{l r r
                  c c c c c
                  c c c c c}
    \toprule
    Dataset & $N$ & $|\Ncal_{\textrm{disagree}}|$
      & \multicolumn{5}{c}{ARGORA (\textit{pre-override}): DF-QuAD}
      & \multicolumn{5}{c}{ARGORA (\textit{pre-override}): Best}
    \\
    \cmidrule(lr){4-8}
    \cmidrule(lr){9-13}
    & &
      & \makecell{Acc}
      & \makecell{$n_{-\to+}$}
      & \makecell{$n_{+\to-}$}
      & \makecell{NRE}
      & \makecell{$p_{\mathrm{NRE}}$}
      & \makecell{Acc}
      & \makecell{$n_{-\to+}$}
      & \makecell{$n_{+\to-}$}
      & \makecell{NRE}
      & \makecell{$p_{\mathrm{NRE}}$}
    \\
    \midrule

    MMLU-Pro & 500 & 180
      & 0.638 & 20 & 11 & 0.050 & \textbf{0.074}
      & 0.640 & 19 & 8 & 0.061 & \textbf{0.026}
    \\
    MedQA & 500 & 86
      & 0.812 & 14 & 5 & 0.105 & \textbf{0.032}
      & \bestIsDFQuADFive & \textbf{0.032}
      % If you want to mirror the "≡" convention when Best=DF-QuAD, use:
      % & \bestIsDFQuADFive
    \\
    TruthfulQA & 500 & 61
      & 0.882 & 8 & 4 & 0.066 & \textbf{0.194}
      & 0.886 & 9 & 3 & 0.098 & \textbf{0.073}
    \\
    GPQA Diamond & \textit{198} & 106
      & 0.450 & 12 & 10 & 0.019 & \textbf{0.416}
      & 0.480 & 15 & 6 & 0.085 & \textbf{0.039}
    \\
    MuSR (Murder Mystery) & \textit{250} & 57
      & 0.664 & 14 & 7 & 0.123 & \textbf{0.095}
      & \bestIsDFQuADFive & \textbf{0.095}
      
    \\
    MuSR (Object Placement) & \textit{256} & 75
      & 0.523 & 7 & 11 & $-$0.053 & N/A
      & 0.532 & 8 & 8 & 0.000 & \textbf{0.598}
    \\
    MuSR (Team Allocation) & \textit{250} & 70
      & 0.564 & 9 & 8 & 0.014 & \textbf{0.500}
      & \bestIsDFQuADFive & \textbf{0.500}
    \\

    \bottomrule
  \end{tabular}%
  }
\end{table*}

Table~\ref{tab:argora-nre-significance} reports the complete significance test results for \textbf{Argumentative Utility} (as referenced in Sec.~\ref{subsec:eval-methodology}).
We discuss three findings: (1) the relationship between effect size and statistical power, (2) task-dependent patterns of argumentative efficacy, and (3) the role of semantic choice in shaping both accuracy and paired statistical evidence.

\paragraph{Effect Size vs. Statistical Power: The Reversal-Pair Bottleneck.}
A central observation from Table~\ref{tab:argora-nre-significance} is that large Net Reversal Efficiency (NRE) values do not always correspond to conventional thresholds such as \(p_{\mathrm{NRE}}<0.05\).
This reflects a basic constraint of McNemar testing: statistical power depends on the number of \emph{reversal pairs} \(T\), not on the overall evaluation size.

Consider TruthfulQA under the best-performing semantics: \framework achieves \(\mathrm{NRE}=+0.098\) with \(n_{-\to+}=9\) and \(n_{+\to-}=3\), yielding \(p_{\mathrm{NRE}}=0.073\).
Although this falls short of \(0.05\), the directionality is clear: among the \(T=12\) reversal pairs, ARGORA fixes errors three times as often as it introduces them.
It is also worth noting that \(T\) is small relative to the disagreement set (\(12\) out of \(|\mathcal{N}_{\mathrm{disagree}}|=61\)), which limits power and yields coarse \(p\)-value resolution; for example, with \(T=12\), a \(10{:}2\) split would be required to reach \(p_{\mathrm{NRE}}<0.05\) under the one-sided exact test.

This pattern appears across benchmarks.
On GPQA Diamond (best semantics), \(\mathrm{NRE}=+0.085\) with \(n_{-\to+}=15\) and \(n_{+\to-}=6\) (\(T=21\)) yields \(p_{\mathrm{NRE}}=0.039\).
On the MuSR dataset's Murder Mystery subtask, \(\mathrm{NRE}=+0.123\) with \(n_{-\to+}=14\) and \(n_{+\to-}=7\) (\(T=21\)) yields \(p_{\mathrm{NRE}}=0.095\).
In both cases, positive reversals outnumber negative reversals by at least a \(2{:}1\) ratio, indicating systematic corrective behavior even when formal significance tests might be interpreted as marginal.

Finally, the scarcity of reversal pairs is itself informative, but should be interpreted cautiously:
it suggests that the mechanism flips correctness status relatively rarely, and that its aggregate benefit is driven by a limited set of decisive instances rather than frequent overturning.

\begin{figure}
    \centering
    \includegraphics[width=\linewidth]{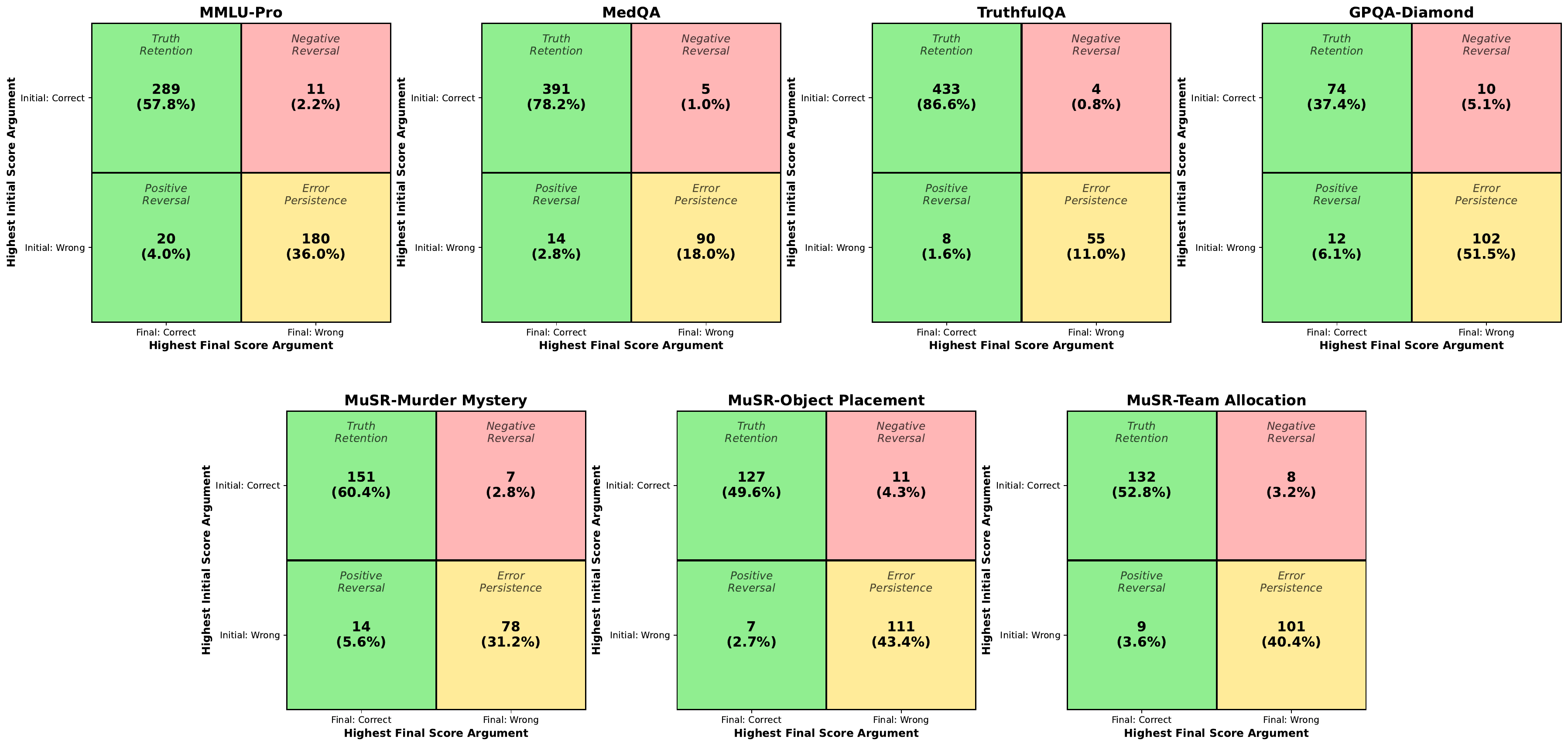}
    \caption{The four transition counts: $n_{+\to+}$ (truth retention), $n_{-\to+}$ (positive reversal), $n_{+\to-}$ (negative reversal), $n_{-\to-}$ (error persistence), as defined in Section~\ref{subsec:eval-methodology} visualized as a confusion matrix for each of the evaluation benchmarks tested on the \texttt{gpt-4o-mini} model, with DF-QuAD used as our choice of quantitative semantics. We highlight the truth retention and positive reversal counts as green (good outcome), the error persistence as yellow, and the negative reversal as red (unwanted outcome).}
    \label{fig:conf-matrix-gpt4omini}
\end{figure}

\begin{figure}
    \centering
    \includegraphics[width=\linewidth]{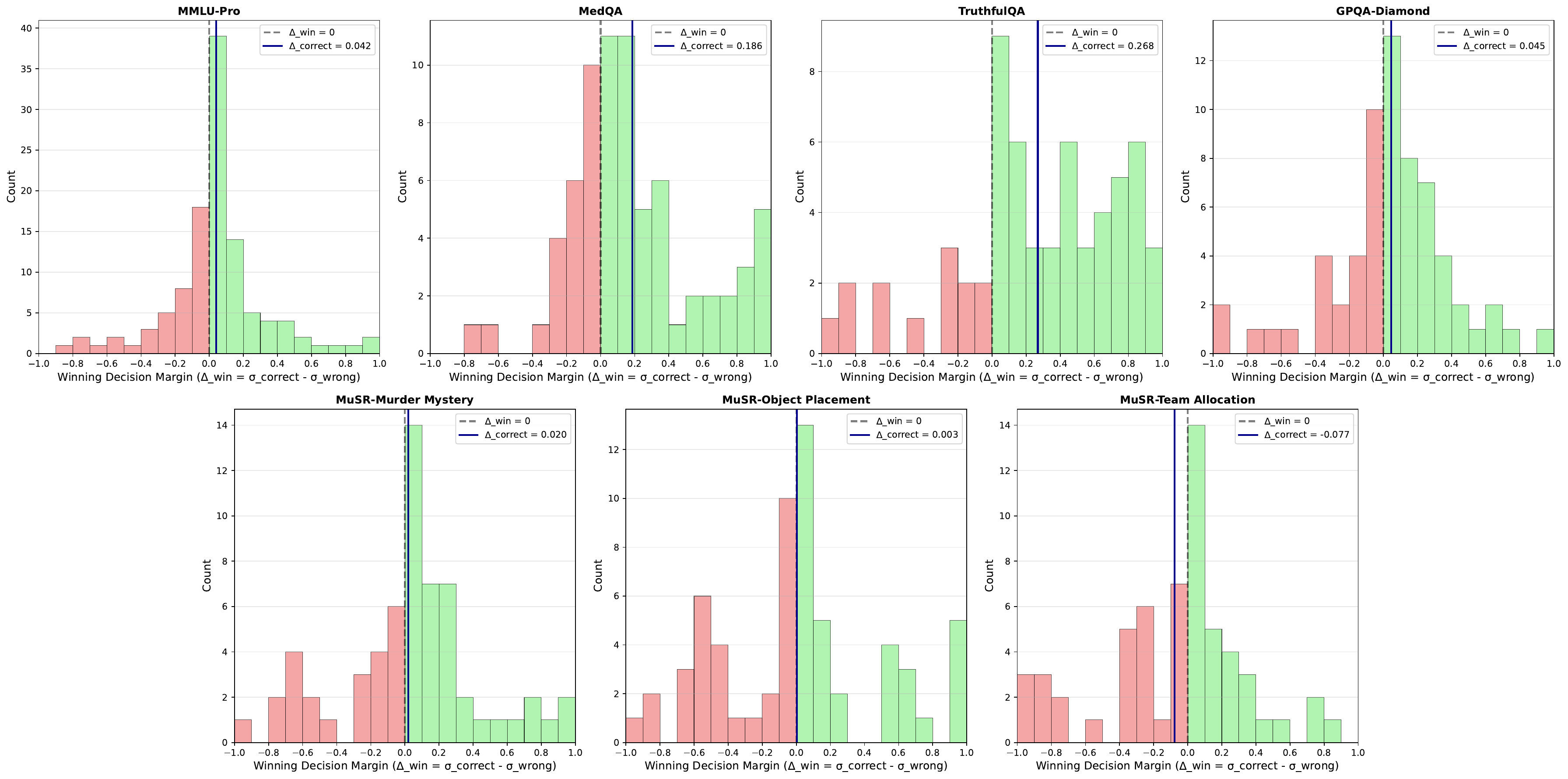}
    \caption{Score distribution histogram plots for each of the evaluation benchmarks tested on the \texttt{gpt-4o-mini} model, with DF-QuAD used as our choice of quantitative semantics. The strongest winning argument difference, $\Delta_{\text{win}}$, is defined as the difference between $\sigma_{\text{correct}}$ and $\sigma_{\text{wrong}}$ (refer to Section~\ref{subsec:eval-methodology}). That is, $\Delta_{\text{win}} = \sigma_{\text{correct}} - \sigma_{\text{wrong}}$. The blue line specifies the average $\Delta_{\text{win}}$ value for the entire benchmark.}
    \label{fig:conf-scoredist-gpt4omini}
\end{figure}

\paragraph{Task-Dependency: Where Argumentation Adds Value.}
The paired tests suggest that argumentative utility depends on task structure, consistent with the accuracy patterns in Table~\ref{tab:argora-results-extended} and the reversal statistics in Table~\ref{tab:argora-nre-significance}. For example, three benchmarks show statistically significant positive NRE (\(p_{\mathrm{NRE}}<0.05\)) under at least one semantics:
\begin{itemize}
    \item \textbf{MMLU-Pro (Best - Euler-based):} \(p_{\mathrm{NRE}}=0.026\), \(\mathrm{NRE}=+0.061\).
    This aligns with a \(+2.2\%\) absolute accuracy improvement in the main results (\(61.8\%\rightarrow64.0\%\)), suggesting that the argumentation process helps resolve cross-domain disputes toward correct answers.
    \item \textbf{MedQA (DF-QuAD):} \(p_{\mathrm{NRE}}=0.032\), \(\mathrm{NRE}=+0.105\).
    Despite a seemingly equivalent result between the baseline majority vote method and \framework (\(81.2\% \text{ vs. } 81.2\%\)), the positive NRE still indicates that \emph{conditional on a correctness flip}, the mechanism tends to correct more than it harms.
    A plausible explanation is a near-ceiling regime: when base accuracy is already high, a small number of harmful flips can outweigh beneficial flips in the aggregate even if the reversal direction is favorable on average.
    \item \textbf{GPQA Diamond (Best - Quadratic Energy):} \(p_{\mathrm{NRE}}=0.039\), \(\mathrm{NRE}=+0.085\).
    Graduate-level science reasoning appears to benefit from structured debate, aligning with accuracy gain in the main results (\(45.9\%\rightarrow48.0\%\)).
\end{itemize}

Two benchmarks show positive NRE with \(0.05 \le p_{\mathrm{NRE}} < 0.1\):
\begin{itemize}
    \item \textbf{TruthfulQA (Best - SD-DF-QuAD):} \(p_{\mathrm{NRE}}=0.073\), \(\mathrm{NRE}=+0.098\), with a substantial accuracy gain in the main results (\(82.4\%\rightarrow88.6\%\)).
    The improvement is consistent with the favorable reversal imbalance and a strong retention of correct decisions on non-reversal instances.
    \item \textbf{MuSR Murder Mystery (DF-QuAD):} \(p_{\mathrm{NRE}}=0.095\), \(\mathrm{NRE}=+0.123\), with an accuracy gain (\(63.2\%\rightarrow66.4\%\)).
    Mystery-solving benefits from integrating partially conflicting evidence, a structure well-matched to multi-expert argumentation.
\end{itemize}

These outcomes support the hypothesis that argumentation is most beneficial when tasks demand misconception detection (TruthfulQA), reconciling cross-domain constraints (MMLU-Pro, GPQA), or integrating conflicting clues (MuSR Murder Mystery).

\textbf{Null or near-null results.}
Two of the MuSR subtasks show little evidence of directional benefit:
\begin{itemize}
    \item \textbf{Object Placement (best):} \(\mathrm{NRE}=0.000\), \(p_{\mathrm{NRE}}=0.598\).
    A symmetric split (\(8\) positive, \(8\) negative) suggests no net corrective bias under reversal-pair conditioning.
    \item \textbf{Team Allocation:} \(\mathrm{NRE}=+0.014\), \(p_{\mathrm{NRE}}=0.500\), indicating a near-zero effect with balanced reversal directions.
\end{itemize}
These tasks emphasize consistent long-context state tracking across multi-step updates; distributing reasoning across multiple experts may dilute critical state information, which suggests a potential mismatch with \framework's argumentation format.

\paragraph{Semantic Choice Shapes Both Accuracy and Paired Evidence.}
Table~\ref{tab:argora-nre-significance} shows that the choice of quantitative modular semantics (as shown in Table~\ref{tab:modular-building-blocks}) affects not only accuracy (Table~\ref{tab:argora-results-extended}) but also the reversal-pair distribution \((n_{-\to+},n_{+\to-})\), and thus the strength of paired statistical evidence.
Notably:
\begin{itemize}
    \item \textbf{MMLU-Pro:} DF-QuAD yields \(p_{\mathrm{NRE}}=0.074\) with \(\mathrm{NRE}=+0.050\), while the best semantics (REB, Euler-based) yields \(p_{\mathrm{NRE}}=0.026\) with \(\mathrm{NRE}=+0.061\).
    Even with similar NRE magnitudes, semantics can change which instances become reversal pairs and how those reversals split by direction.
    \item \textbf{TruthfulQA:} DF-QuAD yields \(p_{\mathrm{NRE}}=0.194\) with \((n_{-\to+},n_{+\to-})=(8,4)\), while the best semantics (SDQ, SD-DF-QuAD) yields \(p_{\mathrm{NRE}}=0.073\) with \((9,3)\).
    Here, shifting a small number of reversals materially changes the paired evidence when \(T\) is small.
    \item \textbf{GPQA Diamond:} DF-QuAD yields \(p_{\mathrm{NRE}}=0.416\) with a near-balanced \((12,10)\), while the best semantics (QE, Quadratic Energy) yields \(p_{\mathrm{NRE}}=0.039\) with \((15,6)\).
    This shift suggests that QE better aligns the aggregation/influence behavior with expert-level scientific reasoning disagreements.
\end{itemize}

These comparisons are \emph{within-ARGORA} (using the same resulting arguments and the argumentation graph) and indicate that semantics is a meaningful design axis that governs how support and attack signals translate into corrective versus harmful reversals.
Because ``best'' is selected per benchmark, we interpret these differences as descriptive evidence that semantics matters, rather than as a claim of universally optimal tuning.
This motivates future work on task-adaptive or learned semantics that can specialize to domain-specific disagreement structures.

\subsection{Main Evaluation with a Different Base Model}
\label{app:eval-alt-base}

We additionally repeat the main evaluation using a stronger base model to assess whether the reversal-pair patterns and task-dependent utility profile persist under higher base capability. In particular, we use the more capable \texttt{GPT-5-mini} model over the \texttt{GPT-4o-mini} used for our main evaluations. Due to the increased API cost of using the \texttt{GPT-5-mini} model over \texttt{GPT-4o-mini}, we limit our test to the five most difficult benchmark datasets from our original evaluation. We repeat the main evaluations (Table~\ref{tab:argora-results-extended-gpt5mini}) and report the relevant significance testing (Table~\ref{tab:argora-nre-significance-gpt5mini}) for each of the evaluation benchmarks.

\begin{table*}[t]
  \centering
  \scriptsize
  \setlength{\tabcolsep}{3.0pt}
  \renewcommand{\arraystretch}{1.15}
  \caption{
    Evaluation of ARGORA across datasets with a different baseline model (unlike Table~\ref{tab:argora-results-extended}, all variants use the \texttt{GPT-5-mini} model).
    In this experiment, we choose our number of experts to be 5. Therefore, the baseline majority vote (MV$_5$) uses 5 independent samples. We report both DF-QuAD and the best-performing semantics (``Best'') under the same conditions.
    If the best-performing semantics coincides with DF-QuAD, we replace the entire ``Best'' block with the marker $\equiv$ to indicate identical semantics (and thus identical entries) to the DF-QuAD block.
    Metrics: Acc = accuracy; NRE (Net Reversal Efficiency), marked as $\textcolor{blue}{+}$ or $\textcolor{red}{-}$; $\Delta_{\mathrm{correct}}$ = correctness margin. We mark the best-performing variant with \textbf{boldface}, and the second best-performing variant with an \underline{underline} (we only consider the \textit{post-override} metrics of the DF-QuAD semantics).
  }
  \label{tab:argora-results-extended-gpt5mini}

  \resizebox{\textwidth}{!}{%
  \begin{tabular}{l r
                Y Y Y Y        % Baselines (cols 3--6)
                c c c c c c
                B c}  
    \toprule
    Dataset & $N$
    & \multicolumn{4}{c}{Baselines}
    & \multicolumn{6}{c}{ARGORA (\textit{pre-override})}
    & \multicolumn{2}{c}{ARGORA (\textit{post-override})}
    \\
    \cmidrule(lr){3-6}
    \cmidrule(lr){7-12}
    \cmidrule(lr){13-14}

    % second header row: baseline prompt types + semantics groupings
    &
    & \multicolumn{1}{c}{\makecell{Direct\\1$\times$}}
    & \multicolumn{1}{c}{\makecell{Direct\\MV$_3$}}
    & \multicolumn{1}{c}{\makecell{ARGORA-like\\(modified\ CoT)\\1$\times$}}
    & \multicolumn{1}{c}{\makecell{ARGORA-like\\(modified\ CoT)\\MV$_3$}}
    & \multicolumn{3}{c}{DF-QuAD}
    & \multicolumn{3}{c}{Best}
    & \multicolumn{1}{c}{DF-QuAD}  % <- removes blue header background
    & \multicolumn{1}{c}{Best}
    \\
    \cmidrule(lr){7-9}\cmidrule(lr){10-12}

    % third header row: metric names
    &
    & \multicolumn{1}{c}{Acc}
    & \multicolumn{1}{c}{Acc}
    & \multicolumn{1}{c}{Acc}
    & \multicolumn{1}{c}{Acc}
    & \makecell{Acc} & \makecell{NRE} & \makecell{$\Delta_{\mathrm{correct}}$}
    & \makecell{Acc} & \makecell{NRE} & \makecell{$\Delta_{\mathrm{correct}}$}
    & \multicolumn{1}{c}{Acc}      % <- removes blue header background
    & \makecell{Acc}
    \\
    \midrule

    MMLU-Pro & 500
      & 0.780 & \textbf{0.821} & \underline{0.808} & \textbf{0.821}
      & 0.798 & \textcolor{blue}{$+$} & \textcolor{blue}{$+$0.035}
      & 0.806 & \textcolor{blue}{$+$} & \textcolor{blue}{$+$0.027} (\texttt{SDQ})
      & 0.794 ($\textcolor{red}{\downarrow}$) & 0.804 ($\textcolor{red}{\downarrow}$)
    \\
    GPQA Diamond & \textit{198}
      & 0.671 & 0.742 & 0.747 & \underline{0.762}
      & 0.768 & \textcolor{red}{$-$} & \textcolor{blue}{$+$0.092}
      & 0.778 & \textcolor{red}{$-$} & \textcolor{blue}{$+$0.141} (\texttt{QE})
      & \textbf{0.768} ($-$) & 0.778 ($-$)
    \\
    MuSR (Murder Mystery) & \textit{250}
      & 0.568 & \underline{0.704} & 0.680 & 0.696
      & 0.712 & \textcolor{blue}{$+$} & \textcolor{blue}{$+$0.089}
      & 0.724 & \textcolor{blue}{$+$} & \textcolor{blue}{$+$0.029} (\texttt{EBT})
      & \textbf{0.720} ($\textcolor{blue}{\uparrow}$) & 0.724 ($-$)
    \\
    MuSR (Object Placement) & \textit{256}
      & 0.441 & \underline{0.480} & 0.441 & 0.453
      & 0.492 & \textcolor{blue}{$+$} & \textcolor{red}{$-$0.042}
      & \bestIsDFQuADThree
      & \textbf{0.492} ($-$) & \bestIsDFQuADOne
    \\
    MuSR (Team Allocation) & \textit{250}
      & 0.712 & 0.764 & 0.744 & \textbf{0.796}
      & 0.780 & \textcolor{blue}{$+$} & \textcolor{blue}{$+$0.146}
      & 0.780 & \textcolor{blue}{$+$} & \textcolor{blue}{$+$0.096} (\texttt{SDQ})
      & \underline{0.776} ($\textcolor{red}{\downarrow}$) & 0.780 ($-$)
    \\
    \bottomrule
  \end{tabular}%
  }
\end{table*}

\begin{table*}[t]
  \centering
  \scriptsize
  \setlength{\tabcolsep}{3.0pt}
  \renewcommand{\arraystretch}{1.15}
  \caption{
    Paired significance tests for Argumentative Utility under ARGORA (\textit{pre-override}) to complement the evaluation of \framework under the \texttt{GPT-5-mini} base model (Table~\ref{tab:argora-results-extended-gpt5mini}).
    For each dataset, we report the pre-override accuracy (Acc), Net Reversal Efficiency (NRE),
    disagreement-conditioned positive and negative reversal counts $(n_{-\to+},\, n_{+\to-})$,
    and the exact McNemar $p$-value for NRE ($p_{\mathrm{NRE}}$) as defined in Appendix~\ref{app:eval-ext}.
    Results are shown for DF-QuAD and the best-performing semantics (``Best''). We indicate $p_\mathrm{NRE}$ with N/A if the $\mathrm{NRE}$ value is negative, as our one-sided $p$-value test is only interpretable when the metric is nonnegative.
  }
  \label{tab:argora-nre-significance-gpt5mini}

  \resizebox{\textwidth}{!}{%
  \begin{tabular}{l r r
                  c c c c c
                  c c c c c}
    \toprule
    Dataset & $N$ & $|\Ncal_{\textrm{disagree}}|$
      & \multicolumn{5}{c}{ARGORA (\textit{pre-override}): DF-QuAD}
      & \multicolumn{5}{c}{ARGORA (\textit{pre-override}): Best}
    \\
    \cmidrule(lr){4-8}
    \cmidrule(lr){9-13}
    & &
      & \makecell{Acc}
      & \makecell{$n_{-\to+}$}
      & \makecell{$n_{+\to-}$}
      & \makecell{NRE}
      & \makecell{$p_{\mathrm{NRE}}$}
      & \makecell{Acc}
      & \makecell{$n_{-\to+}$}
      & \makecell{$n_{+\to-}$}
      & \makecell{NRE}
      & \makecell{$p_{\mathrm{NRE}}$}
    \\
    \midrule

    MMLU-Pro & 500 & 169
      & 0.798 & 18 & 16 & 0.012 & \textbf{0.432}
      & 0.806 & 11 & 4 & 0.041 & \textbf{0.059}
    \\
    GPQA Diamond & \textit{198} & 97
      & 0.768 & 9 & 14 & $-$0.051 & N/A
      & 0.778 & 12 & 14 & $-$0.021 & N/A
    \\
    MuSR (Murder Mystery) & \textit{250} & 88
      & 0.712 & 15 & 9 & 0.068 & \textbf{0.153}
      & 0.724 & 12 & 6 & 0.068 & \textbf{0.119}
    \\
    MuSR (Object Placement) & \textit{256} & 95
      & 0.492 & 16 & 10 & 0.063 & \textbf{0.163}
      & \bestIsDFQuADFive & \textbf{0.163}
    \\
    MuSR (Team Allocation) & \textit{250} & 83
      & 0.780 & 17 & 3 & 0.169 & \textbf{0.001}
      & 0.780 & 17 & 2 & 0.181 & \textbf{0.0004}
    \\

    \bottomrule
  \end{tabular}%
  }
\end{table*}

\begin{figure}
    \centering
    \includegraphics[width=0.8\linewidth]{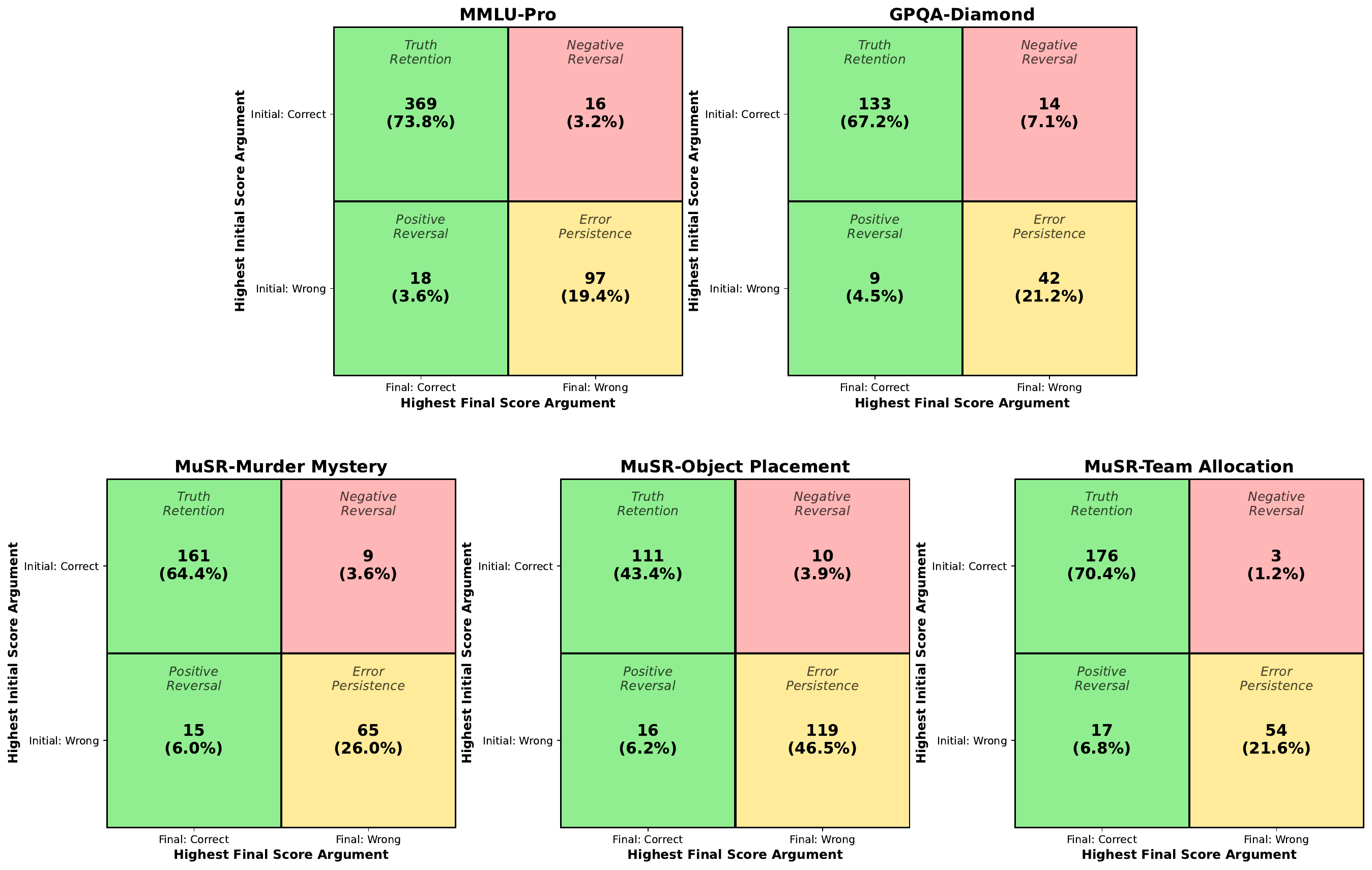}
    \caption{The four transition counts: $n_{+\to+}$ (truth retention), $n_{-\to+}$ (positive reversal), $n_{+\to-}$ (negative reversal), $n_{-\to-}$ (error persistence), as defined in Section~\ref{subsec:eval-methodology} visualized as a confusion matrix for each of the evaluation benchmarks tested on the \texttt{gpt-5-mini} model, with DF-QuAD used as our choice of quantitative semantics. We highlight the truth retention and positive reversal counts as green (good outcome), the error persistence as yellow, and the negative reversal as red (unwanted outcome).}
    \label{fig:conf-matrix-gpt5mini}
\end{figure}

\begin{figure}
    \centering
    \includegraphics[width=0.8\linewidth]{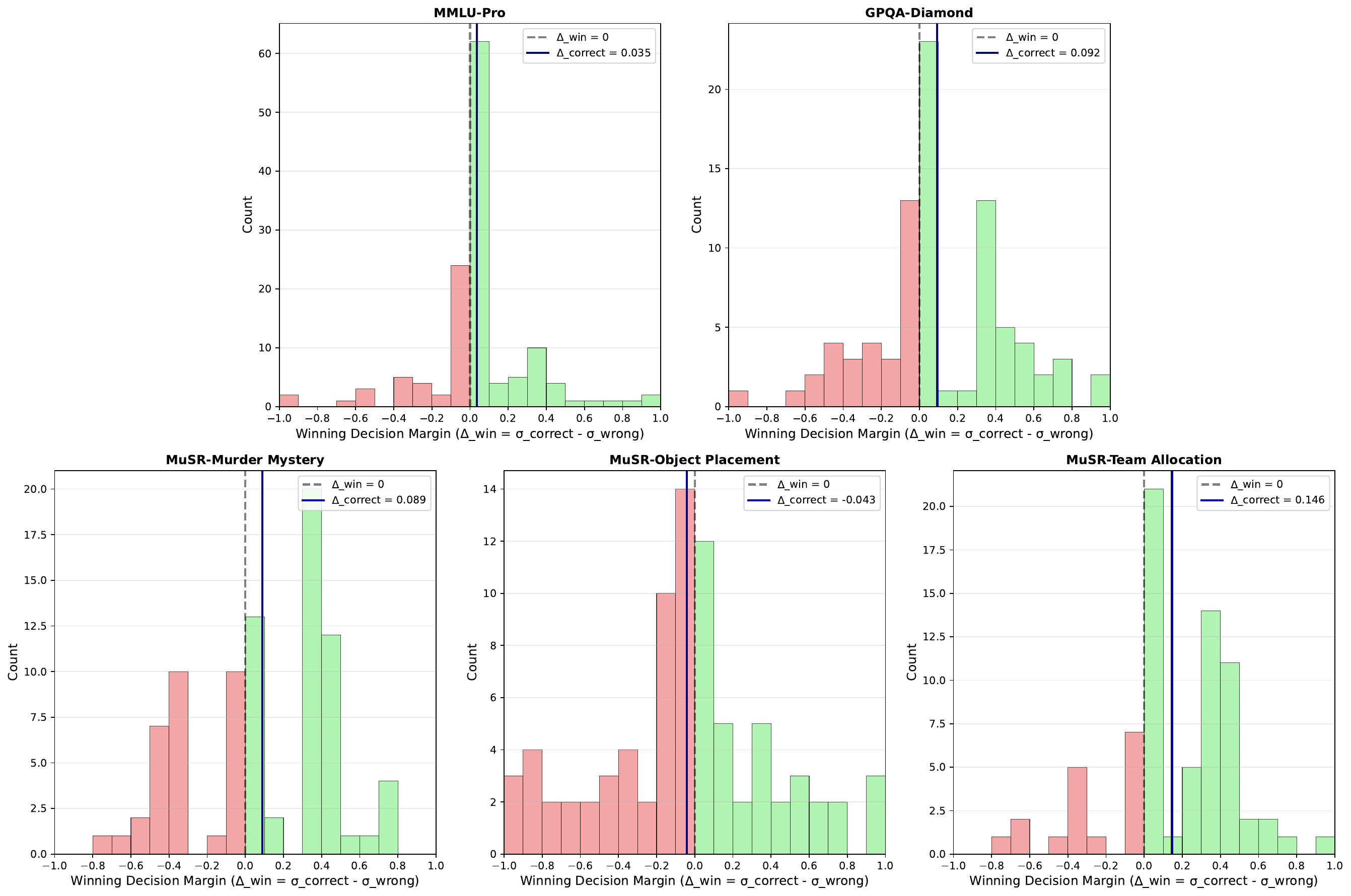}
    \caption{Score distribution histogram plots for each of the evaluation benchmarks tested on the \texttt{gpt-5-mini} model, with DF-QuAD used as our choice of quantitative semantics. The strongest winning argument difference, $\Delta_{\text{win}}$, is defined as the difference between $\sigma_{\text{correct}}$ and $\sigma_{\text{wrong}}$ (refer to Section~\ref{subsec:eval-methodology}). That is, $\Delta_{\text{win}} = \sigma_{\text{correct}} - \sigma_{\text{wrong}}$. The blue line specifies the average $\Delta_{\text{win}}$ value for the entire benchmark.}
    \label{fig:conf-scoredist-gpt5mini}
\end{figure}

\paragraph{Discussion.}
The results shown for the \texttt{gpt-5-mini} model (Table~\ref{tab:argora-results-extended-gpt5mini}) suggest when the base model is strong enough to maintain coherent long-context state and represent multiple constraints faithfully, structured argumentation can become \emph{more} effective on tasks where \emph{collective reasoning} matters.
This appears most clearly on MuSR Team Allocation, where the \texttt{GPT-4o-mini} setting showed near-null paired evidence (Table~\ref{tab:argora-nre-significance}: \(\mathrm{NRE}=+0.014\), \(p_{\mathrm{NRE}}=0.500\)), but the \texttt{GPT-5-mini} setting yields the strongest paired signal (\(\mathrm{NRE}=+0.169\), \(p_{\mathrm{NRE}}=0.001\), with \((n_{-\to+},n_{+\to-})=(17,3)\)).
We interpret this as a \textbf{capability-threshold effect}: below a minimum competence level, multi-expert fragmentation can harm state tracking (so argumentation adds noise), but above that threshold, disagreement becomes informative and \framework can reliably surface inconsistencies and reconcile constraints.

\subsection{Hallucination Risk Reduction}
\label{app:hallucination-risk}

In this section, we provide an extended analysis of how \framework's core mechanisms---contestation, quantitative semantics, and causal intervention---address persistent challenges in language models, particularly those related to factual accuracy and output verification.
While our benchmarks do not directly measure free-form factuality, the main evaluations and paired reversal tests provide a concrete empirical signal of \emph{when} the mechanism behaves in a corrective direction (i.e., when decision flips are more often wrong$\to$correct than correct$\to$wrong), which is closely aligned with reducing hallucination-like failure modes in black-box settings (Table~\ref{tab:argora-nre-significance} and Table~\ref{tab:argora-nre-significance-gpt5mini}).

Large language models (LLMs) may produce \emph{hallucinations}: fluent, confident statements that are unsupported or false.
In the common black-box deployment setting, the model output is often the only available artifact, making hallucinations
hard to detect, contest, or correct without additional structure or external grounding~\cite{hallucination_survey}. \framework does not claim to \emph{guarantee} factual correctness in the absence of external evidence; rather, its design choices
can \emph{reduce hallucination risk} by (i) increasing contestability, (ii) exposing mechanism-level diagnostics that quantify when corrections are likely versus risky, and (iii) enabling guardrailed corrections when an additional signal (human or model-based) is available.
Empirically, our paired tests show that on several benchmarks \framework exhibits a directional corrective bias on reversal pairs (Table~\ref{tab:argora-nre-significance}), meaning that \emph{when} the mechanism changes a decision, it more often fixes an error than introduces one---a property that is directly relevant to mitigating hallucination-like failures under limited observability.

\paragraph{(1) Contestation reduces \emph{uncontested} hallucinations.}
\framework elicits multiple main arguments and supplementary support/attack arguments, forcing candidate answers to face
explicit counter-arguments.
A brittle or internally inconsistent claim is more likely to be challenged during the debate phase, reducing the chance that it survives as a single unopposed narrative.
While multi-agent discussion cannot eliminate shared misconceptions, it can reduce the frequency of hallucinations that persist \emph{without} any articulated opposition~\cite{10.5555/3692070.3692537}.
In our evaluations, this intuition is consistent with reversal-pair patterns in which positive reversals can outnumber negative reversals on misconception- and reasoning-heavy benchmarks (Table~\ref{tab:argora-nre-significance}), suggesting that contestation can create opportunities for error detection rather than merely amplifying noise.

\paragraph{(2) Quantitative semantics acts as a consistency filter.}
By mapping the resulting arguments from the discussion process to a QBAF and applying a fixed modular semantics, \framework converts qualitative debate
into a quantitative score over candidate main arguments.
This provides a principled way to downweight candidate answers that rely on weak auxiliary claims or that are strongly attacked, and to prefer answers with stronger internal support under the chosen semantics.
In this way, the scoring layer serves as a soft filter that penalizes internally weak or poorly supported content---a common failure mode of hallucinations~\cite{dhuliawala-etal-2024-chain}.
Notably, Table~\ref{tab:argora-nre-significance} shows that semantics can change the reversal-pair distribution \((n_{-\to+},n_{+\to-})\) and thus the strength of paired evidence even when the same argument graph is used; this supports the view that quantitative semantics is not merely a post-hoc score, but a control factor that governs whether contestation yields corrective versus harmful flips.

\paragraph{(3) Mechanism-level counterfactuals identify fragile rationales.}
Casting the QBAF evaluation as a deterministic SCM (see Prop.~\ref{prop:eval-unique-scm}) makes it possible to ask targeted counterfactual questions of the form:
\emph{``Would the selected answer remain strong if this particular influence edge were removed?''}
Edge-local interventions (Def.~\ref{def:edge-local-intervention}) and the observation-aligned objective (Def.~\ref{def:obs-aligned-override}) provide a computationally cheap diagnostic of sensitivity: if the strength of the main argument changes substantially under small, localized removals, then the decision is \emph{fragile} and may rely on a
small number of influential argumentative channels.
Such fragility can be surfaced as a warning signal (e.g., ``this decision is highly sensitive to a single supporting claim''), which is especially relevant when hallucinations manifest as a confident conclusion supported by a small number of brittle premises.
Viewed alongside the paired statistics (Table~\ref{tab:argora-nre-significance}), these counterfactual probes provide complementary diagnostics: reversal-pair bias characterizes \emph{net corrective tendency}, while edge-local sensitivity characterizes \emph{how brittle the internal justification is}.

\paragraph{(4) Human- or judge-assisted priors enable direct correction without changing the mechanism.}
In \framework, base scores (priors) enter the SCM as exogenous inputs.
This means the framework is compatible with substituting or calibrating priors using external feedback: a capable human reviewer (or a trusted tool-driven verifier) can directly
adjust priors for questionable arguments without altering the downstream causal propagation defined by the modular semantics~\cite{yao2023reactsynergizingreasoningacting}.
Moreover, the observation-aligned override mechanism (Def.~\ref{def:obs-aligned-override}) operationalizes this idea: if an external (QBAF-agnostic) signal prefers a
different main argument, \framework can search for a minimal-cost edge-local intervention that aligns the internal consensus with
the external preference, which may correct certain hallucination-induced failures in a controlled and interpretable manner. We further analyze this guardrailed correction mechanism via observation-aligned overrides in App.~\ref{app:abl-override-cost-types}.

\paragraph{Clarifications and limitations.}
The above mechanisms and design choices from \framework as described primarily address hallucination risk arising from (i) lack of contestation, (ii) weak internal support
structure, or (iii) reliance on a small number of fragile premises.
They do not guarantee factuality when all agents share the same false belief or when no reliable external signal is available.
Further, the paired reversal tests themselves highlight that corrective behavior is \emph{not universal}: depending on task structure and base capability, reversal-pair conditioning can be neutral or even negative (Table~\ref{tab:argora-nre-significance-gpt5mini}), indicating that additional contestation can sometimes over-correct a previously-correct decision.
Therefore, \framework should be viewed as complementary to grounding methods (e.g., knowledge base retrieval, tools, or human verification): it adds interpretability and guardrailed control over how
internal argumentative structure affects the final decision, and it provides actionable diagnostics and intervention handles
for reducing hallucination risk in practice.

\subsection{Observation-Aligned Override by Cost Type}
\label{app:abl-override-cost-types}

As motivated in Section~\ref{subsec:final-consensus-override}, the observation-aligned override addresses a
fundamental grounding failure mode: internal causal coherence does not necessarily imply external correctness.
In this section, we examine when the observational signal meaningfully disagrees with the internal consensus, and perform an ablation on the override objective with respect to the intervention-cost proxy and its tradeoff weight.

\paragraph{Cost-proxy ablation motivation.}
While the main paper uses the SCM-state cost $C_{\mathrm{state}}$ as the default perturbation proxy, this
choice is not unique. One can penalize interventions by (i) induced changes in SCM node strengths (state-change
costs) or (ii) induced disruption of graph structure (edge-based costs), and each can be computed either globally
or only over the intervention-reachable substructure. We therefore examine multiple cost formulations to test
whether override behavior is sensitive to how ``minimal causal edits'' are operationalized, and to verify that our default formulation is not a fragile design choice.

Concretely, we evaluate (i) whether distributional divergence provides a computable diagnostic of override
headroom, (ii) how the choice of cost proxy changes the ``structural footprint'' required to align with
$p_{\mathrm{obs}}$, and (iii) whether the winner-confidence gate prevents harmful over-correction across tasks.
From the hallucination-risk perspective (App.~\ref{app:hallucination-risk}), this mechanism can be viewed as a
controlled correction step: when an external signal flags the internal winner as likely wrong, we seek the smallest causal intervention that achieves label-level alignment.

\paragraph{Four cost metrics $C_k(\Ical)$.}
We compare the efficacy of the override process by varying the definition of the intervention-cost term.
The four proxies below form two families (state-change vs.\ edge-structure), each with a global main-argument-set
variant and an intervention-reachable variant.

\noindent\textbf{(1) SCM-state cost, $C_{\mathrm{state}}$.}
Refer to Def.~\ref{def:obs-aligned-override}. Let
\[
  N_{\mathrm{tot}}
  \;:=\;
  \sum_{m\in\Acal^{\mathrm{main}}} |\Acal_m|.
\]
Then
\begin{equation}
\label{eq:cost-state}
  C_{\mathrm{state}}(\Ical)
  \;:=\;
  \frac{1}{N_{\mathrm{tot}}}
  \sum_{m\in\Acal^{\mathrm{main}}}\ \sum_{a\in\Acal_m}
  \bigl|\sigma^{Q_m}(a) - \sigma^{Q_m^{\Ical}}(a)\bigr|.
\end{equation}

\noindent\textbf{(2) Intervention-reachable SCM-state cost, $C_{\mathrm{state}}^{\mathrm{reach}}$.}
Instead of averaging over all nodes, we average only over those whose
strengths are affected by $\Ical$. Define the affected index set
\begin{equation}
\label{eq:affected-set}
  \Acal_{\mathrm{reach}}(\Ical)
  \;:=\;
  \bigl\{\, (m,a)\,:\, m\in\Acal^{\mathrm{main}},\ a\in\Acal_m,\ 
  \sigma^{Q_m^{\Ical}}(a)\neq \sigma^{Q_m}(a)\,\bigr\}.
\end{equation}
We then define
\begin{equation}
\label{eq:cost-state-aff}
  C_{\mathrm{state}}^{\mathrm{reach}}(\Ical)
  \;:=\;
  \begin{cases}
  \displaystyle
  \frac{1}{|\Acal_{\mathrm{reach}}(\Ical)|}
  \sum_{(m,a)\in\Acal_{\mathrm{reach}}(\Ical)}
  \bigl|\sigma^{Q_m}(a)-\sigma^{Q_m^{\Ical}}(a)\bigr|,
  & |\Acal_{\mathrm{reach}}(\Ical)|>0,\\[1.0em]
  0, & |\Acal_{\mathrm{reach}}(\Ical)|=0.
  \end{cases}
\end{equation}

\noindent\textbf{(3) Edge-count cost, $C_{\mathrm{edge}}$.}
Let $E_m$ be the edge set of the rooted subgraph for main argument $m$, and let
\[
  M_{\mathrm{tot}}
  \;:=\;
  \sum_{m\in\Acal^{\mathrm{main}}} |E_m|.
\]
Since $\Ical=(I_m)_{m\in\Acal^{\mathrm{main}}}$ deletes exactly one edge
$(u\!\to\!\pa(u))$ per $u\in I_m$, the total number of deleted edges is
$\sum_{m\in\Acal^{\mathrm{main}}}|I_m|$. We use the normalized edge-count proxy
\begin{equation}
\label{eq:cost-edge}
  C_{\mathrm{edge}}(\Ical)
  \;:=\;
  \frac{\sum_{m\in\Acal^{\mathrm{main}}}|I_m|}{M_{\mathrm{tot}}}.
\end{equation}
(Under a single-edge intervention regime, $\sum_m |I_m|\in\{0,1\}$.)

\noindent\textbf{(4) Intervention-reachable edge-count cost, $C_{\mathrm{edge}}^{\mathrm{reach}}$.}
Edge deletions can disconnect entire subtrees from the main-argument root. Let
$\Acal_m^{\Ical,\uparrow}\subseteq \Acal_m$ denote the set of nodes that remain connected
to the root $m$ in the intervened tree $Q_m^{\Ical}$. Define the set of edges that lose
root-reachability:
\begin{equation}
\label{eq:lost-edges}
  E_{\mathrm{lost}}(\Ical)
  \;:=\;
  \bigl\{(m, u\!\to\! v)\,:\, m\in\Acal^{\mathrm{main}},\ (u\!\to\! v)\in E_m,\ 
  u\notin \Acal_m^{\Ical,\uparrow}\ \text{or}\ v\notin \Acal_m^{\Ical,\uparrow}\bigr\}.
\end{equation}
We then normalize by the total number of edges:
\begin{equation}
\label{eq:cost-edge-reach}
  C_{\mathrm{edge}}^{\mathrm{reach}}(\Ical)
  \;:=\;
  \frac{|E_{\mathrm{lost}}(\Ical)|}{M_{\mathrm{tot}}}.
\end{equation}
Intuitively, this proxy charges an intervention according to the size of the
substructure that becomes disconnected from the main-argument root.

\noindent\textbf{Evaluation methodology.}
We sweep the override objective over (i) the cost proxy
$C_k \in \{C_{\mathrm{state}},\, C_{\mathrm{state}}^{\mathrm{reach}},\, C_{\mathrm{edge}},\, C_{\mathrm{edge}}^{\mathrm{reach}}\}$
and (ii) the tradeoff weight $\lambda$. For each setting, we apply the resulting override policy across all
evaluation instances and report beneficial and harmful overrides (and their net) under both the ungated policy and
the winner-confidence gated policy (Def.~\ref{def:obs-aligned-override}). The analysis done in this section uses
the experimental settings in Section~\ref{sec:eval} of the main paper, where we use \texttt{gpt-4o-mini} as the base
model and DF-QuAD as the choice of semantics.

\begin{table*}[t]
  \centering
  \scriptsize
  \setlength{\tabcolsep}{2.6pt}
  \caption{Jensen--Shannon divergence (JS divergence) statistics between the observation distribution $p_{\mathrm{obs}}$ and the baseline consensus distribution $p_{\mathrm{QBAF}}$ (\emph{pre-override}). We stratify by whether the top-1 labels match (\textit{same}) or differ (\textit{different}). All numeric entries are truncated to at most 3 significant digits.}
  \label{tab:jsd-stats}
  \resizebox{\textwidth}{!}{%
  \begin{tabular}{l r r r r r r r r r r r r}
    \toprule
    Dataset & $N$ & $N_{\mathrm{same}}$ & $N_{\mathrm{diff}}$
      & \multicolumn{3}{c}{Overall}
      & \multicolumn{3}{c}{Same label}
      & \multicolumn{3}{c}{Different label}
    \\
    \cmidrule(lr){5-7}\cmidrule(lr){8-10}\cmidrule(lr){11-13}
      & & &
      & \makecell{Mean\\(Std)} & Median & P90
      & \makecell{Mean\\(Std)} & Median & P90
      & \makecell{Mean\\(Std)} & Median & P90
    \\
    \midrule
    MMLU-Pro & 500 & 459 & 41
      & \makecell{0.00413\\(0.0145)} & 0.000066 & 0.00763
      & \makecell{0.00369\\(0.0141)} & 0.000046 & 0.00557
      & \makecell{0.00911\\(0.0168)} & 0.00117 & 0.0322
    \\
    MedQA & 500 & 484 & 16
      & \makecell{0.00411\\(0.0179)} & 0.000023 & 0.00204
      & \makecell{0.00372\\(0.0169)} & 0.000018 & 0.00123
      & \makecell{0.0159\\(0.0343)} & 0.000394 & 0.0503
    \\
    TruthfulQA & 500 & 488 & 12
      & \makecell{0.00299\\(0.0129)} & 0.000057 & 0.00187
      & \makecell{0.00273\\(0.0121)} & 0.000054 & 0.000894
      & \makecell{0.0135\\(0.0306)} & 0.00288 & 0.0221
    \\
    GPQA Diamond & 198 & 171 & 27
      & \makecell{0.00872\\(0.0235)} & 0.000318 & 0.0238
      & \makecell{0.00789\\(0.0235)} & 0.000169 & 0.0185
      & \makecell{0.0140\\(0.0226)} & 0.00298 & 0.0559
    \\
    MuSR (Murder Mysteries) & 250 & 237 & 13
      & \makecell{0.00725\\(0.0263)} & 0.000029 & 0.00842
      & \makecell{0.00525\\(0.0196)} & 0.000024 & 0.00222
      & \makecell{0.0438\\(0.0704)} & 0.00382 & 0.105
    \\
    MuSR (Object Placement) & 256 & 245 & 11
      & \makecell{0.0075\\(0.0208)} & 0.000054 & 0.0219
      & \makecell{0.0063\\(0.0174)} & 0.000051 & 0.0199
      & \makecell{0.0343\\(0.0511)} & 0.0120 & 0.101
    \\
    MuSR (Team Allocation) & 250 & 234 & 16
      & \makecell{0.0138\\(0.0357)} & 0.000086 & 0.0673
      & \makecell{0.0102\\(0.0307)} & 0.000081 & 0.0399
      & \makecell{0.0664\\(0.0562)} & 0.0631 & 0.152
    \\
    \bottomrule
  \end{tabular}%
  }
\end{table*}

\begin{table*}[t]
  \centering
  \scriptsize
  \setlength{\tabcolsep}{2.8pt}
  \caption{Cost statistics of the intervention that minimizes JS divergence (i.e., $\lambda=0$), computed over instance where the top-1 labels differ. All numeric entries are truncated to at most 3 significant digits.}
  \label{tab:cost-stats}
  \resizebox{\textwidth}{!}{%
  \begin{tabular}{l r r r r r r r r r}
    \toprule
    Dataset & $N_{\mathrm{diff}}$
      & \multicolumn{2}{c}{$C_{\mathrm{edge}}$}
      & \multicolumn{2}{c}{$C_{\mathrm{edge}}^{\mathrm{reach}}$}
      & \multicolumn{2}{c}{$C_{\mathrm{state}}$}
      & \multicolumn{2}{c}{$C_{\mathrm{state}}^{\mathrm{reach}}$}
    \\
    \cmidrule(lr){3-4}\cmidrule(lr){5-6}\cmidrule(lr){7-8}\cmidrule(lr){9-10}
      & & \makecell{Mean\\(Std)} & Median
        & \makecell{Mean\\(Std)} & Median
        & \makecell{Mean\\(Std)} & Median
        & \makecell{Mean\\(Std)} & Median
    \\
    \midrule
    MMLU-Pro & 41
      & 0.0622 (0.0379) & 0.0532
      & 0.117 (0.0615) & 0.0993
      & 0.00899 (0.00997) & 0.00632
      & 0.127 (0.113) & 0.123
    \\
    MedQA & 16
      & 0.0538 (0.0255) & 0.0454
      & 0.117 (0.0575) & 0.120
      & 0.0093 (0.0104) & 0.00464
      & 0.104 (0.0803) & 0.102
    \\
    TruthfulQA & 12
      & 0.0664 (0.0326) & 0.0526
      & 0.133 (0.0577) & 0.125
      & 0.0131 (0.0121) & 0.00597
      & 0.160 (0.0668) & 0.149
    \\
    GPQA Diamond & 27
      & 0.0360 (0.00933) & 0.0344
      & 0.0732 (0.0266) & 0.0689
      & 0.00407 (0.00389) & 0.00282
      & 0.0994 (0.072) & 0.0734
    \\
    MuSR (Murder Mysteries) & 13
      & 0.0679 (0.0256) & 0.0575
      & 0.111 (0.0711) & 0.0888
      & 0.0191 (0.0113) & 0.0203
      & 0.255 (0.209) & 0.186
    \\
    MuSR (Object Placement) & 11
      & 0.129 (0.0121) & 0.125
      & 0.211 (0.0775) & 0.250
      & 0.0541 (0.0423) & 0.0335
      & 0.354 (0.168) & 0.263
    \\
    MuSR (Team Allocation) & 16
      & 0.0632 (0.0283) & 0.0513
      & 0.145 (0.0919) & 0.102
      & 0.0127 (0.00745) & 0.0146
      & 0.276 (0.187) & 0.257
    \\
    \bottomrule
  \end{tabular}%
  }
\end{table*}

\paragraph{JS divergence as a diagnostic for override headroom.}
Since the benchmarks we consider are evaluated at the \emph{answer-label} level, we initially compare the baseline
and observational winners via the answer-mapping function $L:\Acal^{\mathrm{main}}\to\Ycal$
(Sec.~\ref{subsec:eval-methodology}). Concretely, let $m^\star$ and $m^{\mathrm{obs}}$ be the baseline and observational
winners from Def.~\ref{def:final-consensus}, and define the induced predicted labels
$y^\star := L(m^\star)$ and $y^{\mathrm{obs}} := L(m^{\mathrm{obs}})$.
We then partition instances into same answer label ($y^\star=y^{\mathrm{obs}}$) and
different answer label ($y^\star\neq y^{\mathrm{obs}}$), yielding counts
$N_{\mathrm{same}}$ and $N_{\mathrm{diff}}$ in Tab.~\ref{tab:jsd-stats}.

To quantify distributional misalignment, we report $\JS(p_{\mathrm{QBAF}}\Vert p_{\mathrm{obs}})$
(Tab.~\ref{tab:jsd-stats}), where $p_{\mathrm{QBAF}}$ and $p_{\mathrm{obs}}$ are the normalized consensus
distributions over main arguments (Def.~\ref{def:final-consensus}). The statistics exhibit two stable trends.
When the induced labels agree, the JS divergence median is extremely small (on the order of $10^{-5}$ across
datasets), indicating that $p_{\mathrm{QBAF}}$ and $p_{\mathrm{obs}}$ allocate probability mass similarly even when
they are produced by different mechanisms. When the induced labels disagree (different label), the divergence
increases by orders of magnitude, with medians ranging from $3.94\times 10^{-4}$ (MedQA) up to
$6.31\times 10^{-2}$ (MuSR Team Allocation), and high-percentile values reaching $>10^{-1}$. This observational vs.\
argumentative discrepancy is where an observation-aligned override can be meaningful: there is nontrivial
distributional misalignment \emph{and} a disagreement at the answer-label level.

Finally, $N_{\mathrm{diff}}$ is a small minority of each benchmark split (e.g., 12/500 for TruthfulQA,
16/250 for MuSR Team Allocation, and 27/198 for GPQA Diamond), implying that label-level disagreement between
$p_{\mathrm{QBAF}}$ and $p_{\mathrm{obs}}$ is relatively rare. Together with the winner-confidence gate in
Def.~\ref{def:obs-aligned-override}, this suggests that observation-aligned overrides will naturally be triggered only on a limited subset of instances.

\paragraph{Cost statistics under pure alignment ($\lambda=0$).}
To calibrate the perturbation scale, we next measure the intervention cost incurred by the \emph{best-alignment}
intervention, i.e., the configuration
$\Ical^\star \in \argmin_{\Ical\in\Ical_{\mathrm{cand}}}\JS(p_{\mathrm{QBAF}}^\Ical\Vert p_{\mathrm{obs}})$
obtained by setting $\lambda=0$ in $J(\Ical)$ (Def.~\ref{def:obs-aligned-override}), subject to the same override
feasibility constraints (in particular, the alignment target is defined at the label level through $L$). We report
costs only on valid disagreement instances ($N_{\mathrm{diff}}$, Tab.~\ref{tab:cost-stats}), since there is no reason
to undergo override when the two answer labels match.

Tab.~\ref{tab:cost-stats} shows that absolute cost scales differ by proxy:
$C_{\mathrm{state}}$ tends to be smallest (medians $\approx 10^{-3}$--$10^{-2}$),
$C_{\mathrm{edge}}$ is moderate (medians $\approx 3\times 10^{-2}$--$1.3\times 10^{-1}$), and the reachable variants
are larger still by construction (medians often $\approx 7\times 10^{-2}$--$2.6\times 10^{-1}$). Overall, even when
enforcing label-level alignment with the observational signal, strong JS improvements are often achievable with
localized interventions, but the measured ``structural footprint'' can depend strongly on the chosen $C_k$.

\paragraph{$\lambda$ sweep range selection.}
The tables provide a principled guide for selecting $\lambda$: we want $\lambda C_k(\Ical)$ to be comparable to the
typical JS divergence values; otherwise the optimizer either behaves like pure
alignment ($\lambda$ too small) or no overrides at all ($\lambda$ too large). Using medians as an
order-of-magnitude calibration, the implied break-even ratios
$\lambda \approx \mathrm{median}(\JS_{\mathrm{diff}})/\mathrm{median}(C_k)$ span roughly
$3\times 10^{-3}$ up to $\approx 4$ across datasets and cost types (with the largest values arising when
$\JS_{\mathrm{diff}}$ is high while $C_{\mathrm{state}}$ is low).

To cover this range while keeping a single shared grid across all four $C_k$, we use a log-spaced sweep of $\lambda$
as follows:
\[
  \lambda \in \{0,\ 0.001,\ 0.002,\ 0.005,\ 0.01,\ 0.02,\ 0.05,\ 0.1,\ 0.2,\ 0.5\}.
\]
This includes (i) the pure-alignment endpoint ($\lambda=0$), (ii) the expected trade-off region for all four cost
types under label-level disagreement, and (iii) sufficiently large values to empirically confirm override inhibition
(where the optimal configuration increasingly prefers non-interventions due to perturbation penalty).

\paragraph{Selecting a single default $(C_k,\lambda)$.}
For the main experiments we require a single, fixed override configuration shared across datasets. We select this
configuration by prioritizing \emph{safety} (nonnegative net gain across most datasets) and \emph{stability}
(behavior not overly sensitive to rare disagreement instances). Empirically, this criterion favors the original
SCM-state cost $C_{\mathrm{state}}$ with a moderate tradeoff weight $\lambda=0.05$, which tends to suppress harmful corrections while still permitting beneficial overrides during observational disagreement. Accordingly, we adopt $C_{\mathrm{state}}$ with $\lambda=0.05$ as the default configuration in all experiments in our work.

% Requires: \usepackage{booktabs,multirow,xcolor}
\begin{table*}[!htbp]
\centering
\caption{Override grid results (full $\lambda$ sweep). Each cell reports \emph{positive / negative} (\textbf{net gain}), where positive counts beneficial overrides (\emph{wrong} QBAF winner $\rightarrow$ \emph{correct} observational winner), negative counts harmful overrides (\emph{correct} QBAF winner $\rightarrow$ \emph{wrong} observational winner), and net gain$=$positive$-$negative. We compare the ungated policy (\texttt{ungated}) against the winner-confidence gated policy (\texttt{gated}).}
\label{tab:override-grid-all-lambdas}
\scriptsize
\setlength{\tabcolsep}{3pt}
% Color net gains: blue for positive, red for negative (requires xcolor)
\newcommand{\net}[1]{\ifnum#1>0 \mathbf{\textcolor{blue}{#1}}\else\ifnum#1<0 \mathbf{\textcolor{red}{#1}}\else \mathbf{#1}\fi\fi}
\newcommand{\ovn}[3]{\(#1/#2\,(\net{#3})\)}
\resizebox{\textwidth}{!}{%
\begin{tabular}{@{}lll*{10}{c}@{}}
\toprule
Dataset & Cost & Variant & \multicolumn{10}{c}{$\lambda$} \\
\cmidrule(lr){4-13}
 & & & 0 & 0.001 & 0.002 & 0.005 & 0.01 & 0.02 & 0.05 & 0.1 & 0.2 & 0.5 \\
\midrule
\multirow{8}{*}{MMLU-Pro} & \multirow{2}{*}{$C_{\mathrm{edge}}$} & \texttt{ungated} & \ovn{3}{3}{0} & \ovn{2}{3}{-1} & \ovn{2}{3}{-1} & \ovn{2}{3}{-1} & \ovn{2}{3}{-1} & \ovn{2}{3}{-1} & \ovn{2}{1}{+1} & \ovn{1}{0}{+1} & \ovn{0}{0}{0} & \ovn{0}{0}{0} \\
 &  & \texttt{gated} & \ovn{0}{1}{-1} & \ovn{0}{1}{-1} & \ovn{0}{1}{-1} & \ovn{0}{1}{-1} & \ovn{0}{1}{-1} & \ovn{0}{1}{-1} & \ovn{0}{0}{0} & \ovn{0}{0}{0} & \ovn{0}{0}{0} & \ovn{0}{0}{0} \\
\addlinespace[2pt]
 & \multirow{2}{*}{$C_{\mathrm{edge}}^{\mathrm{reach}}$} & \texttt{ungated} & \ovn{3}{3}{0} & \ovn{2}{3}{-1} & \ovn{2}{3}{-1} & \ovn{2}{3}{-1} & \ovn{2}{3}{-1} & \ovn{2}{2}{0} & \ovn{1}{0}{+1} & \ovn{0}{0}{0} & \ovn{0}{0}{0} & \ovn{0}{0}{0} \\
 &  & \texttt{gated} & \ovn{0}{1}{-1} & \ovn{0}{1}{-1} & \ovn{0}{1}{-1} & \ovn{0}{1}{-1} & \ovn{0}{1}{-1} & \ovn{0}{0}{0} & \ovn{0}{0}{0} & \ovn{0}{0}{0} & \ovn{0}{0}{0} & \ovn{0}{0}{0} \\
\addlinespace[2pt]
 & \multirow{2}{*}{$C_{\mathrm{state}}$} & \texttt{ungated} & \ovn{3}{3}{0} & \ovn{3}{3}{0} & \ovn{3}{3}{0} & \ovn{2}{3}{-1} & \ovn{2}{3}{-1} & \ovn{2}{3}{-1} & \ovn{2}{3}{-1} & \ovn{2}{3}{-1} & \ovn{1}{3}{-2} & \ovn{0}{2}{-2} \\
 &  & \texttt{gated} & \ovn{0}{1}{-1} & \ovn{0}{1}{-1} & \ovn{0}{1}{-1} & \ovn{0}{1}{-1} & \ovn{0}{1}{-1} & \ovn{0}{1}{-1} & \ovn{0}{1}{-1} & \ovn{0}{1}{-1} & \ovn{0}{1}{-1} & \ovn{0}{1}{-1} \\
\addlinespace[2pt]
 & \multirow{2}{*}{$C_{\mathrm{state}}^{\mathrm{reach}}$} & \texttt{ungated} & \ovn{3}{3}{0} & \ovn{2}{3}{-1} & \ovn{2}{3}{-1} & \ovn{2}{3}{-1} & \ovn{2}{3}{-1} & \ovn{0}{3}{-3} & \ovn{0}{0}{0} & \ovn{0}{0}{0} & \ovn{0}{0}{0} & \ovn{0}{0}{0} \\
 &  & \texttt{gated} & \ovn{0}{1}{-1} & \ovn{0}{1}{-1} & \ovn{0}{1}{-1} & \ovn{0}{1}{-1} & \ovn{0}{1}{-1} & \ovn{0}{1}{-1} & \ovn{0}{0}{0} & \ovn{0}{0}{0} & \ovn{0}{0}{0} & \ovn{0}{0}{0} \\
\midrule
\multirow{8}{*}{MedQA} & \multirow{2}{*}{$C_{\mathrm{edge}}$} & \texttt{ungated} & \ovn{4}{3}{+1} & \ovn{2}{2}{0} & \ovn{2}{2}{0} & \ovn{2}{1}{+1} & \ovn{1}{1}{0} & \ovn{1}{0}{+1} & \ovn{0}{0}{0} & \ovn{0}{0}{0} & \ovn{0}{0}{0} & \ovn{0}{0}{0} \\
 &  & \texttt{gated} & \ovn{2}{1}{+1} & \ovn{1}{1}{0} & \ovn{1}{1}{0} & \ovn{1}{0}{+1} & \ovn{0}{0}{0} & \ovn{0}{0}{0} & \ovn{0}{0}{0} & \ovn{0}{0}{0} & \ovn{0}{0}{0} & \ovn{0}{0}{0} \\
\addlinespace[2pt]
 & \multirow{2}{*}{$C_{\mathrm{edge}}^{\mathrm{reach}}$} & \texttt{ungated} & \ovn{4}{3}{+1} & \ovn{2}{2}{0} & \ovn{2}{2}{0} & \ovn{1}{1}{0} & \ovn{1}{0}{+1} & \ovn{0}{0}{0} & \ovn{0}{0}{0} & \ovn{0}{0}{0} & \ovn{0}{0}{0} & \ovn{0}{0}{0} \\
 &  & \texttt{gated} & \ovn{2}{1}{+1} & \ovn{1}{1}{0} & \ovn{1}{1}{0} & \ovn{0}{0}{0} & \ovn{0}{0}{0} & \ovn{0}{0}{0} & \ovn{0}{0}{0} & \ovn{0}{0}{0} & \ovn{0}{0}{0} & \ovn{0}{0}{0} \\
\addlinespace[2pt]
 & \multirow{2}{*}{$C_{\mathrm{state}}$} & \texttt{ungated} & \ovn{4}{3}{+1} & \ovn{4}{3}{+1} & \ovn{4}{3}{+1} & \ovn{4}{3}{+1} & \ovn{3}{3}{0} & \ovn{3}{3}{0} & \ovn{3}{2}{+1} & \ovn{1}{1}{0} & \ovn{0}{1}{-1} & \ovn{0}{0}{0} \\
 &  & \texttt{gated} & \ovn{2}{1}{+1} & \ovn{2}{1}{+1} & \ovn{2}{1}{+1} & \ovn{2}{1}{+1} & \ovn{2}{1}{+1} & \ovn{2}{1}{+1} & \ovn{2}{0}{+2} & \ovn{1}{0}{+1} & \ovn{0}{0}{0} & \ovn{0}{0}{0} \\
\addlinespace[2pt]
 & \multirow{2}{*}{$C_{\mathrm{state}}^{\mathrm{reach}}$} & \texttt{ungated} & \ovn{4}{3}{+1} & \ovn{3}{3}{0} & \ovn{3}{2}{+1} & \ovn{1}{1}{0} & \ovn{1}{1}{0} & \ovn{0}{0}{0} & \ovn{0}{0}{0} & \ovn{0}{0}{0} & \ovn{0}{0}{0} & \ovn{0}{0}{0} \\
 &  & \texttt{gated} & \ovn{2}{1}{+1} & \ovn{2}{1}{+1} & \ovn{2}{0}{+2} & \ovn{0}{0}{0} & \ovn{0}{0}{0} & \ovn{0}{0}{0} & \ovn{0}{0}{0} & \ovn{0}{0}{0} & \ovn{0}{0}{0} & \ovn{0}{0}{0} \\
\midrule
\multirow{8}{*}{TruthfulQA} & \multirow{2}{*}{$C_{\mathrm{edge}}$} & \texttt{ungated} & \ovn{2}{3}{-1} & \ovn{2}{3}{-1} & \ovn{2}{3}{-1} & \ovn{2}{3}{-1} & \ovn{1}{2}{-1} & \ovn{1}{2}{-1} & \ovn{1}{0}{+1} & \ovn{0}{0}{0} & \ovn{0}{0}{0} & \ovn{0}{0}{0} \\
 &  & \texttt{gated} & \ovn{0}{0}{0} & \ovn{0}{0}{0} & \ovn{0}{0}{0} & \ovn{0}{0}{0} & \ovn{0}{0}{0} & \ovn{0}{0}{0} & \ovn{0}{0}{0} & \ovn{0}{0}{0} & \ovn{0}{0}{0} & \ovn{0}{0}{0} \\
\addlinespace[2pt]
 & \multirow{2}{*}{$C_{\mathrm{edge}}^{\mathrm{reach}}$} & \texttt{ungated} & \ovn{2}{3}{-1} & \ovn{2}{3}{-1} & \ovn{2}{3}{-1} & \ovn{2}{2}{0} & \ovn{1}{2}{-1} & \ovn{0}{1}{-1} & \ovn{0}{0}{0} & \ovn{0}{0}{0} & \ovn{0}{0}{0} & \ovn{0}{0}{0} \\
 &  & \texttt{gated} & \ovn{0}{0}{0} & \ovn{0}{0}{0} & \ovn{0}{0}{0} & \ovn{0}{0}{0} & \ovn{0}{0}{0} & \ovn{0}{0}{0} & \ovn{0}{0}{0} & \ovn{0}{0}{0} & \ovn{0}{0}{0} & \ovn{0}{0}{0} \\
\addlinespace[2pt]
 & \multirow{2}{*}{$C_{\mathrm{state}}$} & \texttt{ungated} & \ovn{2}{3}{-1} & \ovn{2}{3}{-1} & \ovn{2}{3}{-1} & \ovn{2}{3}{-1} & \ovn{2}{3}{-1} & \ovn{2}{3}{-1} & \ovn{2}{3}{-1} & \ovn{1}{2}{-1} & \ovn{1}{1}{0} & \ovn{0}{0}{0} \\
 &  & \texttt{gated} & \ovn{0}{0}{0} & \ovn{0}{0}{0} & \ovn{0}{0}{0} & \ovn{0}{0}{0} & \ovn{0}{0}{0} & \ovn{0}{0}{0} & \ovn{0}{0}{0} & \ovn{0}{0}{0} & \ovn{0}{0}{0} & \ovn{0}{0}{0} \\
\addlinespace[2pt]
 & \multirow{2}{*}{$C_{\mathrm{state}}^{\mathrm{reach}}$} & \texttt{ungated} & \ovn{2}{3}{-1} & \ovn{2}{3}{-1} & \ovn{2}{3}{-1} & \ovn{1}{2}{-1} & \ovn{1}{2}{-1} & \ovn{0}{0}{0} & \ovn{0}{0}{0} & \ovn{0}{0}{0} & \ovn{0}{0}{0} & \ovn{0}{0}{0} \\
 &  & \texttt{gated} & \ovn{0}{0}{0} & \ovn{0}{0}{0} & \ovn{0}{0}{0} & \ovn{0}{0}{0} & \ovn{0}{0}{0} & \ovn{0}{0}{0} & \ovn{0}{0}{0} & \ovn{0}{0}{0} & \ovn{0}{0}{0} & \ovn{0}{0}{0} \\
\midrule
\multirow{8}{*}{GPQA Diamond} & \multirow{2}{*}{$C_{\mathrm{edge}}$} & \texttt{ungated} & \ovn{5}{5}{0} & \ovn{4}{4}{0} & \ovn{4}{4}{0} & \ovn{3}{3}{0} & \ovn{3}{3}{0} & \ovn{2}{3}{-1} & \ovn{1}{1}{0} & \ovn{0}{1}{-1} & \ovn{0}{1}{-1} & \ovn{0}{0}{0} \\
 &  & \texttt{gated} & \ovn{2}{0}{+2} & \ovn{1}{0}{+1} & \ovn{1}{0}{+1} & \ovn{1}{0}{+1} & \ovn{1}{0}{+1} & \ovn{1}{0}{+1} & \ovn{0}{0}{0} & \ovn{0}{0}{0} & \ovn{0}{0}{0} & \ovn{0}{0}{0} \\
\addlinespace[2pt]
 & \multirow{2}{*}{$C_{\mathrm{edge}}^{\mathrm{reach}}$} & \texttt{ungated} & \ovn{5}{5}{0} & \ovn{4}{4}{0} & \ovn{3}{3}{0} & \ovn{3}{3}{0} & \ovn{2}{3}{-1} & \ovn{1}{1}{0} & \ovn{0}{1}{-1} & \ovn{0}{1}{-1} & \ovn{0}{0}{0} & \ovn{0}{0}{0} \\
 &  & \texttt{gated} & \ovn{2}{0}{+2} & \ovn{1}{0}{+1} & \ovn{1}{0}{+1} & \ovn{1}{0}{+1} & \ovn{1}{0}{+1} & \ovn{0}{0}{0} & \ovn{0}{0}{0} & \ovn{0}{0}{0} & \ovn{0}{0}{0} & \ovn{0}{0}{0} \\
\addlinespace[2pt]
 & \multirow{2}{*}{$C_{\mathrm{state}}$} & \texttt{ungated} & \ovn{5}{5}{0} & \ovn{5}{4}{+1} & \ovn{5}{4}{+1} & \ovn{4}{4}{0} & \ovn{4}{4}{0} & \ovn{3}{4}{-1} & \ovn{3}{4}{-1} & \ovn{3}{4}{-1} & \ovn{3}{4}{-1} & \ovn{0}{2}{-2} \\
 &  & \texttt{gated} & \ovn{2}{0}{+2} & \ovn{2}{0}{+2} & \ovn{2}{0}{+2} & \ovn{1}{0}{+1} & \ovn{1}{0}{+1} & \ovn{1}{0}{+1} & \ovn{1}{0}{+1} & \ovn{1}{0}{+1} & \ovn{1}{0}{+1} & \ovn{0}{0}{0} \\
\addlinespace[2pt]
 & \multirow{2}{*}{$C_{\mathrm{state}}^{\mathrm{reach}}$} & \texttt{ungated} & \ovn{5}{5}{0} & \ovn{3}{4}{-1} & \ovn{3}{4}{-1} & \ovn{3}{4}{-1} & \ovn{2}{2}{0} & \ovn{0}{2}{-2} & \ovn{0}{0}{0} & \ovn{0}{0}{0} & \ovn{0}{0}{0} & \ovn{0}{0}{0} \\
 &  & \texttt{gated} & \ovn{2}{0}{+2} & \ovn{1}{0}{+1} & \ovn{1}{0}{+1} & \ovn{1}{0}{+1} & \ovn{0}{0}{0} & \ovn{0}{0}{0} & \ovn{0}{0}{0} & \ovn{0}{0}{0} & \ovn{0}{0}{0} & \ovn{0}{0}{0} \\
\midrule
\multirow{8}{*}{\shortstack{MuSR\\(Murder Mysteries)}} & \multirow{2}{*}{$C_{\mathrm{edge}}$} & \texttt{ungated} & \ovn{2}{2}{0} & \ovn{2}{2}{0} & \ovn{2}{2}{0} & \ovn{2}{1}{+1} & \ovn{2}{1}{+1} & \ovn{1}{1}{0} & \ovn{0}{1}{-1} & \ovn{0}{1}{-1} & \ovn{0}{1}{-1} & \ovn{0}{0}{0} \\
 &  & \texttt{gated} & \ovn{0}{0}{0} & \ovn{0}{0}{0} & \ovn{0}{0}{0} & \ovn{0}{0}{0} & \ovn{0}{0}{0} & \ovn{0}{0}{0} & \ovn{0}{0}{0} & \ovn{0}{0}{0} & \ovn{0}{0}{0} & \ovn{0}{0}{0} \\
\addlinespace[2pt]
 & \multirow{2}{*}{$C_{\mathrm{edge}}^{\mathrm{reach}}$} & \texttt{ungated} & \ovn{2}{2}{0} & \ovn{2}{2}{0} & \ovn{2}{2}{0} & \ovn{2}{1}{+1} & \ovn{2}{1}{+1} & \ovn{1}{1}{0} & \ovn{0}{1}{-1} & \ovn{0}{1}{-1} & \ovn{0}{0}{0} & \ovn{0}{0}{0} \\
 &  & \texttt{gated} & \ovn{0}{0}{0} & \ovn{0}{0}{0} & \ovn{0}{0}{0} & \ovn{0}{0}{0} & \ovn{0}{0}{0} & \ovn{0}{0}{0} & \ovn{0}{0}{0} & \ovn{0}{0}{0} & \ovn{0}{0}{0} & \ovn{0}{0}{0} \\
\addlinespace[2pt]
 & \multirow{2}{*}{$C_{\mathrm{state}}$} & \texttt{ungated} & \ovn{2}{2}{0} & \ovn{2}{2}{0} & \ovn{2}{2}{0} & \ovn{2}{2}{0} & \ovn{2}{2}{0} & \ovn{2}{2}{0} & \ovn{1}{2}{-1} & \ovn{1}{2}{-1} & \ovn{0}{1}{-1} & \ovn{0}{1}{-1} \\
 &  & \texttt{gated} & \ovn{0}{0}{0} & \ovn{0}{0}{0} & \ovn{0}{0}{0} & \ovn{0}{0}{0} & \ovn{0}{0}{0} & \ovn{0}{0}{0} & \ovn{0}{0}{0} & \ovn{0}{0}{0} & \ovn{0}{0}{0} & \ovn{0}{0}{0} \\
\addlinespace[2pt]
 & \multirow{2}{*}{$C_{\mathrm{state}}^{\mathrm{reach}}$} & \texttt{ungated} & \ovn{2}{2}{0} & \ovn{2}{2}{0} & \ovn{2}{2}{0} & \ovn{1}{2}{-1} & \ovn{1}{2}{-1} & \ovn{0}{1}{-1} & \ovn{0}{0}{0} & \ovn{0}{0}{0} & \ovn{0}{0}{0} & \ovn{0}{0}{0} \\
 &  & \texttt{gated} & \ovn{0}{0}{0} & \ovn{0}{0}{0} & \ovn{0}{0}{0} & \ovn{0}{0}{0} & \ovn{0}{0}{0} & \ovn{0}{0}{0} & \ovn{0}{0}{0} & \ovn{0}{0}{0} & \ovn{0}{0}{0} & \ovn{0}{0}{0} \\
\midrule
\multirow{8}{*}{\shortstack{MuSR\\(Object Placement)}} & \multirow{2}{*}{$C_{\mathrm{edge}}$} & \texttt{ungated} & \ovn{1}{2}{-1} & \ovn{1}{1}{0} & \ovn{1}{1}{0} & \ovn{1}{1}{0} & \ovn{1}{1}{0} & \ovn{1}{1}{0} & \ovn{0}{1}{-1} & \ovn{0}{0}{0} & \ovn{0}{0}{0} & \ovn{0}{0}{0} \\
 &  & \texttt{gated} & \ovn{0}{1}{-1} & \ovn{0}{0}{0} & \ovn{0}{0}{0} & \ovn{0}{0}{0} & \ovn{0}{0}{0} & \ovn{0}{0}{0} & \ovn{0}{0}{0} & \ovn{0}{0}{0} & \ovn{0}{0}{0} & \ovn{0}{0}{0} \\
\addlinespace[2pt]
 & \multirow{2}{*}{$C_{\mathrm{edge}}^{\mathrm{reach}}$} & \texttt{ungated} & \ovn{1}{2}{-1} & \ovn{1}{1}{0} & \ovn{1}{1}{0} & \ovn{1}{1}{0} & \ovn{1}{1}{0} & \ovn{1}{1}{0} & \ovn{0}{0}{0} & \ovn{0}{0}{0} & \ovn{0}{0}{0} & \ovn{0}{0}{0} \\
 &  & \texttt{gated} & \ovn{0}{1}{-1} & \ovn{0}{0}{0} & \ovn{0}{0}{0} & \ovn{0}{0}{0} & \ovn{0}{0}{0} & \ovn{0}{0}{0} & \ovn{0}{0}{0} & \ovn{0}{0}{0} & \ovn{0}{0}{0} & \ovn{0}{0}{0} \\
\addlinespace[2pt]
 & \multirow{2}{*}{$C_{\mathrm{state}}$} & \texttt{ungated} & \ovn{1}{2}{-1} & \ovn{1}{2}{-1} & \ovn{1}{2}{-1} & \ovn{1}{1}{0} & \ovn{1}{1}{0} & \ovn{1}{1}{0} & \ovn{0}{1}{-1} & \ovn{0}{1}{-1} & \ovn{0}{0}{0} & \ovn{0}{0}{0} \\
 &  & \texttt{gated} & \ovn{0}{1}{-1} & \ovn{0}{1}{-1} & \ovn{0}{1}{-1} & \ovn{0}{0}{0} & \ovn{0}{0}{0} & \ovn{0}{0}{0} & \ovn{0}{0}{0} & \ovn{0}{0}{0} & \ovn{0}{0}{0} & \ovn{0}{0}{0} \\
\addlinespace[2pt]
 & \multirow{2}{*}{$C_{\mathrm{state}}^{\mathrm{reach}}$} & \texttt{ungated} & \ovn{1}{2}{-1} & \ovn{1}{1}{0} & \ovn{1}{1}{0} & \ovn{1}{1}{0} & \ovn{1}{1}{0} & \ovn{0}{0}{0} & \ovn{0}{0}{0} & \ovn{0}{0}{0} & \ovn{0}{0}{0} & \ovn{0}{0}{0} \\
 &  & \texttt{gated} & \ovn{0}{1}{-1} & \ovn{0}{0}{0} & \ovn{0}{0}{0} & \ovn{0}{0}{0} & \ovn{0}{0}{0} & \ovn{0}{0}{0} & \ovn{0}{0}{0} & \ovn{0}{0}{0} & \ovn{0}{0}{0} & \ovn{0}{0}{0} \\
\midrule
\multirow{8}{*}{\shortstack{MuSR\\(Team Allocation)}} & \multirow{2}{*}{$C_{\mathrm{edge}}$} & \texttt{ungated} & \ovn{0}{5}{-5} & \ovn{0}{5}{-5} & \ovn{0}{4}{-4} & \ovn{0}{4}{-4} & \ovn{0}{3}{-3} & \ovn{0}{2}{-2} & \ovn{0}{2}{-2} & \ovn{0}{2}{-2} & \ovn{0}{1}{-1} & \ovn{0}{1}{-1} \\
 &  & \texttt{gated} & \ovn{0}{0}{0} & \ovn{0}{0}{0} & \ovn{0}{0}{0} & \ovn{0}{0}{0} & \ovn{0}{0}{0} & \ovn{0}{0}{0} & \ovn{0}{0}{0} & \ovn{0}{0}{0} & \ovn{0}{0}{0} & \ovn{0}{0}{0} \\
\addlinespace[2pt]
 & \multirow{2}{*}{$C_{\mathrm{edge}}^{\mathrm{reach}}$} & \texttt{ungated} & \ovn{0}{5}{-5} & \ovn{0}{4}{-4} & \ovn{0}{4}{-4} & \ovn{0}{4}{-4} & \ovn{0}{2}{-2} & \ovn{0}{2}{-2} & \ovn{0}{2}{-2} & \ovn{0}{1}{-1} & \ovn{0}{1}{-1} & \ovn{0}{1}{-1} \\
 &  & \texttt{gated} & \ovn{0}{0}{0} & \ovn{0}{0}{0} & \ovn{0}{0}{0} & \ovn{0}{0}{0} & \ovn{0}{0}{0} & \ovn{0}{0}{0} & \ovn{0}{0}{0} & \ovn{0}{0}{0} & \ovn{0}{0}{0} & \ovn{0}{0}{0} \\
\addlinespace[2pt]
 & \multirow{2}{*}{$C_{\mathrm{state}}$} & \texttt{ungated} & \ovn{0}{5}{-5} & \ovn{0}{5}{-5} & \ovn{0}{5}{-5} & \ovn{0}{5}{-5} & \ovn{0}{5}{-5} & \ovn{0}{5}{-5} & \ovn{0}{4}{-4} & \ovn{0}{2}{-2} & \ovn{0}{2}{-2} & \ovn{0}{1}{-1} \\
 &  & \texttt{gated} & \ovn{0}{0}{0} & \ovn{0}{0}{0} & \ovn{0}{0}{0} & \ovn{0}{0}{0} & \ovn{0}{0}{0} & \ovn{0}{0}{0} & \ovn{0}{0}{0} & \ovn{0}{0}{0} & \ovn{0}{0}{0} & \ovn{0}{0}{0} \\
\addlinespace[2pt]
 & \multirow{2}{*}{$C_{\mathrm{state}}^{\mathrm{reach}}$} & \texttt{ungated} & \ovn{0}{5}{-5} & \ovn{0}{5}{-5} & \ovn{0}{4}{-4} & \ovn{0}{3}{-3} & \ovn{0}{2}{-2} & \ovn{0}{1}{-1} & \ovn{0}{1}{-1} & \ovn{0}{0}{0} & \ovn{0}{0}{0} & \ovn{0}{0}{0} \\
 &  & \texttt{gated} & \ovn{0}{0}{0} & \ovn{0}{0}{0} & \ovn{0}{0}{0} & \ovn{0}{0}{0} & \ovn{0}{0}{0} & \ovn{0}{0}{0} & \ovn{0}{0}{0} & \ovn{0}{0}{0} & \ovn{0}{0}{0} & \ovn{0}{0}{0} \\
\bottomrule
\end{tabular}}
\end{table*}

\paragraph{Winner-confidence gate prevents harmful over-correction.}
Table~\ref{tab:override-grid-all-lambdas} evaluates override behavior across the four cost variants and
$\lambda \in \{0, 0.001, \ldots, 0.5\}$, with an additional ablation test for comparing the effect of override
policies with and without the confidence gate $p_{\mathrm{obs}}(m^{\mathrm{obs}}) - p_{\mathrm{QBAF}}(m^\star) \ge \tau$
as defined in Def.~\ref{def:obs-aligned-override}. As mentioned in Tab.~\ref{tab:eval-hparams}, we set the gating
threshold value to $\tau=0$. Without the winner-confidence gate, forcing alignment with $p_{\mathrm{obs}}$ whenever
labels differ yields substantial negative net gains:
\begin{itemize}
  \item \textbf{MuSR Team Allocation}: Ungated override with $C_{\mathrm{state}}$ at $\lambda=0$ produces 0/5 (net $-5$).
  \item \textbf{TruthfulQA}: Ungated $C_{\mathrm{edge}}$ at low $\lambda$ yields 2/3 (net $-1$).
  \item \textbf{MMLU-Pro / GPQA}: Ungated configurations frequently achieve net gain $\le 0$.
\end{itemize}

On the other hand, the confidence gate successfully blocks most harmful corrections. Across the full grid
(7 datasets $\times$ 4 cost types $\times$ 10 $\lambda$ values), gated overrides achieve non-negative net gain in
247/280 configurations (88.2\%), versus 161/280 (57.5\%) for ungated. In particular:
\begin{itemize}
  \item On \textbf{MuSR Team Allocation}, the gate blocks all override attempts, correctly recognizing internal
  consensus as more reliable.
  \item On \textbf{MedQA} and \textbf{GPQA Diamond}, gated overrides with $C_{\mathrm{state}}$ achieve consistent
  positive net gains (+1 to +2).
\end{itemize}

The override mechanism exhibits clear task-dependency consistent with the grounding hypothesis:
\begin{itemize}
  \item \textbf{High utility} (MedQA, GPQA): Gated overrides yield net positive corrections where domain-specific
  reasoning may cause QBAF over-fitting.
  \item \textbf{Neutral utility} (MMLU-Pro, TruthfulQA, MuSR Murder Mystery): Balanced outcomes suggest
  internal/external agreement.
  \item \textbf{Correctly inhibited} (MuSR Team Allocation): Gate blocks all attempts where long-context coherence
  is critical.
\end{itemize}

\paragraph{Effect of cost-proxy choice.}
Across all four cost proxies, the qualitative role of the override mechanism is stable: overrides are only potentially
useful on the rare $N_{\mathrm{diff}}$ subset, and the winner-confidence gate is the dominant factor preventing harmful
over-correction. The primary effect of changing $C_k$ is to rescale the effective perturbation penalty (shifting where
the ``override-active'' region occurs along the $\lambda$ axis), rather than to change which datasets benefit from
overrides. Among the proxies, $C_{\mathrm{state}}$ is the most direct operationalization of ``minimal causal edit'' in
the induced SCM and yields the most consistent trade-off behavior under a shared $\lambda$ grid; accordingly, we adopt
$C_{\mathrm{state}}$ with $\lambda=0.05$ as the default in all experiments.

\paragraph{Summary}
Our evaluation demonstrates some key properties:
\begin{enumerate}
  \item \textbf{Diagnostic value}: JS divergence conditioned on label disagreement signals potential over-confidence.
  \item \textbf{Safety}: Winner-confidence gate maintains $\ge 88\%$ non-negative net gain versus 57\% ungated.
  \item \textbf{Default configuration}: Based on the sweep, we fix $C_{\mathrm{state}}$ and $\lambda=0.05$ for all
  experiments, as it provides a robust safety--utility trade-off under a single shared setting.
\end{enumerate}

We view these results as evidence that observation-aligned overrides can provide targeted external grounding with
measurable safeguards when paired with an explicit confidence gate.

\section{Ablation Study}
\label{app:ablation}

As \framework contains various design choices and tunable hyperparameters, we detail out some of the ablation studies to examine how critical components or design choices affect the framework.

\subsection{Contextual Orthogonality Pruning Threshold}
\label{app:abl-pruning}

We examine the role of contextual orthogonality pruning in \framework by varying the pruning threshold $\rho_{\mathrm{sim}}$ while holding all other components of the pipeline fixed. Concretely, we evaluate \framework on the GPQA-Diamond benchmark using the same configuration as in the main evaluation (Sec.~\ref{sec:eval}): \texttt{gpt-4o-mini} as the base model, three experts, and DF-QuAD as the choice of modular semantics. The pruning threshold $\rho_{\mathrm{sim}}$ controls the aggressiveness with which supplementary arguments are removed based on contextual similarity (Sec.~\ref{subsec:pruning-sim}). We sweep values of $\rho_{\mathrm{sim}}$ from 0.1 to 0.9, with more fine-grained increments from the 0.5 to 0.9 range. We also evaluate a special case of applying no contextual pruning with the value of 1.0 for the threshold.

\begin{figure}
    \centering
    \includegraphics[width=\linewidth]{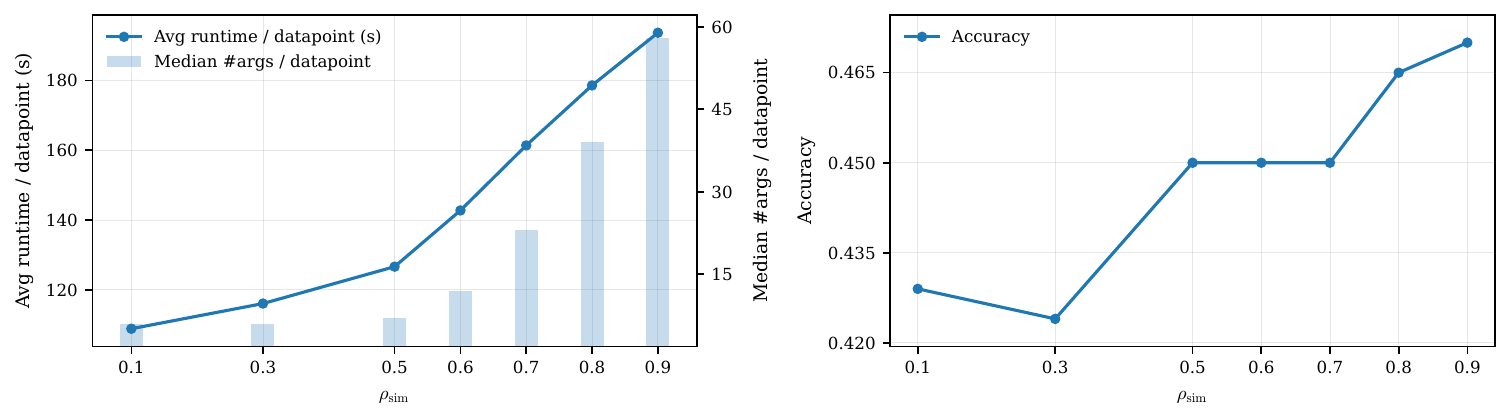}
    \caption{\textbf{Effect of threshold for contextual orthogonality pruning on the GPQA-Diamond benchmark.}
    (Left) Average runtime per datapoint as a function of the pruning threshold $\rho_{\mathrm{sim}}$, overlaid with the median number of supplementary arguments retained per datapoint (bars; excludes main arguments).
    (Right) Pre-override accuracy versus $\rho_{\mathrm{sim}}$ under the same evaluation setting.}
    \label{fig:pruning-abl-fig}
\end{figure}

\paragraph{Effect of the contextual orthogonality pruning threshold.}
Figure~\ref{fig:pruning-abl-fig} illustrates the effect of the contextual orthogonality pruning threshold $\rho_{\mathrm{sim}}$ on both computational cost and performance on GPQA-Diamond.
When the pruning threshold is set below $0.5$, only a small number of supplementary arguments survive pruning, resulting in compact argument graphs with relatively low runtime.
As $\rho_{\mathrm{sim}}$ increases beyond $0.5$, pruning becomes substantially more permissive, leading to a steep increase in the median number of retained supplementary arguments per datapoint. This transition marks the range in which contextual orthogonality pruning meaningfully shapes the size of the induced QBAF.

While a more lenient pruning threshold yields modest gains in pre-override accuracy, these gains come at a rapidly increasing computational cost.
As the number of supplementary arguments grows, runtime per datapoint increases sharply, reflecting not only the larger argumentation graph but also the increased history context that must be tracked and processed by each expert during discussion.
In the extreme case where $\rho_{\mathrm{sim}} = 1.0$, effectively disabling contextual orthogonality pruning, smaller models such as \texttt{gpt-4o-mini} fail to complete inference reliably due to excessive context length induced by unfiltered supplementary arguments.
Our evaluations show that beyond computational overhead, uncontrolled argument growth can render the discussion process infeasible under practical context window constraints on smaller LLMs.

We conclude that contextual orthogonality pruning is a necessary mechanism for controlling token usage and maintaining tractable context windows during multi-argument discussion, while preserving most of the attainable accuracy.
Based on the observed trade-off between accuracy and efficiency, we fix $\rho_{\mathrm{sim}} = 0.7$ as the default pruning threshold in all experiments.

\subsection{Base Score Initialization}
\label{app:abl-base-init}

\begin{table*}[t]
  \centering
  \scriptsize
  \setlength{\tabcolsep}{3.0pt}
  \renewcommand{\arraystretch}{1.15}
  \caption{
    Paired significance tests and performance statistics for Argumentative Utility under ARGORA based on base score initialization method.
    For each dataset, we report the pre-override accuracy (Acc), Net Reversal Efficiency (NRE),
    disagreement-conditioned positive and negative reversal counts $(n_{-\to+},\, n_{+\to-})$,
    and the exact McNemar $p$-value for NRE ($p_{\mathrm{NRE}}$) as defined in Appendix~\ref{app:eval-ext}.
    Results are shown for DF-QuAD and the best-performing semantics (``Best''). We indicate $p_\mathrm{NRE}$ with N/A if the $\mathrm{NRE}$ value is negative, as our one-sided $p$-value test is only interpretable when the metric is nonnegative. We report the higher final accuracy metric in \textbf{bold}.
  }
  \label{tab:abl-argora-nre-significance}

  \resizebox{\textwidth}{!}{%
  \begin{tabular}{l r r
                  c c c c c
                  r
                  c c c c c}
    \toprule
    Dataset & $N$ & $|\Ncal_{\textrm{disagree}}|$
      & \multicolumn{5}{c}{ARGORA (Orchestrator Init.)}
      & \multicolumn{5}{c}{ARGORA (Neutral Init. ($w=0.5$))}
    \\
    \cmidrule(lr){4-8}
    \cmidrule(lr){9-13}
    & &
      & \makecell{Acc}
      & \makecell{$n_{-\to+}$}
      & \makecell{$n_{+\to-}$}
      & \makecell{NRE}
      & \makecell{$p_{\mathrm{NRE}}$} 
      & \makecell{Acc}
      & \makecell{$n_{-\to+}$}
      & \makecell{$n_{+\to-}$}
      & \makecell{NRE}
      & \makecell{$p_{\mathrm{NRE}}$}
    \\
    \midrule

    MMLU-Pro & 500 & 180
      & \textbf{0.638} & 20 & 11 & 0.050 & 0.074
      & 0.618 & 18 & 13 & 0.027 & 0.237
    \\
    MedQA & 500 & 86
      & \textbf{0.812} & 14 & 5 & 0.105 & 0.032
      & 0.806 & 9 & 8 & 0.116 & 0.500
    \\
    TruthfulQA & 500 & 61
      & \textbf{0.882} & 8 & 4 & 0.066 & 0.194
      & 0.874 & 18 & 5 & 0.213 & 0.005
    \\
    GPQA Diamond & \textit{198} & 106
      & \textbf{0.450} & 12 & 10 & 0.019 & 0.416
      & 0.419 & 12 & 13 & $-$0.009 & N/A
    \\
    MuSR (Murder Mystery) & \textit{250} & 57
      & \textbf{0.664} & 14 & 7 & 0.123 & 0.095
      & 0.648 & 8 & 10 & $-$0.036 & N/A
      
    \\
    MuSR (Object Placement) & \textit{256} & 75
      & \textbf{0.523} & 7 & 11 & $-$0.053 & N/A
      & 0.512 & 7 & 5 & 0.026 & 0.387
    \\
    MuSR (Team Allocation) & \textit{250} & 70
      & \textbf{0.564} & 9 & 8 & 0.014 & 0.500
      & 0.560 & 11 & 12 & $-$0.014 & N/A
    \\
    \bottomrule
  \end{tabular}%
  }
\end{table*}

We examine the role of base score initialization in \framework by comparing two variants that differ \emph{only} in their initial argument scores. Since our goal is to isolate the effect of the base score function $w$ on the induced argumentative winner, we evaluate \emph{pre-override} performance only (i.e., before applying any observation-alignment override). In the \textbf{orchestrator-initialized} setting, each argument is assigned a task-aware base score produced by the Orchestrator during initialization (as described in Sec.~\ref{subsec:prior-strength}). In the \textbf{neutral initialization} setting, all arguments are instead initialized to the constant $w(a)=0.5$. All other components of the pipeline—including expert generations, the constructed argument graphs, and the quantitative semantics used for aggregation—are held fixed across the two settings. Consequently, this ablation constitutes a controlled intervention on the base score over argument strengths, and any resulting differences in the pre-override winner and accuracy can be attributed to the choice of initialization.

Table~\ref{tab:abl-argora-nre-significance} reports the (pre-override) performance and paired significance tests for Argumentative Utility under both settings.
For each dataset, we report the accuracy, disagreement-conditioned positive and negative reversal counts $(n_{-\to+},\, n_{+\to-})$, Net Reversal Efficiency (NRE), and the exact one-sided McNemar $p$-value for NRE as defined in Appendix~\ref{app:eval-ext}.
The $p_{\mathrm{NRE}}$ statistic is reported only when NRE is nonnegative, as the one-sided test is interpretable only when overrides are directionally beneficial.

\paragraph{Effect of base score initialization.}
Across datasets, orchestrator initialization consistently yields nonnegative NRE and interpretable significance tests, indicating that overrides tend to correct incorrect argumentative winners rather than introduce new errors.
In contrast, neutral initialization frequently degrades directional reliability: for several datasets (e.g., GPQA Diamond and multiple MuSR tasks), NRE becomes negative, rendering $p_{\mathrm{NRE}}$ undefined.
This behavior indicates that, under neutral base scores, the override mechanism can become actively harmful, flipping correct argumentative winners to incorrect observational ones.

In addition, we observe that the \emph{accuracy} under orchestrator initialization is consistently higher than the corresponding accuracy under neutral initialization across all evaluated datasets, suggesting that task-aware base scores improve not only the directionality of reversals but also the quality of the underlying argumentative consensus significantly. Notably, these differences arise despite identical argumentation framework structures and fixed quantitative semantics, and therefore directly support the view that orchestrator-based base score initialization is a necessary structural component that stabilizes the induced causal pathways in \framework.

\paragraph{Interpreting apparent gains under neutral initialization.}
An apparent exception arises on TruthfulQA, where neutral initialization yields a statistically significant $p_{\mathrm{NRE}}$ (0.005).
However, this result warrants careful interpretation.
When all arguments are initialized to $w(a)=0.5$, the base score induces a complete tie among candidates, effectively reducing the initial argumentative winner selection to random choice. In this environment, reversal counts are driven less by the quality of argumentative structure and more by properties of the benchmark dataset itself.

In particular, TruthfulQA is a binary classification task with only two answer choices and a relatively high base accuracy (around 0.88 in our evaluations as shown in Table~\ref{tab:abl-argora-nre-significance}). Under such conditions, random initial arguments with wrong labels are disproportionately likely to revert to winners corresponding to the correct answer, making positive reversals statistically easier to obtain regardless of argumentative merit.
Consequently, the observed significance under neutral initialization reflects dataset-level class structure and accuracy saturation rather than meaningful argumentative correction.
This interpretation is further supported by the instability of neutral initialization on multi-choice datasets, where no such effect is observed.

\paragraph{Implication.}
Overall, this ablation demonstrates that task-aware base score initialization is essential for stable and interpretable argumentative correction in \framework.
While neutral initialization may occasionally yield favorable statistics in highly constrained binary settings, it lacks robustness and can induce harmful override behavior on more complex tasks. In contrast, orchestrator-initialized base scores consistently preserve the directional reliability of NRE across datasets, reinforcing their role as a necessary design component rather than an optional optimization.

\subsection{Number of Rounds}
\label{app:abl_rounds}

\begin{table*}[t]
  \centering
  \scriptsize
  \setlength{\tabcolsep}{3.0pt}
  \renewcommand{\arraystretch}{1.15}
  \caption{
    Ablation on the number of discussion rounds $R$ in \framework as implemented in Section~\ref{subsec:multi-round-support}.
    For each dataset, we report \emph{pre-override} and \emph{post-override} accuracy, and the average runtime (in seconds) per datapoint. We use the same evaluation configuration as the main experiment in Section~\ref{sec:eval}: \texttt{gpt-4o-mini} as the base model, 3 experts, and DF-QuAD semantics. The best performing \textit{post-override} accuracy is formatted in \textbf{bold}.
  }
  \label{tab:ablation-rounds-runtime}
  \resizebox{\textwidth}{!}{%
  \begin{tabular}{@{}l r c Y c c Y c c Y c@{}}
    \toprule
    Dataset & $N$
      & \multicolumn{3}{c}{ARGORA ($R{=}1$)}
      & \multicolumn{3}{c}{ARGORA ($R{=}2$)}
      & \multicolumn{3}{c}{ARGORA ($R{=}3$)}
    \\
    \cmidrule(lr){3-5}\cmidrule(lr){6-8}\cmidrule(lr){9-11}
      &
      & \makecell{Accuracy\\(\textit{pre-override})}
      & \multicolumn{1}{c}{\makecell{Accuracy\\(\textit{post-override})}} % avoid coloring header
      & \makecell{Runtime (sec)\\(avg per data)}
      & \makecell{Accuracy\\(\textit{pre-override})}
      & \multicolumn{1}{c}{\makecell{Accuracy\\(\textit{post-override})}} % avoid coloring header
      & \makecell{Runtime (sec)\\(avg per data)}
      & \makecell{Accuracy\\(\textit{pre-override})}
      & \multicolumn{1}{c}{\makecell{Accuracy\\(\textit{post-override})}} % avoid coloring header
      & \makecell{Runtime (sec)\\(avg per data)}
    \\
    \midrule

    MMLU-Pro & 500
      & 0.638 & \textbf{0.636} (\textcolor{red}{$\downarrow$}) & 129.33
      & 0.624 & 0.620 (\textcolor{red}{$\downarrow$}) & 174.04
      & 0.633 & 0.633 ($-$) & 232.83
    \\
    MedQA & 500
      & 0.812 & \textbf{0.816} (\textcolor{blue}{$\uparrow$}) & 111.51
      & 0.804 & 0.804 ($-$) & 151.99
      & 0.812 & 0.812 ($-$) & 207.32
    \\
    TruthfulQA & 500
      & 0.882 & \textbf{0.882} ($-$) & 97.14
      & 0.852 & 0.852 ($-$) & 147.82
      & 0.876 & 0.876 ($-$) & 202.00
    \\
    GPQA Diamond & 198
      & 0.450 & \textbf{0.455} (\textcolor{blue}{$\uparrow$}) & 129.41
      & 0.414 & 0.414 ($-$) & 199.56
      & 0.455 & \textbf{0.455} ($-$) & 285.88
    \\
    MuSR (Murder Mystery) & 250
      & 0.664 & \textbf{0.664} ($-$) & 108.63
      & 0.620 & 0.620 ($-$) & 170.08
      & 0.644 & 0.644 ($-$) & 243.86
    \\
    MuSR (Object Placement) & 256
      & 0.523 & 0.523 ($-$) & 88.85
      & 0.543 & \textbf{0.551} (\textcolor{blue}{$\uparrow$}) & 139.57
      & 0.547 & 0.547 ($-$) & 201.20
    \\
    MuSR (Team Allocation) & 250
      & 0.564 & 0.564 ($-$) & 103.79
      & 0.524 & 0.524 ($-$) & 166.47
      & 0.564 & \textbf{0.568} (\textcolor{blue}{$\uparrow$}) & 236.80
    \\
    \bottomrule
  \end{tabular}%
  }
\end{table*}

\framework is implemented with optional support for multi-round discussions (Sec.~\ref{subsec:multi-round-support}), where the full discussion pipeline can be repeated for $R>1$ rounds. Based on this design, we run \framework using the exact evaluation settings in the main evaluation~\ref{sec:eval} and varying the number of total rounds $R$ from 1 to 3 rounds. For each dataset, we report (i) accuracy before the observation-aligned override, (ii) accuracy after the override, and (iii) the average runtime per datapoint in seconds.

Table~\ref{tab:ablation-rounds-runtime} shows the results of the ablation on the number of rounds. Across the seven benchmarks, increasing $R$ does not consistently improve either pre-override or post-override accuracy; the best post-override result is in fact often achieved at $R=1$. We do observe isolated cases where additional rounds help, but these gains are not systematic and do not indicate a reliable performance--rounds correlation under the current multi-round prompting and history utilization methodology.

In contrast, runtime scales predictably in a linear fashion with respect to $R$. This linear scaling in runtime cost is expected from the framework design, since each additional round repeats the full discussion pipeline (expert argument generation, multi-level argument generation, and QBAF construction/evaluation).

\paragraph{Implication.}
Taken together, the results suggest that, in the current implementation, multi-round discussion incurs a clear cost increase without delivering consistent accuracy gains. For this reason, we fix $R=1$ in all of our experiments and evaluations for \framework in this paper, and treat improved multi-round mechanisms as potential future direction for research.

\subsection{Number of Experts Chosen}
\label{app:abl-numexperts}

\framework allows the number of participating expert LLMs to be configured, with each expert contributing one main argument to the discussion graph. Because the size of the induced QBAF grows with the number of experts, this choice impacts both predictive behavior and computational cost.

We ran a small screening study on GPQA Diamond, MedQA, and MMLU-Pro, varying the number of experts from 2 to 7 to identify a practically relevant operating regime. The screening results suggest diminishing returns in raw performance: after roughly 3--5 experts, accuracy improvements largely plateau, and adding more experts beyond 5 yields little additional gain under our current prompting and aggregation setup. In contrast, because the number of experts directly scales the number of main arguments, the runtime of \framework grows approximately linearly with expert count. Taken together, these trends indicate that 3--5 experts offers the most favorable accuracy--efficiency trade-off in our implementation.

From a computational perspective, the number of experts determines the number of main arguments and thus the scale of the argumentation graph. Consequently, runtime grows approximately linearly in the number of experts due to expert prompting and supplementary argument generation, although contextual orthogonality pruning (Sec.~\ref{subsec:pruning-sim}) mitigates the overhead from the supplementary argument generation process by suppressing redundant arguments. Since this study is intended only as an upfront calibration and is sensitive to implementation-level choices (e.g., pruning thresholds and prompt templates), we omit the full sweep results for brevity and focus our reported experiments on our choice of expert counts (in the 3 -- 5 expert range) used throughout the main evaluation.

%\input{appendix/implementation_details}
% ==============================================================================
% ARGORA PROMPTS SECTION
% ==============================================================================

\lstdefinestyle{codeStyle}{
  basicstyle=\ttfamily\fontsize{7.5}{8}\selectfont,
  breaklines=true,
  breakatwhitespace=false,
  breakindent=0pt,
  breakautoindent=false,
  postbreak=\mbox{},
  showstringspaces=false,
  columns=flexible,
  keepspaces=true
}

\newtcblisting{codebox}[1][]{
  colback=blue!8,                % Very light blue background (8% blue, 92% white)
  colframe=blue!60!black,        % Darker blue border
  boxrule=0.5pt,
  arc=2pt,
  boxsep=1pt,
  left=5pt,
  right=5pt,
  top=2pt,
  bottom=2pt,
  before skip=3pt,
  after skip=3pt,
  breakable,
  listing only,
  listing options={style=codeStyle},
  #1
}

% ------------------------------------------------------------------------------
% DOCUMENT CONTENT
% ------------------------------------------------------------------------------

\section{Prompts Used in \framework}
\label{app:argora-prompts}

We provide the list of prompts that act as main components of \framework.
% ==============================================================================
\subsection{Orchestrator Prompts}
\label{app:orchestrator-prompts}
% ==============================================================================

The Orchestrator is responsible for coordinating the multi-expert discussion, extracting the main task, selecting appropriate experts, and generating custom prompts for each expert.

% ------------------------------------------------------------------------------
\subsubsection{Main Task Extraction}
\label{prompt:main-task-ext}
% ------------------------------------------------------------------------------
\begin{codebox}
Your job is to formulate the main task for a given discussion topic. The main task is a single sentence, well-formed imperative that specifies the core objective to be answered from the given topic. 

The discussion topic is given as follows:
Discussion Topic: {topic}
Give your output strictly in the following format:
{"main_task": <single sentence, well-formed imperative>}
\end{codebox}
% ------------------------------------------------------------------------------
\subsubsection{Key Element Extraction}
\label{prompt:key-element-ext}
% ------------------------------------------------------------------------------
\begin{codebox}
You are given a discussion topic along with the main task. From this, you should specify the list of key elements. The key elements should be a set of intended semantics (key entities, factors, elements, task type of the discussion) to guide the discussion to properly address the topic and the main task. 

The discussion topic and main task are given as follows:
Discussion Topic: {topic}
Main Task: {main_task}
Give your output in the following format:
{"key_element": <set of intended semantics>}
\end{codebox}

% ------------------------------------------------------------------------------
\subsubsection{Expert Selection}
\label{prompt:exp-sel}
% ------------------------------------------------------------------------------
\begin{codebox}
Choose the expert LLMs of a particular field to participate for a given discussion topic. You are free to choose any field of expertise as you see fit, so that the discussion for the topic can be as comprehensive as possible. If the topic is very specific (for example, a specific domain like mathematics), your topic selection can be fine-grained to optimize for that domain (e.g., Real Analysis LLM, Algebraic Structures LLM). You can use the main task and key elements to help you choose the experts most relevant to the fruitful discussion of the topic. Here are some examples:

Return your answer strictly as an array of expert names as shown in the examples.
Here are the relevant details:
"{topic}"
Main task: {main_task}
Key elements: {key_elements}
\end{codebox}

% ------------------------------------------------------------------------------
\subsubsection{Expert-Specific Prompt Generation}
\label{prompt:exp-prompt-gen}
% ------------------------------------------------------------------------------

\begin{codebox}
You are to generate a unique prompt for each expert based on the discussion topic and the expert's field of expertise. This prompt is to be added right after the discussion topic when presenting the prompt to the expert. Your prompt should be designed specifically to elicit each expert to fully utilize their domain expertise.

The relevant information are given as follows:
Discussion Topic: {topic}
Main task to be addressed: {main_task}
Key elements to consider: {key_elements}
Experts: {experts}
Return your response in the following format:
{"expert_name": "custom prompt for expert", ...}
\end{codebox}
% ------------------------------------------------------------------------------
\subsubsection{Base Score Generation}
\label{prompt:prior-str-gen}
% ------------------------------------------------------------------------------
\begin{codebox}
<USER>: You are to assess the overall strength of a statement given a discussion topic, main task, and key elements. You will evaluate the statement across three separate criteria and provide individual scores for each.

Evaluate the statement across the following three criteria, assigning a score strictly between 0 and 1 (non-inclusive) for each:

1. Task Relevance: How directly and substantively the statement addresses the discussion topic, main task, and key elements.
- High-score example: A statement that explicitly engages the core question and key elements, directly contributing to the task.
- Low-score example: A statement that is generic, tangential, or unrelated to the task, or fails to answer the core question as instructed.

2. Evidence Support: The extent to which the statement provides reasoning, mechanisms, or evidence for its claims.
- High-score example: A statement that justifies its claims with clear reasoning or concrete support.
- Low-score example: A statement that makes unsupported assertions or vague opinions.

3. Logical Soundness: Whether the statement is internally coherent, free of contradictions, and follows a reasonable inferential structure.
- High-score example: A statement with clear premises leading logically to a conclusion.
- Low-score example: A statement relying on invalid reasoning, circular logic, or unjustified leaps.

Scoring guidance for each criterion:
- Unless truly exceptional or poor, most scores that adequately meet each criteria should cluster around 0.50.
- Scores close to 1.00: Exceptional quality in this dimension.
- Scores close to 0.00: Poor quality in this dimension.
- Be as precise as you can within the two decimal places range (precision of 0.01).
- Be conservative with extreme scores; most should fall between.

Statement: "{statement}"
Discussion Topic: "{topic}"
Main Task: {main_task}
Key Elements: {key_elements}

Return your response strictly in JSON with the following format:
{
  "task_relevance_assessment": "<your assessment for the task relevance score>",
  "task_relevance": <float between 0 and 1>,
  "evidence_support_assessment": "<your assessment for the evidence support score>",
  "evidence_support": <float between 0 and 1>,
  "logical_soundness_assessment": "<your assessment for the logical soundness score>",
  "logical_soundness": <float between 0 and 1>,
}
\end{codebox}

% ==============================================================================
\subsection{Expert Prompts}
\label{app:expert-prompts}
% ==============================================================================

Each expert model is assigned a specific domain of expertise and generates arguments from that perspective.

% ------------------------------------------------------------------------------
\subsubsection{Expert System Prompt}
\label{prompt:exp-sys-prompt}
% ------------------------------------------------------------------------------

\begin{codebox}
You are the {expert_name}. As {expert_name}, you have deep expertise in {domain}. Using your domain expertise, you are to provide a well-reasoned argument to come to a consensus on a given topic.
\end{codebox}

% ------------------------------------------------------------------------------
\subsubsection{Main Argument Generation}
\label{prompt:main-arg-gen}
% ------------------------------------------------------------------------------
\begin{codebox}
You are to provide the initial arguments for a given discussion task or topic. These arguments should be standalone answers that directly address the given task, that are intended to be either agreed or disagreed with by other experts.

Now, provide your main arguments for the following discussion topic:
{topic}
{orchestrator_custom_prompt}

You should STRICTLY PROVIDE at most {max_main_args} main arguments (generally, the fewer the better). Even if some examples shown have more, you must limit yourself to the above number. Keep your arguments as varied and distinct as possible. Your main arguments should be able to directly answer or address the discussion topic on their own (standalone), without being dependent on other main arguments you provide. Respond strictly in the following schema format:
{ "main_args": [ "statement 1", ... ] }
\end{codebox}

% ------------------------------------------------------------------------------
\subsubsection{First-Level Argument Prompt}
\label{prompt:l1-arg_prompt}
% ------------------------------------------------------------------------------

\begin{codebox}
You are contributing first-level arguments for a debate graph.
Discussion Topic: {topic}
Primary Task: {main_task}
Key Elements: {key_elements}

Main Argument Under Review: {main_argument}

{role_description}
Provide a well-reasoned argument that justifies your stance. Your reasoning should provide a clear insight into your thought process about why you agree or disagree with the main argument, and should not be a simple rephrasing of the main argument itself (i.e., "I disagree with <main argument>").
{limit_instruction}
Respond strictly with the following schema format:
{
  "stance": "agree" | "disagree",
  "reasoning": [
    "<detailed reasoning behind your stance>"
  ]
}
Do not add any commentary outside of the format.
\end{codebox}

% ------------------------------------------------------------------------------
\subsubsection{Second-Level Argument Prompt}
\label{prompt:l2-arg-prompt}
% ------------------------------------------------------------------------------

\begin{codebox}
Expert: {expert}
Main argument under discussion:
{main_statement}
You should review the following statements contributed by other experts:
(1) {polarity} argument from {author}: {statement}
(2) ...

- Use integers starting from 1 for the index that match the numbering above.
- Stance must be exactly one of AGREE, DISAGREE, or NONE.
- Provide a well-reasoned justification for your stance based on your own expertise. Your justification should be a novel, insightful reasoning based on your expertise, and not just a rewording or rephrasing of the target statement.

For each statement, reply exactly in the following format:
[
  {
    "index": <integer item number>,
    "stance": "AGREE" | "DISAGREE" | "NONE",
    "justification": "<justification of chosen stance>"
  },
  ...
]
\end{codebox}

% ------------------------------------------------------------------------------
\subsubsection{Third-Level Argument Prompt}
\label{prompt:l3-arg-prompt}
% ------------------------------------------------------------------------------

\begin{codebox}
Expert: {expert}
Main argument:
{main_statement}
Your prior statement under review:
{parent_statement}
Other experts raised the following critiques:
(1) {critic} challenges your claim with: {statement}
(2) ...

Respond with a JSON object where each key is the critique index and each value has a 'rebuttal' field containing your detailed rebuttal.Your rebuttal should be a novel, insightful reasoning based on your expertise, and not just a simple negation of the critiquing statement.

- Use JSON string keys that match the numbering above ("1", "2", ...).

Reply exactly in the following format:
{
  "<critique index as string>": {
    "rebuttal": "<a detailed rebuttal or NONE>"
  },
  ...
}

If you have no rebuttal for a critique, set its 'rebuttal' to the exact token NONE.
If you offer no rebuttals at all, reply with the single token NONE.
\end{codebox}

% ==============================================================================
% NOTES ON PROMPT USAGE
% ==============================================================================

\paragraph{Implementation notes.}
All prompts are implemented as static methods in the \texttt{Orchestrator} and \texttt{Expert} classes. Parameters shown in curly braces (e.g., \texttt{\{topic\}}, \texttt{\{expert\_name\}}) are replaced with actual values at runtime.

\paragraph{GPT-5 Prompt Optimization.}
When using GPT-5 model variants (namely \texttt{gpt-5-mini}), additional constraints are automatically appended to system prompts to minimize reasoning tokens and disable web search:
\begin{codebox}
- Do not perform any web searches or external lookups.
- Only use your internal knowledge to answer the questions.
\end{codebox}
% USE CASE
% Appendix 1
\section{Use Case Evaluation Details}
\label{app:use_case_task}

\subsection{Synthetic Cyber Incident Description}

This appendix presents the complete synthetic cyber incident used in the use case study. The incident does not correspond to any real-world breach, and is constructed to resemble a realistic industrial ransomware report
with controlled inconsistencies intended for a holistic evaluation on multi-dimensional reasoning. In particular, the specification embeds:
(i) operational security (OPSEC) violations in attacker infrastructure,
(ii) timeline alignment anomalies,
(iii) the absence of expected forensic artifacts, and
(iv) unusually precise impact and recovery metrics.
All evaluated systems were provided with this exact input without modification. 

The following JSON document constitutes the full specification of the synthetic ransomware incident. It is included verbatim in the input prompt for \framework and the baseline.

%\begin{lstlisting}[basicstyle=\ttfamily\small, breaklines=true]

\lstdefinestyle{jsonStyle}{
  basicstyle=\ttfamily\fontsize{7.5}{8}\selectfont,
  breaklines=true,
  breakatwhitespace=false,
  %postbreak=\mbox{\textcolor{red}{$\hookrightarrow$}\space},
  showstringspaces=false,
  columns=flexible,
  keepspaces=true
}

\newtcblisting{jsonbox}{
  colback=yellow!10,
  colframe=yellow!60!black,
  boxrule=0.5pt,
  boxsep=-1pt,
  left=5pt,
  right=5pt,
  top=2pt,
  bottom=2pt,
  breakable,
  listing only,
  listing options={style=jsonStyle}
}

{\small
\begin{jsonbox}
{
  "incident_id": "2024-APAC-HS-2331",
  "victim": {
    "organization_name": "HorizonSprings Water Technologies",
    "industry": "Industrial water treatment / IoT control systems",
    "headquarters": "Singapore",
    "employee_count": 540,
    "regional_operations": ["SG", "MY", "ID"]
  },
  "attack_overview": {
    "classification": "Ransomware (double-extortion)",
    "suspected_ransomware_family": "BlackSuit",
    "confidence": "medium",
    "notes": [
      "PowerShell-based lateral movement consistent with BlackSuit.",
      "Exfiltration used custom Python script (less common for this group)."
    ]
  },
  "timeline": {
    "initial_access": "2024-08-17T11:42:19Z",
    "privilege_escalation_detected": "2024-08-17T14:03:55Z",
    "lateral_movement": {
      "start": "2024-08-17T14:12:10Z",
      "end": "2024-08-18T02:51:44Z"
    },
    "data_exfiltration": {
      "start": "2024-08-18T03:03:18Z",
      "end": "2024-08-18T05:47:02Z"
    },
    "encryption_event": "2024-08-18T06:11:29Z",
    "incident_response_engaged": "2024-08-18T09:25:10Z",
    "attacker_public_claim_date": "2024-08-22T12:00:00Z"
  },
  "exfiltration_details": {
    "estimated_volume_gb": 22.7,
    "transfer_mechanism": "HTTPS upload to attacker-controlled VPS",
    "data_categories": [
      "SCADA configuration exports",
      "internal network diagrams",
      "controller firmware prototypes",
      "limited HR payroll extract"
    ],
    "sample_files": [
      "/mnt/ops/plant/config/scada_topology_rev5.png",
      "C:\\Archive\\firmware_prototype_v2\\controller_beta_0814.bin",
      "/shared/hr/payroll_2024_Q2_partial.csv"
    ]
  },
  "ransom_demand": {
    "currency": "BTC",
    "amount_btc": 63,
    "deadline": "2024-08-29T23:59:59Z",
    "wallet_address": "1FK8tmR4kW8YpghLNQx5G2Mhsdgz57DabQ"
  },
  "ransom_note": {
    "file_name": "RESTORE-HORIZON.txt",
    "note_excerpt": "Your systems are encrypted and your files have been downloaded by BlackSuit. Do not attempt to restore or modify encrypted servers. We have obtained essential internal documents, including technical designs and employee information. To recover your data and prevent public disclosure, follow payment instructions in our portal.",
    "contact_portal": "hxxp://horizonhelpdesk45n1lk.onion",
    "victim_id": "HSW-2024-884"
  },
  "impact": {
    "encrypted_servers": 14,
    "production_disruption_hours": 17,
    "iot_command_delay_seconds": 5,
    "internal_erp_recovery_time_hours": 26
  },
  "regulatory_and_compliance": {
    "pii_exposed": true,
    "regulator_notified": "Singapore National Digital Infrastructure Office",
    "notification_date": "2024-08-23"
  },
  "current_status": {
    "ransom_paid": "unknown",
    "data_leaked": "not_observed",
    "system_restoration_completed": true
  }
}
\end{jsonbox}
}
%\end{lstlisting}

% Appendix 2
\subsection{Model Output Excerpts}
\label{app:use_case_output}

We present the representative excerpts from the outputs of the evaluated models when applied to the synthetic cyber incident described in Appendix~\ref{app:use_case_task}.
All excerpts are reproduced verbatim or lightly abridged for clarity.

% \subsection{\framework: Final Consensus}

% The final consensus produced by the proposed system classified the incident as
% \textit{fabricated}.
% The explanation emphasized gaps in verification, deviations from established ransomware
% operational practices, and inconsistencies across operational, temporal, and forensic dimensions.
% Overall, the incident was characterized as a likely false flag or misattribution
% rather than a genuine BlackSuit operation.
% Representative excerpts from the consensus output are shown below:

% \begin{quote}
% \small
% ``No verifiable encryption artifacts (e.g., file hashes or decryption keys) or data
% exfiltration evidence were observed, undermining the credibility of a genuine ransomware attack.''

% ``The ransomware portal domain explicitly includes the victim’s name, violating BlackSuit’s
% operational security norms and suggesting deliberate mimicry rather than authentic infrastructure.''

% ``While some stylistic elements align with BlackSuit’s playbook, the combination of
% timeline inconsistencies, tooling deviations, and missing forensic evidence indicates
% a staged or misattributed incident.''
% \end{quote}

\paragraph{\framework: Final Consensus Overview} The proposed system classified the incident as \textit{fabricated} or misattributed.
Unlike the single-model baselines, the consensus explanation explicitly grounded its
decision in the absence of core forensic evidence and violations of ransomware-family–specific
operational norms. In addition to a direct judgment, the system articulated counterfactual
reasoning, identifying which factors were decisive and how the conclusion would change
if key assumptions were removed.

\paragraph{Final decision and evidence summary.}
The consensus determined that the incident was not a genuine BlackSuit ransomware attack.
While some surface-level tactics (e.g., PowerShell-based lateral movement) aligned with
BlackSuit, the analysis emphasized multiple contradictions:

\begin{quote}
\small
``No encryption artifacts (file hashes or decryption keys) were found to confirm data encryption.''

``The ransom portal domain includes the victim’s name, violating BlackSuit’s operational
security norms.''

``The combination of tooling deviations, timeline implausibility, and missing encryption
evidence suggests a staged or misattributed attack rather than an authentic double-extortion campaign.''
\end{quote}

\paragraph{Decisive reasoning chain.}
A key distinguishing feature of the consensus output is its explicit articulation of a
decisive reasoning chain, identifying a minimal set of conditions that jointly support
the fabricated-incident hypothesis:

\begin{quote}
\small
\begin{enumerate}
    \item \textit{Ransom portal domain includes the victim’s name} $\rightarrow$ violation of BlackSuit OpSec norms.
    \item \textit{No encryption artifacts observed} $\rightarrow$ core double-extortion claim remains unverified.
    \item \textit{Timeline inconsistency for a large industrial victim} $\rightarrow$ operational tempo implausible for BlackSuit.
\end{enumerate}
\end{quote}

The system concluded that this chain constitutes a set of critical assumptions:
breaking any one element would weaken the conclusion, while resolving multiple elements
(e.g., proving encryption or normalizing domain naming) could plausibly flip the decision.

\paragraph{Counterfactual explanations.}
Beyond identifying anomalies, the consensus explanation explicitly reasoned about
counterfactual scenarios—what evidence would need to change for the decision to change.

\begin{quote}
\small
``Removing the tooling deviation (custom Python exfiltration) would weaken the argument
but would not flip the decision, as other factors such as domain naming and encryption
absence would still dominate.''

``Breaking the reasoning chain—by confirming encryption artifacts or aligning the
ransom portal with BlackSuit’s OpSec norms—would significantly undermine the staged-attack
hypothesis.''
\end{quote}

\paragraph{Confidence and limitations.}
The system reported a medium confidence level, explicitly acknowledging limitations
in attribution and incomplete external validation:

\begin{quote}
\small
``Misattribution to BlackSuit cannot be fully ruled out without deeper threat intelligence.''

``The absence of confirmed data leakage leaves some ransom-note claims unverified.''
\end{quote}

Overall, the output illustrates how the system not only produces a classification,
but also exposes the internal dependency structure of its reasoning, identifying
which assumptions are decisive and how alternative evidence would affect the outcome.

\paragraph{Open-Source Model (GPT-OSS 120B)} GPT-OSS 120B classified the incident as \textit{real}.
Its reasoning primarily emphasized narrative completeness, sector-specific plausibility,
and the presence of artifacts commonly associated with ransomware incidents.

\begin{quote}
\small
``All observable evidence points to a genuine ransomware incident.''

``The presence of a detailed attack timeline, a ransom note with a victim-specific identifier,
and quantifiable operational impact is consistent with a real double-extortion ransomware attack.''

``While attribution to BlackSuit remains tentative, the core event itself appears authentic
based on the available evidence.''
\end{quote}

\paragraph{Proprietary Model (Gemini 2.5 Pro)} Gemini likewise classified the incident as \textit{real}.
Its analysis focused on the plausibility of the incident lifecycle reported and
the realism of industry-specific details, treating ambiguity and incomplete information
as characteristic of real-world investigations.

\begin{quote}
\small
``The incident appears real, with industry-specific impact metrics supporting authenticity.''

``The reported timeline and delayed incident-response engagement reflect realistic
organizational discovery and escalation processes.''

``Ambiguity in attribution and incomplete status fields are common in genuine
post-incident reports.''
\end{quote}
% usecase - appendix 3
\subsection{Consensus Formation in the Multi-Expert Discussion}
\label{app:use_case_debate}

This appendix documents how \framework arrived at its final consensus for the fabricated cyber-incident use case.
Rather than presenting a full discussion transcript, we selectively report key analytical signals that were independently identified and subsequently reinforced by multiple domain experts,
ultimately converging on the conclusion that the incident was likely fabricated or misattributed.

Notably, this case did not involve strong adversarial disagreement.
Instead, consensus emerged through progressive reinforcement, where initial observations proposed by an expert
were contextualized, refined, or strengthened by others with complementary expertise. This process reflects realistic expert analysis workflows in cybersecurity incident response,
where agreement often forms through corroboration rather than direct rebuttal.

\paragraph{Operational Security Anomaly.}

\textit{Initial observation (\textit{Ransomware Forensics Expert}).}
The ransomware contact portal domain (\texttt{horizonhelpdesk45n1lk.onion}) explicitly includes the victim’s name
and organizational context.
This violates the operational security practices typically observed in BlackSuit campaigns,
which historically rely on randomized, non-descriptive onion domains to reduce attribution risk.

\textit{Supporting analysis (\textit{Cybersecurity Incident Analyst}).}
The expert noted that embedding victim-identifiable information directly into attacker infrastructure
is inconsistent with the threat model of mature double-extortion groups.
Such actors prioritize deniability and resilience over usability,
suggesting that the portal was constructed to appear authentic rather than to support a sustained extortion operation.

\textit{Consensus implication.}
Across experts, this anomaly was interpreted not as a stylistic deviation but as a structural OPSEC failure,
substantially increasing the likelihood of imitation or false-flag behavior.

\paragraph{Absence of Forensic Corroboration.}

\textit{Initial observation (\textit{Cybersecurity Incident Analyst}).}
Despite explicit claims of encryption and data exfiltration,
the report contained no verifiable forensic artifacts,
including encryption hashes, leaked file samples, cryptographic proof of possession,
or publicly observable leak-site evidence.

\textit{Supporting analysis (\textit{Digital Forensics Expert}).}
From a forensic point of view, the expert emphasized that BlackSuit’s double-extortion model
depends on credible evidence of data control to exert pressure.
The absence of even minimal corroborating signals
(e.g., file hashes or teaser leaks)
was therefore treated as a significant inconsistency rather than a temporary information gap.

\textit{Consensus implication.}
The lack of forensic corroboration was interpreted as a decisive negative signal,
as it contradicts both the operational incentives and the historical behavior of real double-extortion operations.

\paragraph{Timeline Inconsistency Between Restoration and Attacker Claim.}

\textit{Initial observation (\textit{Cybersecurity Incident Analyst}).}
The date of the attacker's public claim coincided exactly with the reported completion of the system restoration.
This synchronization is atypical,
as public claims are usually timed to maximize pressure before or during recovery,
not after its completion.

\textit{Supporting analysis (\textit{Ransomware Forensics Expert}).}
The forensics expert noted that such temporal alignment could occur in isolation,
but when combined with the absence of leaks and missing forensic artifacts,
it suggests a coordinated narrative rather than an adversarial interaction between attacker and victim.

\textit{Consensus implication.}
Experts converged on the interpretation that the timeline appeared constructed to look plausible,
yet lacked the asymmetric pressure dynamics normally observed in real ransomware incidents.

\paragraph{Over-Precise Impact Metrics.}

\textit{Initial observation (\textit{Ransomware Forensics Expert}).}
The incident report specified unusually precise operational impact metrics,
including a 17-hour production disruption and a 26-hour ERP recovery time.

\textit{Supporting analysis (\textit{Cybersecurity Incident Analyst}).}
In real-world incident reporting, particularly during ongoing response,
such figures are typically approximated or revised over time.
The exactness of these metrics,
in the absence of independent third-party validation,
was interpreted as consistent with post-hoc narrative construction.

\textit{Consensus implication.}
Metric precision was therefore treated not as evidence of transparency,
but as a credibility-engineering artifact,
reinforcing the broader hypothesis of fabrication.

\paragraph{Ambiguous Signals.}
Certain elements, such as the use of a custom Python script for data exfiltration,
generated discussion but did not play a decisive role in the final verdict.
While atypical for BlackSuit,
experts acknowledged that tooling adaptation is plausible in industrial IoT environments. As a result, this signal was explicitly classified as ambiguous
and excluded from the core decision criteria.

\paragraph{Summary.}
Across multiple expert models,
the discussion converged on a small set of structurally inconsistent signals,
including operational security violations,
missing forensic corroboration,
temporal coincidence between restoration and attacker claims,
and unusually precise impact metrics.
The final consensus did not emerge from any single anomaly,
but from reinforcement of weak signals across complementary expert perspectives,
resulting in a stable classification of the incident as fabricated or misattributed.

%%%%%%%%%%%%%%%%%%%%%
%%%%% use case:end
%%%%%%%%%%%%%%%%%%%%%

\end{document}